\RequirePackage[OT1]{fontenc}
\documentclass{article}
\usepackage{amsmath}
\usepackage{amssymb}
\usepackage{amsfonts}
\usepackage{mathtools}
\usepackage{amsthm}
\usepackage{float}
\usepackage{xurl}
\usepackage[dvipsnames]{xcolor}
\usepackage{tabularray}
\usepackage{multirow}
\usepackage{wrapfig}
\usepackage[hidelinks,bookmarks=true]{hyperref}
\hypersetup{
    bookmarksopen=true, bookmarksnumbered=true
}
\usepackage[capitalize,noabbrev]{cleveref}
\usepackage{graphicx}
\makeatletter
\define@key{Gin}{alt}{}
\define@key{Gin}{artifact}{}
\makeatother
\usepackage{subcaption}
\usepackage{booktabs}
\usepackage{algorithm}
\usepackage{algorithmic}
\usepackage{fancyhdr}
\usepackage{longtable}
\usepackage{graphicx}
\usepackage{tikz}
\usepackage{iclr2026_conference,times}
\usepackage{annotate-equations}
\usepackage{enumitem}
\usepackage{bookmark}

\input{tikzit.sty}

\newtheorem{rthm}{Theorem}

\newenvironment{reptheorem}[1]
  {\renewcommand\therthm{\ref*{#1}}\rthm}
  {\endrthm}
  
\DeclarePairedDelimiter\set\{\}

\newtheorem{theorem}{Theorem}[section]

\newtheorem{lemma}[theorem]{Lemma}

\theoremstyle{definition}
\newtheorem{definition}{Definition}[section]

\theoremstyle{remark}

\title{Characterizing the Discrete Geometry of ReLU Networks}
\author{
  Blake B.~Gaines \\
  Department of Computer Science\\
  University of Connecticut\\
  \texttt{blake.gaines@uconn.edu} \\
  \And
  Jinbo Bi\thanks{Corresponding Author} \\
  Department of Computer Science \\
  University of Connecticut \\
  \texttt{jinbo.bi@uconn.edu} \\
}

\iclrfinalcopy
\begin{document}
\maketitle
\begin{abstract}

It is well established that ReLU networks define continuous piecewise-linear functions, and that their linear regions are polyhedra in the input space. These regions form a complex that fully partitions the input space. The way these regions fit together is fundamental to the behavior of the network, as nonlinearities occur only at the boundaries where these regions connect. However, relatively little is known about the geometry of these complexes beyond bounds on the total number of regions, and calculating the complex exactly is intractable for most networks. In this work, we prove new theoretical results about these complexes that hold for all fully-connected ReLU networks, specifically about their connectivity graphs in which nodes correspond to regions and edges exist between each pair of regions connected by a face. We find that the average degree of this graph is upper bounded by twice the input dimension regardless of the width and depth of the network, and that the diameter of this graph has an upper bound that does not depend on input dimension, despite the number of regions increasing exponentially with input dimension. We corroborate our findings through experiments with networks trained on both synthetic and real-world data, which provide additional insight into the geometry of ReLU networks. Code to reproduce our results can be found at \url{https://github.com/bl-ake/ICLR-2026}.

\end{abstract}

\section{Introduction}
Fully-connected networks with Rectified Linear Unit (ReLU) activations have become ubiquitous in recent years. These networks realize piecewise linear functions, with each ``piece'' defined on a polyhedron in the input space as illustrated in Fig.~\ref{fig:3d-plot}~\citep{grigsby_transversality_2022,grigsby_local_2024}. These functions can be incredibly complex and have universal approximation power if sufficiently wide or deep~\citep{huang_relu_2020}. 
Even work on one of the more basic questions about these networks---bounding the maximum number of regions defined by a given architecture---already spans over a decade~\citep{montufar_number_2014, goujon_number_2024}. Since that work began, it has generated significant interest in other topics related to how ReLU networks divide the input space. The geometry of ReLU networks is defined both by the number of polyhedral regions and the way they connect to each other to form a polyhedral complex (Fig.~\ref{fig:2d-plot}). Existing work that focuses on the number of regions does not describe their arrangement~\citep{pascanu_number_2014, montufar_sharp_2022,goujon_number_2024}. These works show that investigating specific properties of the complex (e.g., defining the boundaries of individual regions, calculating paths from one region to another) can be intractable because the number of regions grows exponentially with respect to the input dimension and network size \citep{hanin_deep_2019}. Here we attempt to fill the gap in the middle and establish general properties about the arrangement of these regions that hold regardless of network size and the actual values of network weights. 

This work is motivated by the wide variety of research areas that leverage the polyhedral geometry of neural networks. These include explainability~\citep{zhu_polyhedron_2023, hu_understanding_2021}, expressivity \citep{raghu_expressive_2017}, error prediction~\citep{ji_test_2022,lehmann_calculating_2022,daroczy_tangent_2020}, robustness~\citep{ter_beek_star-based_2019, yang_reachability_2020, jamil_dual_2022}, and even toxicity in large language models~\citep{balestriero_characterizing_2024}. It also allows the networks to be encoded as mixed-integer programs for verification~\citep{botoeva_efficient_2020, cheng_maximum_2017, bunel_unified_2018} and inverse design~\citep{ansari_mixed_2022}. The relationship between polyhedral regions and network activation states has also been applied to data compression~\citep{he_neural_2021}, representation and clustering~\citep{craighero_investigating_2020}, and open set detection~\citep{jamil_hamming_2023}. A more detailed review of related work can be found in Appendix \ref{sec:detailed-related-work}.

Our analysis builds on the topological perspective of ReLU network geometry, and follows the same assumptions as \cite{masden_algorithmic_2025}. Our results are best expressed in terms of the complex's \textbf{connectivity graph} (Fig.~\ref{fig:2d-plot-dual}), where nodes correspond to polyhedral regions and edges exist between regions that have a shared face \citep{liu_relu_2023}. The degree of a node in the graph corresponds to the number of faces of its region, each of which connects to a unique neighboring region. Fig.~\ref{fig:2d-plot-degrees} plots a histogram of the node degrees and shows the average degree of the connectivity graph in Fig.~\ref{fig:2d-plot-dual}. The diameter of this graph (the longest shortest-path distance between any pair of nodes in the graph) corresponds to the number of faces one has to cross to reach a polyhedron from any other polyhedron. Our work provides fundamental links between network architecture and connectivity graph topology. Recent work has analyzed the connectivity graph to characterize network properties such as VC-Dimension and the distribution of region volumes \citep{dhayalkar_geometry_2025}. Notably, the work in \citep{fan_deep_2024} includes several results bounding the average number of faces of the polyhedra from above, but with crucial assumptions for the ReLU networks (e.g., no bias terms or low rank in the first hidden layer's weight matrix) and their bounds are asymptotic with respect to the size of the network. In this work, we will bound the same quantity for networks regardless of architecture, both from above and (for networks with at least $d$ neurons in any configuration) below. We will then further characterize the complex by deriving similar bounds for the intersections of the regions, and bound the connectivity graph diameter from above and below, which provides additional insight into how the activation regions fit together. Our main contributions are as follows:

\begin{figure}
\captionsetup[sub]{font=scriptsize}
\centering
\begin{subfigure}[b]{0.25\textwidth}
    \centering
    \raisebox{0.1cm}{\includegraphics[width=\linewidth,alt={3D surface plot of a ReLU network output over a 2D input domain}]{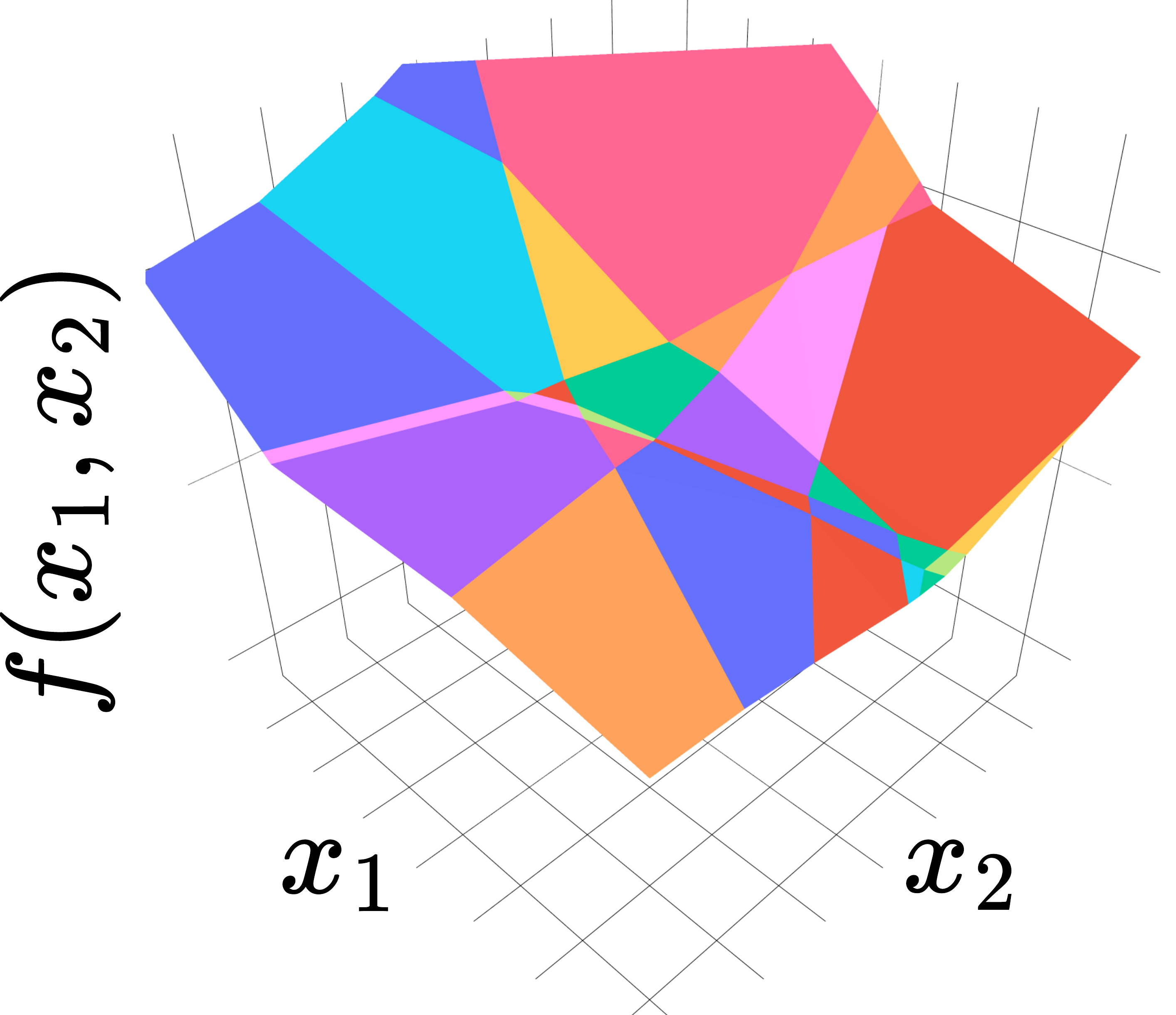}}
    \caption{ReLU Network Output}
    \label{fig:3d-plot}
\end{subfigure}
\hfill
\begin{subfigure}{0.2\textwidth}
    \centering
    \includegraphics[width=\linewidth,alt={2D polyhedral complex with regions labeled A through E and bent hyperplane boundaries}]{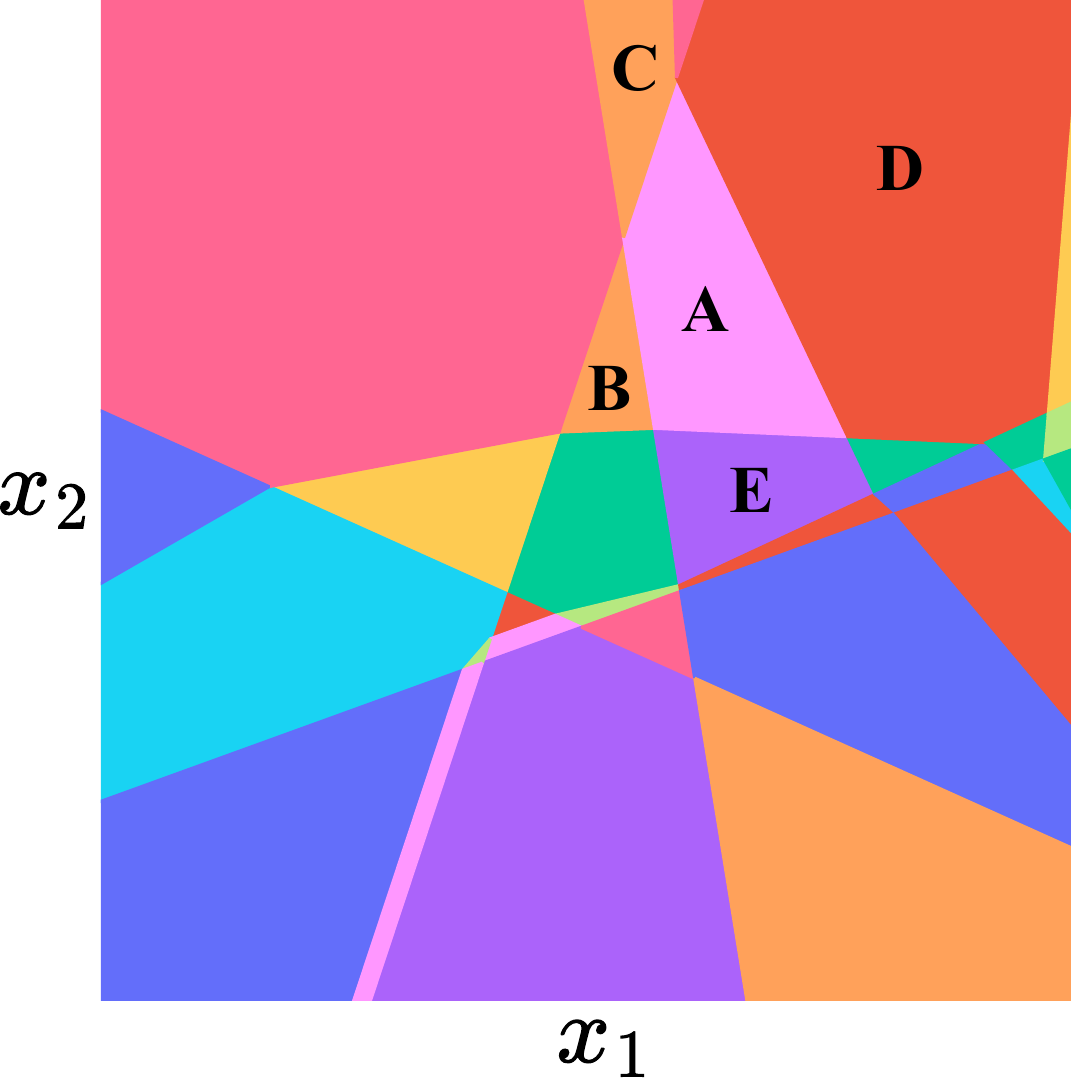}
    \caption{Polyhedral Complex}
    \label{fig:2d-plot}
\end{subfigure}
\hfill
\begin{subfigure}{0.2\textwidth}
    \centering
    \raisebox{0.3cm}{\includegraphics[width=\linewidth,alt={Connectivity graph with nodes for regions A through E and edges between neighboring regions}]{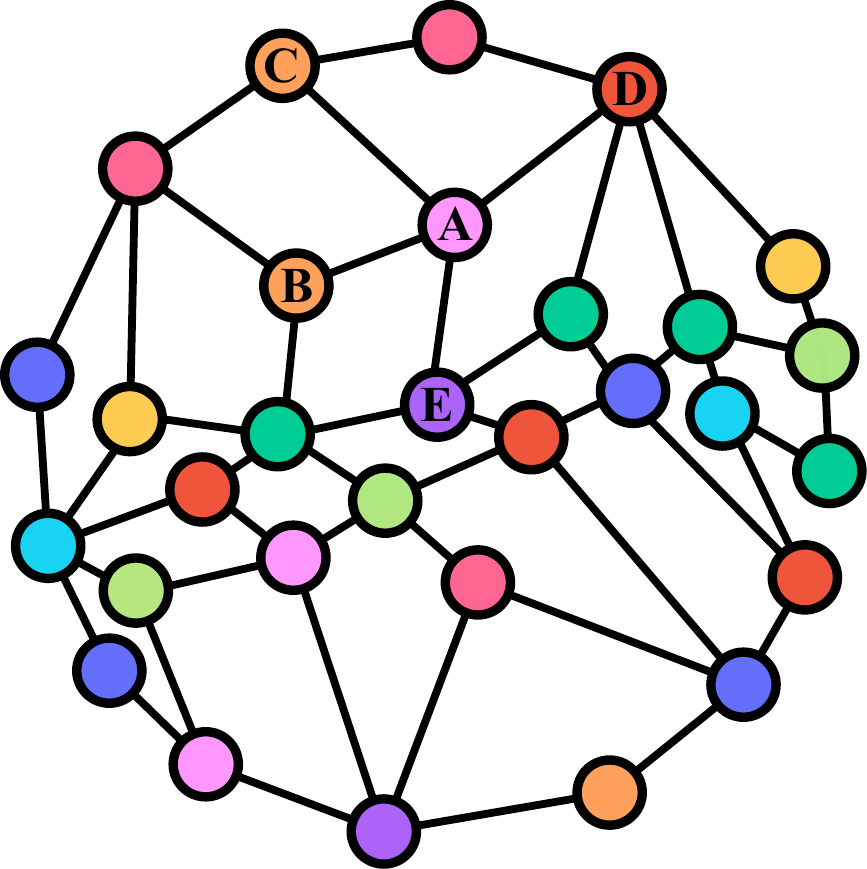}}
    \caption{Connectivity Graph}
    \label{fig:2d-plot-dual}
\end{subfigure}
\hfill
\begin{subfigure}{0.25 \textwidth}
    \centering
    \includegraphics[width=\linewidth,alt={Histogram of node degrees in the connectivity graph, with average degree marked}]{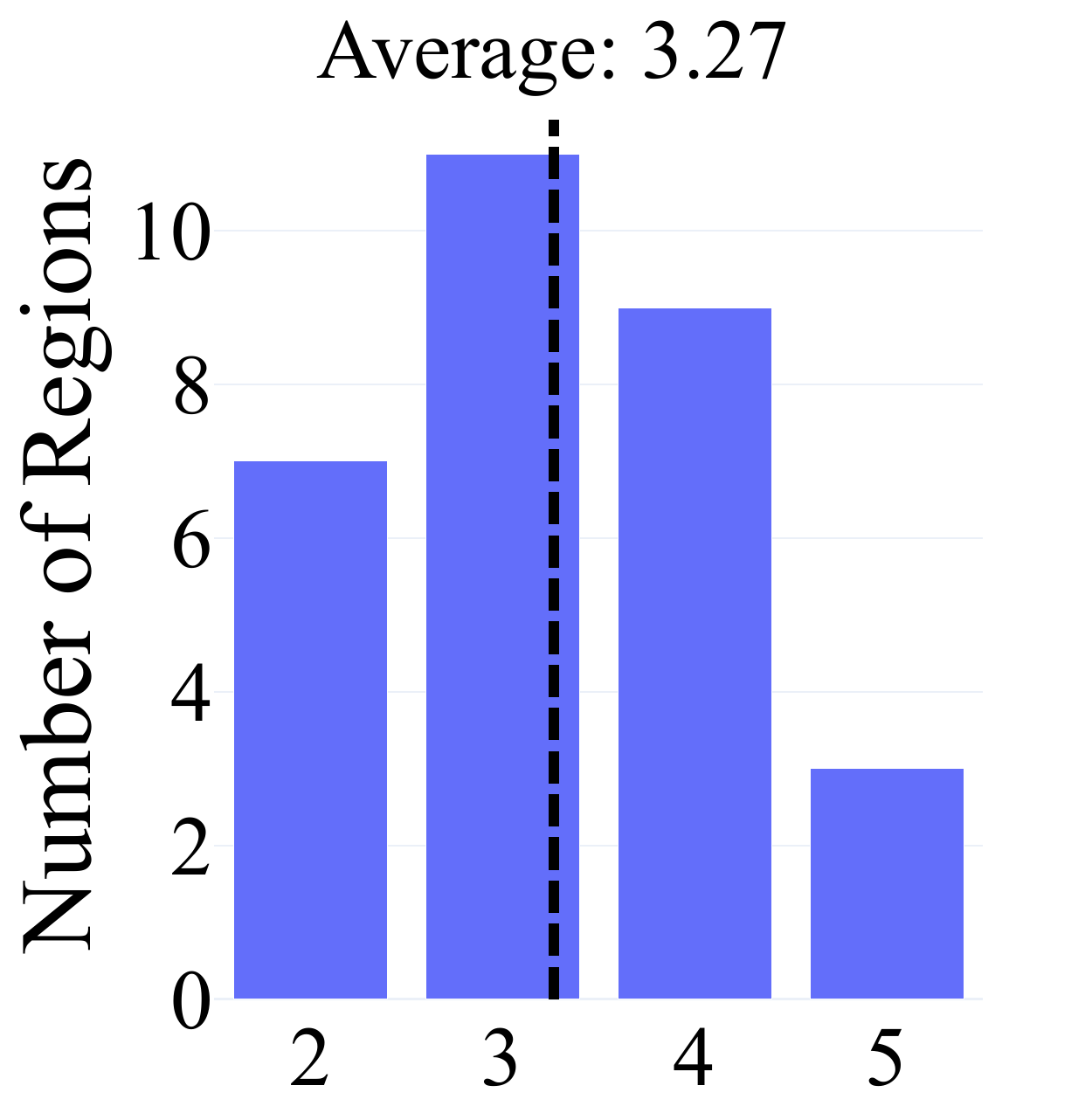}
    \caption{Degree Distribution}
    \label{fig:2d-plot-degrees}
\end{subfigure}
    \caption{(a) An example ReLU network with a 2-dimensional input. (b) The corresponding polyhedral complex where region A has neighbors B, C, D, and E. (c) The connectivity graph where nodes represent regions and edges link neighboring regions (so region A has degree 4). (d) A histogram of the number of neighbors for each region, or equivalently the degrees of the connectivity graph.}
\label{fig:intro-fig}
\end{figure}
\vspace{-2ex}
\begin{table}[ht]
\vspace{0.1cm}
\begin{tabular}{p{0.47\textwidth}p{0.47\textwidth}}
\textbf{Theoretical Properties}
& 
\textbf{Empirical Observations}
\\
A fully-connected ReLU network with input dimension $d$, maximum layer width $m$, and depth $\ell$ creates a polyhedral complex $\mathcal{C}$ in $\mathbb{R}^d$ that, with probability 1 (almost everywhere) over all possible network weights, satisfies: 
 \begin{enumerate}[wide, labelwidth=!, labelindent=0pt, leftmargin=1.5em]
    \item
    The average degree of the connectivity graph is at most $2d$.
    \item This average approaches the upper bound as the size of the network increases.
    \item The diameter of the connectivity graph is bounded above by $(m+1)^{\ell}$ regardless of the value of $d$.
\end{enumerate}
 & 
 Experimental results with networks of different sizes trained on synthetic data and three benchmark datasets show that:
 \begin{enumerate}[wide, labelwidth=!, labelindent=0pt, leftmargin=1.5em]
    \item The average degree of the connectivity graph quickly approaches the upper bound as the size of the network increases.
    \item The number of neighbors for every polyhedral region follows a unimodal distribution that skews right and peaks just below $2d$.
    \item Regions that contain data points tend to be more connected on average compared to those that do not.
\end{enumerate}
\end{tabular}
\end{table}

\section{Preliminaries}
\label{sec:preliminaries}
\textbf{Sign Sequences:} Let $f\colon \mathbb{R}^d \to \mathbb{R}^{\text{out}}$ be a fully-connected feedforward ReLU network in which each hidden neuron $i$ performs an affine transformation $w_i^Tx^{p_i} +b_i$ on its input $x^{p_i}$ and then applies the ReLU activation, where $x^{p_i}$ is the output of the previous layer, and $w_i$ and $b_i$ are the trainable parameters of this neuron. Let $n$ denote the total number of hidden neurons in $f$.
We start by introducing how, with probability 1 over possible network weights, the activation states of the neurons in a network can be uniquely represented by sign sequences \citep{masden_algorithmic_2025}. For a point $x$ in the input space, when passing through the network and reaching the $i$th neuron, if $w_i^Tx^{p_i} +b_i>0$, then the sign of this neuron $\mathcal{S}(x)_i=1$ and the ReLU function outputs the value of $w_i^Tx^{p_i} +b_i$; if the affine function is equal to or less than $0$, then $\mathcal{S}(x)_i=0$ or $-1$ respectively, and the ReLU outputs $0$. Thus, $x$ receives a sign sequence $\mathcal{S}(x) \in \{-1, 0, 1\}^{n}$, a vector of length $n$ indicating the sign of each neuron's output before applying the activation function.
\textbf{Bent Hyperplanes:} If $i$ is in the first hidden layer, then $x^{p_i}$ is the network's input, and all inputs $x$ with $\mathcal{S}(x)_i=0$ lie on the hyperplane $\{x \in \mathbb{R}^d:w_i^T x+b_i=0\}$. When $i$ is a neuron from a later layer, the set of inputs with $w_i^Tx^{p_i}+b_i=0$ (i.e., $\mathcal{S}(x)_i=0$) is more complex, because $x^{p_i}$ will have been computed by the continuous piecewise-linear function represented by the previous layers, and the input points for which $\mathcal{S}(x)_i=0$ form a level set of this function. We call this set the neuron's Bent Hyperplane (BH) following convention~\citep{hanin_deep_2019,masden_algorithmic_2025}, although unlike a hyperplane, a BH can intersect itself and even be disconnected.

\begin{wrapfigure}{L}{0.33\textwidth}
\centering
    \includegraphics[width=0.33\textwidth,alt={Polyhedral complex with bent hyperplanes 1--4, arrows showing the orientation of each one; orange 0-cell point, 1-cell line segment, and 2-cell area enclosed by bent hyperplanes are highlighted}]{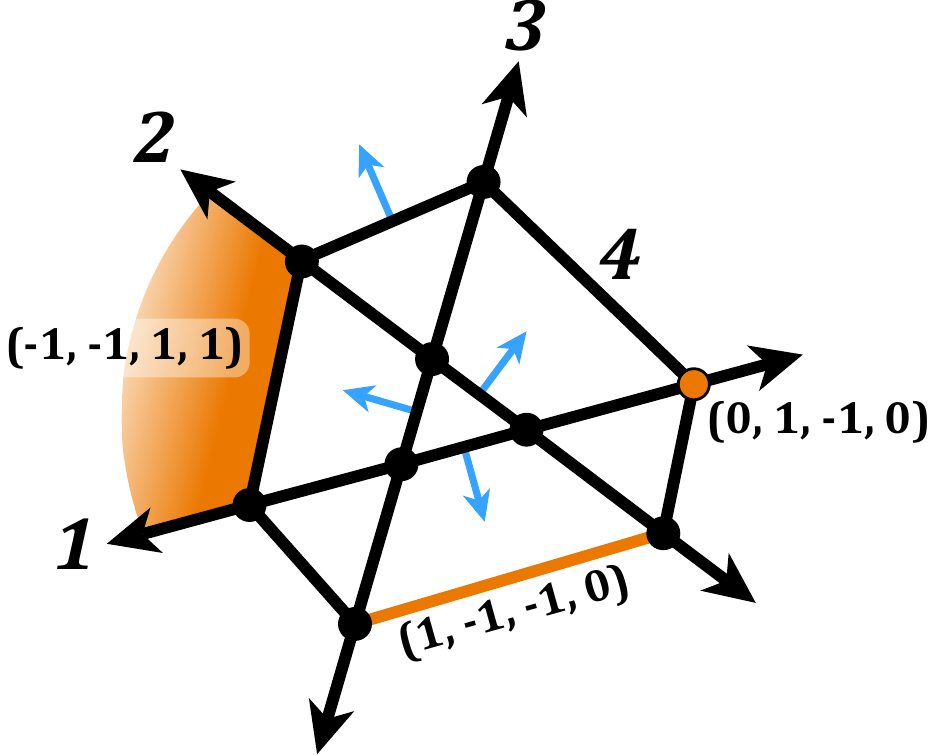}
\caption{A polyhedral complex.}
\label{fig:D-Sign-Sequences}
\end{wrapfigure}
Fig.~\ref{fig:D-Sign-Sequences} shows an example of BHs created by a network with an input of $d=2$ and 2 hidden ReLU layers where 3 neurons are in layer 1 (corresponding to hyperplanes, which we also call BHs 1–3 for notational convenience) and 1 neuron in layer 2 (corresponding to BH 4). Each blue arrow in Fig.~\ref{fig:D-Sign-Sequences} indicates the orientation of $\text{BH}_i$, pointing towards the area where the output of the neuron $i$ is positive (i.e., $\mathcal{S}(x)_i=1$). BHs 1--3 intersect to form 7 regions. BH 4 intersects 6 of them, and in each area, BH 4 is a segment of a hyperplane. Whenever BH 4 crosses a BH $j$ in the first layer, one of the entries of its input $x^{p_i}$ changes its activation state (switching between $w_j^T x+b_j$ and $0$). This causes the BH to bend. In this example, it eventually intersects itself.

The BH of neuron $i$ (where $\mathcal{S}(x)_i=0$) forms a boundary that divides the input space into two parts where $\mathcal{S}(x)_i=1$ and $\mathcal{S}(x)_i=-1$. Thus, the sign sequence of a region can also be interpreted as a list describing which ``side'' of each BH it lies on.
All of the BHs of neurons in previous layers partition the input space into disjoint regions, in each of which, $w_i^Tx^{p_i}+b_i$ collapses into an affine function in the input space (say $\Phi_i^T x +\beta_i$ for some $\Phi_i \in \mathbb{R}^d$ and $\beta_i \in \mathbb{R}$), so the segment of the BH within an area lies on a hyperplane in the input space $\Phi_i^T x +\beta_i=0$. This hyperplane subdivides the regions it intersects into two smaller regions that differ in the activation state of neuron $i$. For instance, Fig.~\ref{fig:D-Sign-Sequences} shows that BH 4 divides the 6 regions formed by intersections of BHs 1--3 into 12 smaller regions. 

From this point forward, all statements about ReLU networks will make the same assumptions as in \citep{masden_algorithmic_2025} to avoid degenerate weight assignments. These will ensure that at most $d$ BHs intersect at a point, and that the sections of BHs are never perfectly parallel to each other. As proven in that work, these assumptions will hold on all but a measure-zero set of parameter assignments for a given architecture. Appendix \ref{appendix:lemma-proof} provides rigorous definitions of these assumptions.

\textbf{Polyhedral Complex:} A polyhedral complex is a set of polyhedra (cells) that is closed under intersection (e.g., the complex in Fig.~\ref{fig:D-Sign-Sequences} includes not only the polygons as elements but also the line segments where pairs of polyhedra intersect and the vertices where four polygons intersect), and taking faces (e.g., the complex in Fig.~\ref{fig:D-Sign-Sequences} includes the line segments enclosing each polyhedron and the vertices enclosing each line segment).
A $k$-cell is an element of a polyhedral complex with affine span $k \leq d$, that is, a polyhedral set whose elements span a $k$-dimensional affine subspace of $\mathbb{R}^d$. BHs of a neural network form the boundaries of the network's $d$-cells (the maximal regions in which the network's mapping is affine),
which intersect to generate the polyhedral complex $\mathcal{C}$ of the network \citep{grigsby_transversality_2022}.
Within each cell of $\mathcal{C}$, the network's behavior is affine. When $k < d$, the $k$-cells of this complex are each contained by the intersection of $d-k$ BHs. For the complex in Fig.~\ref{fig:D-Sign-Sequences}, the orange $2$-cell is not contained in any BHs because $d-k=0$, a $1$-cell is contained by a BH (e.g., the orange line segment is part of BH 4) and a $0$-cell is contained in the intersection of two BHs (e.g., the highlighted vertex is the intersection of BHs 1 and 4). Accordingly, the BH of neuron $i$ can be considered as the union of the $(d-1)$-cells with a single $0$ in the $i$th position of their sign sequences. For example, in Fig.~\ref{fig:D-Sign-Sequences}, BH 4 is formed by the ring of six line segments (1-cells) with a 0 in position 4 of their sign sequences. We define the ``faces'' of a $k$-cell as the $(k-1)$-cells it contains (these are often called ``facets'' in the relevant literature). In the connectivity graph, the $k$-cells of $\mathcal{C}$ are represented by $(d-k)$-hypercube subgraphs, and the collection of edges corresponding to the $1$-cells of each BH form a cut.

An important property of a network's canonical polyhedral complex is that if it is restricted to the cells lying in or on one side of a single neuron's BH (or equivalently, an element of the sign sequences is fixed), the resulting substructure is still a polyhedral complex. Although it no longer corresponds to a ReLU network, we still call it a ReLU complex because it is still a polyhedral complex with cells defined by BHs, so several of the results in the next section will still apply. In the following discussion, the dimension of a ReLU (sub)complex will refer to the maximum dimension of its cells instead of the dimension of the ambient space in which it is embedded (e.g. a BH of a 2-dimensional ReLU complex is a 1-dimensional ReLU complex).
\textbf{Sign Sequence Complex:} The polyhedral complex can be described in terms of sign sequences, because in the interior of any $k$-cell, the sign sequence of every point is exactly the same. Furthermore, the sign sequences of a $k$-cell contain exactly $d-k$ zero elements corresponding to the $d-k$ BHs whose intersection contains the cell as a subset and $k$ nonzero elements corresponding to BHs that either contain a single face of the cell or do not intersect the cell at all. For example, in Fig.~\ref{fig:D-Sign-Sequences}, the orange $1$-cell with sign sequence $(1, -1, -1, 0)$ is contained by BH 4 so it has $0$ in this position, its faces are the vertices on each end contained by BHs 2 and 3 respectively, and BH 1 does not intersect the cell at all, so all three are nonzero in the sign sequence. With the aforementioned basic assumptions on the network weights, the work in \citep{masden_algorithmic_2025} proves that every cell of a ReLU complex has a unique sign sequence, that is, the mapping from cells to sign sequences $\mathcal{S} \colon \mathcal{C} \to \{-1, 0, 1\}^{n}$ is well-defined and injective. The connectivity graph can be equivalently defined in terms of the sign sequence complex, with nodes for the sign sequences of $d$-cells (the ones with no zeros) and edges between sequences that differ by one element. 

\section{Theoretical Results on Network Geometry}

We examine how the cells in a ReLU complex are connected with each other, how many neighbors a cell can have on average, and whether or not the number of neighbors varies with respect to the depth and width of the network. Proof outlines are given here while detailed proofs are in Appendix \ref{appendix:lemma-proof}. The cells in a ReLU complex and the number of their faces may depend on the specific network architecture and values of the network weights. However, we prove that the average number of neighbors for any polyhedral region can be upper bounded by $2d$ regardless of the depth and width of the network. In a ReLU complex, counting the faces of cells is the same as counting their neighbors. For a $d$-cell, each face is contained in a unique BH, and across each face is a unique neighboring polyhedron that has the same sign sequence except that the sign corresponding to the crossed BH is flipped. More generally, we prove the following theorem for any $k$-cells of a ReLU complex. 

\begin{theorem}
    \label{thm:faces_bound}
    For a ReLU complex in $d$-dimensional input space, the average number of faces of a $k$-cell is at most $2k$ for $k=1,2,\dots,d$.
\end{theorem}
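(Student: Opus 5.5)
The plan is to count cell--face incidences twice and then reduce everything to the top-dimensional case. Fix $k$ and let $f_j$ be the number of $j$-cells of the complex $\mathcal{C}$. A $k$-cell $\sigma$ has exactly $d-k$ zeros in its sign sequence, so it lies in the intersection of $d-k$ BHs. Fixing those zero coordinates, and restricting to the cells with zeros at least there, gives a sub-structure. By the remark in the preliminaries that restrictions to a BH are again ReLU complexes, I will treat this sub-structure as a ReLU complex of dimension $k$.

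The faces of $\sigma$ are exactly its faces as a top-dimensional cell of that $k$-dimensional complex. The sign-sequence complexes for different choices of the $d-k$ zero positions partition the set of $k$-cells. So the average over all $k$-cells is a weighted average of the per-subcomplex averages, and it suffices to prove the top case: for a $k$-dimensional ReLU complex, the average number of faces of a $k$-cell is at most $2k$. By injectivity of $\mathcal{S}$ (Masden), I will work entirely with sign sequences, relabeling so that $k=d$.

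For the top case I double count pairs ($d$-cell, facet). Each $(d-1)$-cell has exactly one zero, and flipping it to $\pm1$ gives exactly two $d$-cells. So the total number of faces is $2f_{d-1}$, and the claim becomes $f_{d-1}\le d\, f_d$. To prove this I will use a generic linear functional $c$ (a sweep direction), chosen generic with respect to all the finitely many affine pieces. Under the genericity assumptions every vertex lies on exactly $d$ BHs, so locally it looks like a simple arrangement: it has $2d$ incident edges and $2^d$ incident $d$-cells.

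I will charge each $(d-1)$-cell $F$ to a $d$-cell:
\begin{itemize}
\item if $F$ is bounded below in direction $c$, charge it through its unique lowest vertex $v$ to the unique $d$-cell having $v$ as its lowest point;
\item if $F$ is unbounded below, charge it to the $d$-cell on its lower side at infinity.
\end{itemize}
The local picture at a simple vertex then shows that each $d$-cell receives at most $d$ charges. A $d$-cell whose lowest vertex is $v$ lies in the cone of the $d$ edges at $v$ pointing upward, and the facets charged through $v$ are among the $d$ facets of that cone. The same count is done inductively, via the $(d-1)$-dimensional ReLU complexes on the BHs, for cells unbounded below. An alternative I would keep in reserve is an Euler-characteristic relation, since $\mathcal{C}$ covers the contractible space $\mathbb{R}^d$, combined with the simple-vertex incidence counts $f_0\cdot 2^d$, $f_0\cdot 2d$, etc. The hope is to extract $f_{d-1}\le d f_d$ from an alternating sum.

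The main obstacle is the unbounded cells and the non-hyperplane global structure. BHs bend and may self-intersect, so I cannot quote hyperplane-arrangement identities directly. Moreover, cells unbounded below have no lowest vertex, and the charging must still give each $d$-cell at most $d$ facets. My first attempt is to handle these by induction on $d$: intersect with a very low level set $\{c^Tx=t\}$. For $t\to-\infty$ that level set is cut by $\mathcal{C}$ into a $(d-1)$-dimensional ReLU-type complex whose cells correspond bijectively to the cells unbounded below. Applying the inductive bound there should absorb the unbounded charges. Note that the bound $2k$ is only an average inequality, not sharp cell by cell, so it is the charging bound of $d$ per cell that must be verified carefully.
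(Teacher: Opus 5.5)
Your proposal has a genuine gap at the charging step. It cannot be repaired, because the statement is false already for $d=3$. Your reduction to the top-dimensional case fails on the same point.

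\textbf{The charging step.} You claim each vertex $v$ is the lowest point of exactly one $d$-cell, and that the facets charged through $v$ are facets of that cell. This holds when the $d$ BHs through $v$ are flat near $v$, as in a hyperplane arrangement. It is not enough that the sign-sequence combinatorics at $v$ match a simple arrangement, because the sweep uses the actual edge directions.

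In a deep network a BH bends where it crosses a BH it depends on (BH~4 at BH~1 in Fig.~\ref{fig:D-Sign-Sequences}). In $\mathbb{R}^2$ the four edges at such a $v$ are two opposite rays along the line and two non-opposite rays along the bent BH. So for an open set of directions $c$ only one edge leaves $v$ upward. Then $v$ is the lowest vertex of that edge but of no $2$-cell, and the charge has no recipient; for $-c$, $v$ is the lowest point of two $2$-cells.

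\textbf{The Euler fallback.} This is the right kind of global argument, and it settles $d=2$: $4f_0\le 2f_1$ and $f_0-f_1+f_2=1$ give $f_1\le 2f_2-2$. For $d=3$, however, it only yields the exact identity
\[
3f_3-f_2 \;=\; 2\sum_{h}\bigl(N_0(h)-N_1(h)+N_2(h)\bigr)+(f_1-3f_0)+3,
\]
where $h$ runs over BHs and $f_1-3f_0\ge 0$ counts unbounded components of pairwise BH intersections. The sum is the total Euler characteristic of the BHs, and nothing controls it: a BH can be a compact surface of large genus.

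\textbf{The reduction, and the paper's proof.} This also breaks your step ``relabel so that $k=d$''. It breaks the paper's proof in the same way. The paper applies the inductive hypothesis $(d-1)N_{d-1}(h_i)\ge N_{d-2}(h_i)$ to $h_i$, and applies Theorem~\ref{thm:d-face-avg-facets} to BH intersections. Both treat these sets as if they were network complexes on $\mathbb{R}^{d-1}$, respectively $\mathbb{R}^k$. But take a closed BH $S\subset\mathbb{R}^3$ of genus $G$. Four-valence and Euler give $N_1(S)=2N_0(S)$ and $N_2(S)=N_0(S)+2-2G$. So the $2$-cells of $S$ average more than $4$ faces once $G\ge 2$.

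\textbf{A counterexample to the statement.} Build the network as follows, then perturb all weights generically.
\begin{itemize}
\item Let $\phi$ be the triangle wave on $[0,2m]$, equal to $-1$ at even and $+1$ at odd integers, continued with steep negative slope outside. It can be built from $6m+O(1)$ first-layer neurons across the three coordinates.
\item Add one second-layer neuron with pre-activation $g=\phi(x_1)+\phi(x_2)+\phi(x_3)-c$, where $0<c<1$.
\end{itemize}
The first layer is a simple arrangement with $N_3\approx 36m^3$ and $N_2\approx N_1\approx 108m^3$. It contains $(2m)^3$ slightly sheared unit cubes. On each cube $g$ is affine with corner values $\pm1\pm1\pm1-c$. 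So $h=\{g=0\}$, a closed surface of genus $2m^3-3m^2+1$, cuts every cube along a hexagon into two $7$-faced cells.

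Lemma~\ref{lemma:k-face-ind} then gives $N_3\approx 44m^3$, $N_2\approx 140m^3$ and $N_1\approx 144m^3$. Hence the average degree tends to $70/11>6$, and the $2$-cells average $4N_1/N_2>4$ faces. Here $2N_2(h)\approx 16m^3<N_1(h)\approx 24m^3$, which is exactly where the paper's induction step fails.

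A correct version of the statement needs an extra hypothesis, for example a single hidden layer (where your sweep argument is valid) or $d\le 2$.
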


An earlier work proves this theorem for hyperplane arrangements~\citep{fukuda_bounding_1991}, which only applies to the polyhedral complexes of single-layer networks, but the proof does not generalize to deep ReLU network complexes formed by BH arrangements. We employ the 1-1 mapping between cells and sign sequences to prove that the theorem also holds true for complexes of deep ReLU networks.

Let $\mathcal{C}$ be a ReLU complex corresponding to a network $f$. Denote the BH of a neuron $i$ by $h_i$, which contains a set of $k$-cells of $k<d$, specifically $h_i=\{c \in \mathcal{C} \colon \mathcal{S}(c)_i = 0\}$. Then, we use $\mathcal{C} - h_i$ to denote the complex that results from removing all cells contained in the BH of neuron $i$ and joining all pairs of cells sharing one of the faces that were removed. Fig.~\ref{fig:D-Coloring-2} illustrates $\mathcal{C}-h_i$ with $\mathcal{C}$ from Fig.~\ref{fig:D-Sign-Sequences} and BH 4 as $h_i$. The connectivity graph of $\mathcal{C} - h_i$ can be obtained by contracting every edge corresponding to a $(d-1)$-cell contained in $h_i$ (i.e., removing each edge and combining the two end nodes into one, keeping their connections to other nodes in the graph). As a result, every two $k$-cells previously split by $h_i$ are now fused into a single new $k$-cell. Any cells that do not intersect $h_i$ remain the same in the new complex $\mathcal{C}-h_i$.
\begin{figure}[t]
    \centering
    \begin{subfigure}{0.27\textwidth}
        \centering
        \includegraphics[width=\linewidth,alt={Polyhedral complex from previous figure with BH 4 removed, showing how regions have fused}]{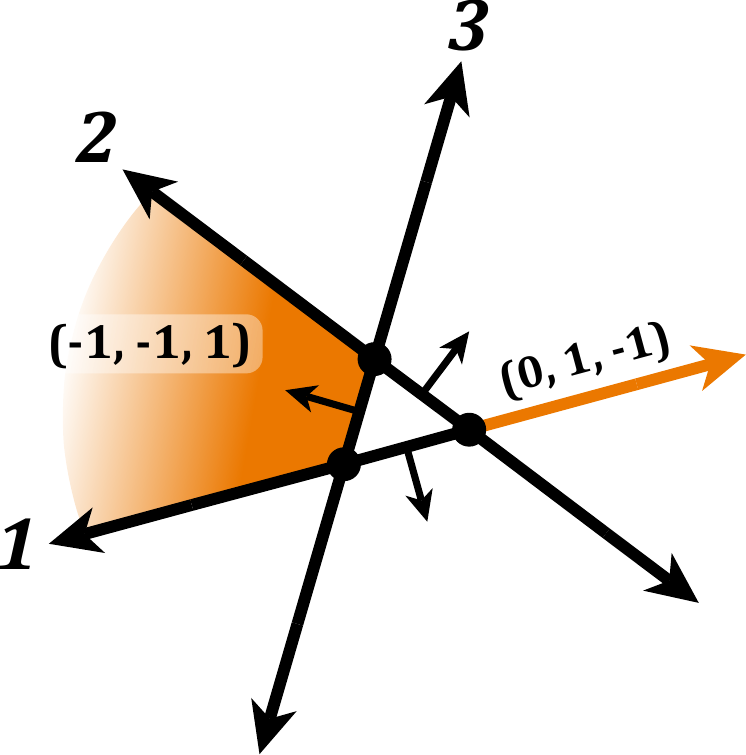}
        \caption{$\mathcal{C}-h_4$}
        \label{fig:D-Coloring-2}
    \end{subfigure}
    \hfill
    \begin{subfigure}{0.33\textwidth}
        \centering
        \includegraphics[width=\linewidth,alt={Cells of the previous complex color-coded based on the Lemma categories}]{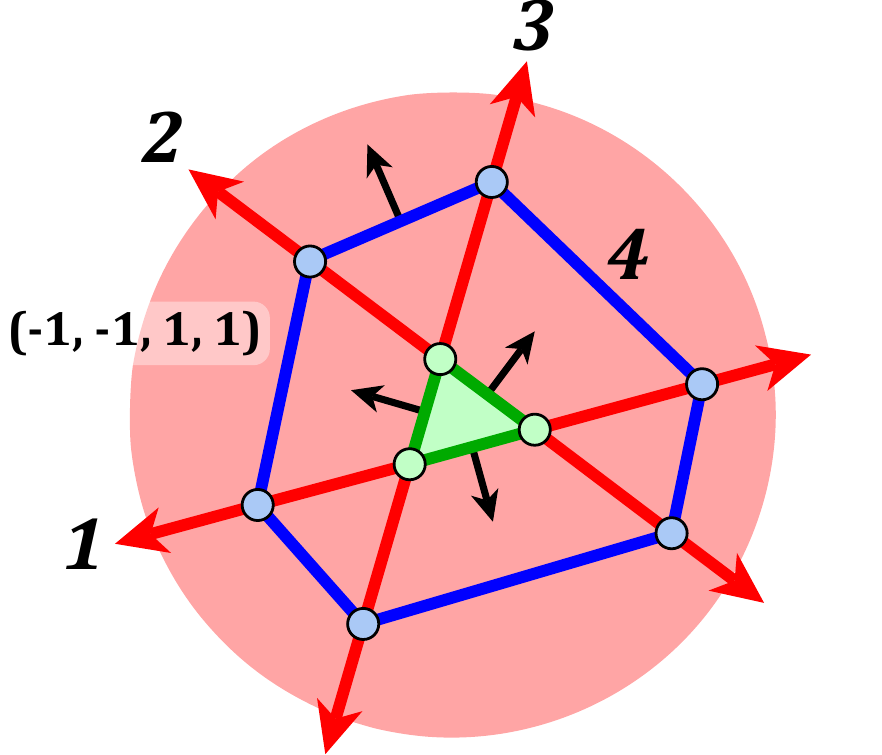}
        \caption{Cells Categorized with $i=4$}
        \label{fig:D-Coloring-2-Cells}
    \end{subfigure}
    \hfill
    \begin{subfigure}{0.28\textwidth}
        \centering
        \includegraphics[width=\linewidth,alt={Connectivity graph for the complex with nodes and edges colored by the categories of their corresponding cells}]{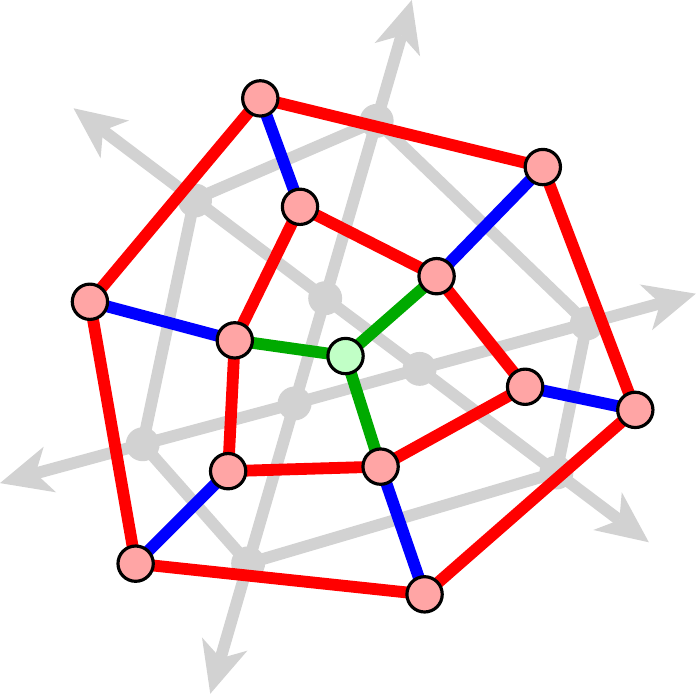}
        \caption{Connectivity Graph}
        \label{fig:D-Coloring-2-Dual}
    \end{subfigure}
    \caption{(a) The complex in Fig.~\ref{fig:D-Sign-Sequences} with BH 4 removed. (b) BH 4 is added and cells are categorized according to Lemma \ref{lemma:face-types} with those in Categories 1, 2, and 3 shown in blue, green, and red, respectively.
    (c) Connectivity graph with nodes and edges colored according to their corresponding cells.}
    \label{fig:D-Colorings}
\end{figure}
\begin{lemma} \label{lemma:face-types}
     For any neuron $i$ in $f$, each $k$-cell $c$ of $\mathcal{C}$ falls into exactly one of the following categories: \\
     \hspace*{5mm} {\color{blue}Category 1}: $c$ is a cell of $h_i$ \\
     \hspace*{5mm} {\color{ForestGreen}Category 2}: $c$ is a cell of $\mathcal{C}-h_i$\\ 
     \hspace*{5mm} {\color{red}Category 3}: $c$ is one of the two $k$-cells formed when a $k$-cell in $\mathcal{C}-h_i$ is separated by $h_i$
\end{lemma}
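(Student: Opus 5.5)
The plan is to classify each $k$-cell $c$ of $\mathcal{C}$ through its sign sequence $\mathcal{S}(c)$, relying on the injectivity of $\mathcal{S}$ from \citep{masden_algorithmic_2025}. Three situations are possible, and they are mutually exclusive by construction:
\begin{itemize}
\item $\mathcal{S}(c)_i = 0$;
\item $\mathcal{S}(c)_i \neq 0$ and $c$ is still a cell of $\mathcal{C}-h_i$;
\item $\mathcal{S}(c)_i \neq 0$ and $c$ is not a cell of $\mathcal{C}-h_i$.
\end{itemize}
So it suffices to show that these three cases coincide with Categories 1, 2 and 3, and that no cell lies in two of them.

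First, if $\mathcal{S}(c)_i=0$, then $c\in h_i$ by the definition $h_i=\{c\in\mathcal{C}\colon \mathcal{S}(c)_i=0\}$, so $c$ is in Category 1. Such a cell is removed in forming $\mathcal{C}-h_i$. It also cannot be one of the two halves of a split cell, because those halves have a nonzero $i$th sign (they lie strictly on one side of $h_i$). Hence $c$ lies in neither Category 2 nor Category 3.

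Next, assume $\mathcal{S}(c)_i=\pm1$. Passing from $\mathcal{C}$ to $\mathcal{C}-h_i$ merges exactly those pairs of $k$-cells that share a $(k-1)$-face lying in $h_i$. By the sign-sequence description, these pairs have sequences that agree everywhere except in position $i$, where they take the values $+1$ and $-1$.

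If no cell with sign sequence $\mathcal{S}(c)$ with entry $i$ flipped exists in $\mathcal{C}$, equivalently $c$ has no face in $h_i$, then $c$ is unchanged in $\mathcal{C}-h_i$. It is therefore a cell of $\mathcal{C}-h_i$, and $c$ is in Category 2.

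Otherwise $c$ has a face $c'$ in $h_i$, the flipped partner $\tilde c$ exists, and the union $c\cup\tilde c\cup c'$ is a single $k$-cell of $\mathcal{C}-h_i$ that $h_i$ separates into $c$ and $\tilde c$. So $c$ is in Category 3.

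These two subcases are disjoint. In Category 3, $c$ is a proper piece of a larger cell of $\mathcal{C}-h_i$, and cells of a polyhedral complex meet only in common faces. In particular, $c$ is not a $k$-cell of $\mathcal{C}-h_i$, since it is strictly contained in the fused $k$-cell of the same dimension.

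\textbf{Main obstacle.} The delicate step is justifying that the fused set $c\cup\tilde c\cup c'$ really is a convex $k$-cell of $\mathcal{C}-h_i$. This requires that $c$ and $\tilde c$ lie in the same affine $k$-flat, namely the intersection of the same $d-k$ bent hyperplanes with the same zero pattern. It also requires that the affine pieces defining the remaining bent hyperplanes agree across $h_i$, so that no bending occurs at $c'$.

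For this I would use that $\mathcal{C}-h_i$ is by definition the complex obtained by deleting $h_i$. On the part of $h_i$ that gets deleted, no other neuron's bent hyperplane bends; if one did, $c'$ would be a face shared by more than two cells or would carry an additional zero in its sign sequence. Both possibilities are excluded by the genericity assumptions, which ensure that at most $d$ bent hyperplanes meet at a point and that no two pieces are parallel. Once this is checked, the uniqueness of sign sequences guarantees that each cell is counted in exactly one category.
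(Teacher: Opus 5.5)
Your proof has a genuine gap: the test you use when $\mathcal{S}(c)_i\neq 0$ is wrong. The overall trichotomy on $\mathcal{S}(c)_i$ matches the paper's. You claim that ``no cell has sign sequence $\mathcal{S}(c)$ with entry $i$ flipped'' is equivalent to ``$c$ has no face in $h_i$''. Only one implication holds: if $c$ has a face in $h_i$, then the flipped partner exists. Your ``otherwise'' branch uses the converse, which fails. In a deep network, two cells whose sign sequences differ only in position $i$ need not be adjacent, because later bent hyperplanes bend at $h_i$.

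A concrete counterexample: take $d=1$, first-layer neurons $i\colon x$ and $q\colon 1-x$, and a second-layer neuron $p$ with pre-activation $3\,\mathrm{ReLU}(x)+\mathrm{ReLU}(1-x)-2$. This equals $-1-x$ for $x<0$, $2x-1$ on $[0,1]$, and $3x-2$ for $x>1$. In the order $(i,q,p)$, the cells $(\tfrac12,1)$ and $(-\infty,-1)$ have sign sequences $(1,1,1)$ and $(-1,1,1)$. But $h_i=\{0\}$ has sign sequence $(0,1,-1)$ and is a face of neither. So $(\tfrac12,1)$ is a Category 2 cell, while your criterion sends it to the Category 3 branch and asserts a face $c'$ that does not exist.

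The paper avoids this by setting entry $i$ to $0$ instead of flipping it. That sequence lies in $\mathcal{S}(\mathcal{C})$ exactly when $c$ has a face $c'\subset h_i$. Near a relative-interior point of such a cell all other signs are preserved, so nearby points on one side of $h_i$ lie in $c$ and those on the other side give the adjacent partner $\tilde c$. With that replacement your case analysis goes through.

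Your resolution of the ``main obstacle'' is also false: genericity does not stop other bent hyperplanes from bending along $h_i$. The bent hyperplane $h_j$ of any neuron in a later layer than $i$ generally bends where it crosses $h_i$, exactly as BH 4 bends in Fig.~\ref{fig:D-Sign-Sequences}. That bend lies at $h_i\cap h_j$, on the relative boundary of $c'$ rather than in its relative interior. So your argument that $c'$ would carry an extra zero or be shared by more than two cells does not exclude it, and these deleted lower-dimensional cells of $h_i$ are precisely where the bending happens.

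Consequently, when $i$ is not in the last layer, $c\cup c'\cup\tilde c$ can be non-convex. For example, if $h_i=\{x_1=0\}$ and a second-layer bent hyperplane is locally $\{x_2=-|x_1|\}$, the two cells above it fuse to the non-convex set $\{x_2>-|x_1|\}$.

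Fortunately the lemma does not need convexity. As in the paper, the cells of $\mathcal{C}-h_i$ are defined combinatorially, by joining the pairs across removed faces. The decomposition is also only applied with $i$ in the last layer. There convexity does hold: $c$ and $\tilde c$ share the activation pattern of all earlier layers, so they lie in one cell of the earlier-layer complex, on which every other neuron is affine.
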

We outline our full proof here. In the sign sequence of $c$, the element $\mathcal{S}(c)_i$ is either zero or nonzero. If it is zero, then $c$ can only be in Category 1. If it is nonzero, we need to check if $c$ intersects $h_i$. This is the case if changing the cell's sign sequence so that $\mathcal{S}(c)_i=0$ yields a new sign sequence that matches a Category 1 cell in $\mathcal{C}$. Then, flipping $\mathcal{S}(c)_i$ yields the sign sequence of a $k$-cell neighbor of $c$, so $c$ is in Category 3. Otherwise, if the new sign sequence
    is not an element of $\mathcal{C}$, then $h_i$ does not contain $c$ or form a face of $c$, so $c$ is in Category 2.

Fig.~\ref{fig:D-Coloring-2-Cells} and Fig.~\ref{fig:D-Coloring-2-Dual} show the categorization of $k$-cells ($k=0,1,2$) in the polyhedral complex from Fig.~\ref{fig:D-Sign-Sequences} when $h_i$ corresponds to BH 4. The blue line segments and vertices in Fig.~\ref{fig:D-Coloring-2-Cells} are the Category 1 $1$- and $0$-cells respectively, and the 4th element of their sign sequences is $0$. The green $2$-, $1$-, and $0$-cells are in $\mathcal{C}-h_4$ because they do not change when $h_i$ is removed. For such a cell, if we zero out the 4th element of their sign sequence (e.g., the $2$-cell in the center with a sequence $(-1,-1,-1,-1)$), the resulting sequence (e.g., $(-1,-1,-1,0)$) does not exist in $\mathcal{S}(\mathcal{C})$. The red $2$- and $1$-cells are in Category 3 because $h_4$ contains one face of each of those cells. As an example, the leftmost $2$-cell has the sign sequence $(-1,-1,1,1)$, and after setting $\mathcal{S}(c)_4=0$, $(-1,-1,1,0)$ corresponds to the line segment of $h_i$ forming its right boundary. Note that the proof of Lemma \ref{lemma:face-types} also works for ReLU subcomplexes formed by fixing one of the sign sequence elements, and these subcomplexes cannot include only half of a pair of Category 3 cells since their sign sequences can only differ at the position of the removed BH $i$. Alternate colorings for the same complex based on different choices of $i$ and restrictions to different subcomplexes are included in Appendix \ref{sec:additional-figures}.  

If $h_i$ is a neuron from the last ReLU layer, the complex $\mathcal{C}-h_i$ corresponds to another ReLU network, but directly removing a neuron from an early layer may not result in a complex corresponding to any ReLU network. As a result, we can break down the problem of counting cells (including faces) in the ReLU complex by iteratively removing neurons starting from the last layer and counting the number of cells that disappear. Let $N_{k}(\mathcal{C})$ be the total number of $k$-cells in $\mathcal{C}$. Based on Lemma \ref{lemma:face-types}, $N_k(\mathcal{C})$ equals the sum of the numbers of $k$-cells in each category. To evaluate $N_{k}(\mathcal{C})$, we first count the $k$-cells in $h_i$ and $\mathcal{C}-h_i$ separately, and then double count those in $\mathcal{C}-h_i$ that are split by $h_i$. To count the split cells, we can just count $(k-1)$-cells in $h_i$, which each divide one of them. For example, compare Fig.~\ref{fig:D-Coloring-2} and Fig.~\ref{fig:D-Coloring-2-Cells}. The six $1$-cells in $h_4$ split the six $2$-cells in Fig.~\ref{fig:D-Coloring-2} to add six more $2$-cells in Fig.~\ref{fig:D-Coloring-2-Cells}. Similarly, the six $0$-cells in $h_i$ also split six $1$-cells.

\begin{lemma} \label{lemma:k-face-ind} For $k=1,2,\dots,d$,
    \begin{equation}
        N_k(\mathcal{C}) = N_{k}(h_i) + N_k(\mathcal{C} - h_i) + N_{k - 1}(h_i).
    \end{equation} \label{eq:1}
\end{lemma}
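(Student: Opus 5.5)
We prove the identity by applying Lemma~\ref{lemma:face-types} to the $k$-cells of $\mathcal{C}$ and counting each of the three categories separately. Since every $k$-cell lies in exactly one category, $N_k(\mathcal{C})$ is the sum of the three category counts. \textbf{Category 1} contributes exactly $N_k(h_i)$, because these are the $k$-cells with $\mathcal{S}(c)_i=0$.

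For the other two categories, write $A$ for the set of $k$-cells of $\mathcal{C}-h_i$ that are separated by $h_i$. \textbf{Category 2} consists of the $k$-cells of $\mathcal{C}-h_i$ not in $A$, and \textbf{Category 3} consists of two cells for each member of $A$. Together they contribute
\[
\bigl(N_k(\mathcal{C}-h_i)-|A|\bigr)+2|A| \;=\; N_k(\mathcal{C}-h_i)+|A|.
\]
It therefore remains to show that $|A|=N_{k-1}(h_i)$. To do this we build a bijection between $A$ and the $(k-1)$-cells of $h_i$.

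\textbf{Forward map.} Take a cell of $A$. It splits into two Category 3 $k$-cells $c^+$ and $c^-$, whose sign sequences differ only in position $i$ ($+1$ versus $-1$). Setting that entry to $0$ gives the sign sequence of a Category 1 cell. This cell is contained in $h_i$, is a common face of $c^\pm$, and so has dimension $k-1$. By injectivity of $\mathcal{S}$ it is uniquely determined, which defines a map from $A$ to the $(k-1)$-cells of $h_i$.

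\textbf{Inverse map.} Take a $(k-1)$-cell $e$ of $h_i$. Then $e$ has $d-k+1$ zeros in its sign sequence, one of them at position $i$. By the genericity assumptions of \citep{masden_algorithmic_2025}, locally near an interior point of $e$ the BHs through $e$ cross transversally. So near $e$ the complex looks like an arrangement of $d-k+1$ hyperplanes in general position, crossed with $e$. Hence replacing the $0$ at position $i$ by $\pm1$ yields two valid $k$-cells $c^\pm$. Each has exactly $d-k$ zeros, namely the remaining zeros of $e$, and $e$ is a common face of both. These two cells lie in the same cell of the intersection of the other $d-k$ BHs containing $e$, and they are separated only by $h_i$. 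Thus they fuse into a single $k$-cell of $\mathcal{C}-h_i$, which belongs to $A$.

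\textbf{Main obstacle: injectivity of the inverse map.} We must show that a fused cell of $A$ is split by $h_i$ along exactly one $(k-1)$-cell, i.e.\ that two distinct $(k-1)$-cells of $h_i$ cannot separate the same fused cell. The argument goes as follows.
\begin{itemize}
\item Within a single fused $k$-cell of $\mathcal{C}-h_i$, the preactivation of neuron $i$ restricts to one affine function, since the cells of $\mathcal{C}-h_i$ are cells on which the earlier-layer activation pattern is fixed.
\item Therefore $h_i$ meets the fused cell in a single hyperplane section, which is convex.
\item A convex polyhedron cut by one hyperplane yields exactly two pieces and one $(k-1)$-dimensional intersection.
\end{itemize}
The delicate point is that $\mathcal{C}-h_i$ is only guaranteed to be a genuine ReLU complex when $i$ is in the last layer. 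I would therefore either state the lemma for this removal order, which is how it is used, or argue directly from the sign-sequence characterization. In the latter route, two Category 3 cells with the same sign sequence off position $i$ are the same cell, so each fused cell has exactly one $+$ half and one $-$ half, and hence exactly one common face in $h_i$.

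With the bijection established, $|A|=N_{k-1}(h_i)$. Substituting this into the count above gives $N_k(\mathcal{C}) = N_k(h_i)+N_k(\mathcal{C}-h_i)+N_{k-1}(h_i)$.
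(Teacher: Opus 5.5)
Your proposal is correct and follows the same route as the paper. You partition the $k$-cells via Lemma~\ref{lemma:face-types}, count Category~1 as $N_k(h_i)$, count Category~2 together with one half of each Category~3 pair as $N_k(\mathcal{C}-h_i)$, and count the remaining halves as $N_{k-1}(h_i)$. Your explicit bijection between split cells and $(k-1)$-cells of $h_i$, with the restriction to a last-layer neuron (the only case in which the paper applies the lemma), makes rigorous the paper's one-line claim that each $(k-1)$-cell of $h_i$ divides exactly one $k$-cell of $\mathcal{C}-h_i$.
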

\noindent
The first term in the sum accounts for the Category 1 $k$-cells in $\mathcal{C}$, the second term accounts for all the Category 2 cells and half the Category 3 cells, and the third term accounts for the other half of the Category 3 cells.
There are no Category 1 $d$-cells in $\mathcal{C}$ because $h_i$ by definition only contains cells up to dimension $d-1$, so when $k=d$, the first term $N_k(h_i)$ is always $0$. The two lemmas lead to the following special case of Theorem \ref{thm:faces_bound} for $k=d$:
\begin{theorem}
    \label{thm:d-face-avg-facets}[Upper Bound]
    The average number of faces of a $d$-cell of $\mathcal{C}$ in $\mathbb{R}^d$ (i.e., the average degree of the connectivity graph) is at most $2d$.
\end{theorem}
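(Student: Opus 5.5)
The plan is to reduce the statement to the inequality $N_{d-1}(\mathcal{C}) \le d\,N_d(\mathcal{C})$. Each $(d-1)$-cell of $\mathcal{C}$ lies in exactly one BH and is a face of exactly two $d$-cells, namely the two whose sign sequences differ from it only by replacing its single $0$ with $\pm1$. Summing the number of faces over all $d$-cells therefore gives $2N_{d-1}(\mathcal{C})$, and the average degree is $2N_{d-1}(\mathcal{C})/N_d(\mathcal{C})$. So it suffices to show that the defect
\[
D(\mathcal{C}) := d\,N_d(\mathcal{C}) - N_{d-1}(\mathcal{C})
\]
is nonnegative for every $d$-dimensional ReLU complex. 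I will prove this for all ReLU (sub)complexes in the sense of the Preliminaries, since the induction must pass through BHs, which are lower-dimensional ReLU complexes. I use the convention $N_{-1}=0$.

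The key step is an additivity identity for $D$, obtained from Lemma~\ref{lemma:k-face-ind} at $k=d$ and $k=d-1$. For $k=d$ the term $N_d(h_i)$ vanishes, so
\[
N_d(\mathcal{C}) = N_d(\mathcal{C}-h_i) + N_{d-1}(h_i).
\]
For $k=d-1$,
\[
N_{d-1}(\mathcal{C}) = N_{d-1}(h_i) + N_{d-1}(\mathcal{C}-h_i) + N_{d-2}(h_i).
\]
Substituting both into the definition of $D$ gives
\[
D(\mathcal{C}) = D(\mathcal{C}-h_i) + (d-1)N_{d-1}(h_i) - N_{d-2}(h_i) = D(\mathcal{C}-h_i) + D(h_i),
\]
where $D(h_i)$ is the defect of $h_i$ regarded as a $(d-1)$-dimensional ReLU complex. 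If $h_i$ is empty, the identity holds with $D(h_i)=0$.

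Nonnegativity then follows by a double induction: on the dimension $d$, and within a fixed dimension on the number of neurons. Neurons are removed starting from the last layer, as the paper suggests, so that $\mathcal{C}-h_i$ is again the complex of a ReLU network with one fewer neuron. The base cases are as follows.
\begin{itemize}
\item With no neurons, the complex is the single $d$-cell $\mathbb{R}^d$ with no faces, so $D=d\ge 0$.
\item In dimension $0$, $D = 0\cdot N_0 - N_{-1} = 0$.
\end{itemize}
The term $D(\mathcal{C}-h_i)$ is nonnegative by the inner induction. The term $D(h_i)$ is nonnegative by the outer induction on dimension, applied to $h_i$.

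The main obstacle is justifying that $h_i$ is a legitimate object for the inductive hypothesis. Concretely, Lemma~\ref{lemma:k-face-ind} must hold for $h_i$ when one of the other BHs is removed from it, and $h_i$ must again be decomposable until nothing remains. My approach is to run the whole induction over the class of ReLU complexes obtained from network complexes by fixing sign entries. The paper notes that Lemma~\ref{lemma:face-types}, and hence Lemma~\ref{lemma:k-face-ind}, remain valid for such subcomplexes, since Category~3 pairs are never split by such a restriction. Within $h_i$, the restriction of each other neuron $j$ plays the role of a BH, namely the cells with $\mathcal{S}_i=\mathcal{S}_j=0$. I will induct on the number of neurons whose BH meets the subcomplex, removing them in reverse layer order. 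The genericity assumptions of \citep{masden_algorithmic_2025} guarantee that these intersections have the expected dimension. They also ensure that each $(d-2)$-cell of $h_i$ is a face of exactly two $(d-1)$-cells of $h_i$. This is what makes $D(h_i)$ the right quantity to control.
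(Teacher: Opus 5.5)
Your argument is the paper's proof with different bookkeeping. The identity $D(\mathcal{C})=D(\mathcal{C}-h_i)+D(h_i)$ is exactly the algebra the paper does with Lemma~\ref{lemma:k-face-ind} at $k=d$ and $k=d-1$, followed by the same double induction on $(n,d)$. The genuine gap is the step you flag yourself, and your proposed repair does not close it. Inside $h_i$, once the other last-layer neurons are removed, every remaining neuron $j$ lies in an earlier layer, and the pre-activation of $i$ depends on it. Deleting $j$ from the network therefore changes the set $h_i$ itself. So merging the cells of $h_i$ across $h_j$ does not give the restriction $\{c:\mathcal{S}(c)_i=0\}$ of any network complex. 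The merged cells are bent rather than convex. The facts that make Lemma~\ref{lemma:face-types} work are then no longer available at the next removal: injective sign sequences, at most one face of a cell in each BH, and each split cell split into exactly two. The paper's remark about subcomplexes only covers deleting a last-layer BH after fixing other coordinates, so it does not license this step.

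More decisively, the inequality $D(h_i)\ge 0$ that you need is false in general, so no choice of class of subcomplexes can rescue the induction. Take $d=3$ and a last-layer neuron whose BH is a closed PL surface of genus $g$. This should be achievable for deep enough generic networks, since any CPWL function is an affine combination of the last hidden layer for suitable earlier layers. Genericity makes each $0$-cell of $h_i$ an endpoint of exactly four $1$-cells of $h_i$. Compactness makes every $1$-cell of $h_i$ a segment with two endpoints. Hence $N_1(h_i)=2N_0(h_i)$. Euler's formula $N_0(h_i)-N_1(h_i)+N_2(h_i)=2-2g$ then gives $D(h_i)=2N_2(h_i)-N_1(h_i)=4-4g$, which is negative for $g\ge 2$. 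Equivalently, the $2$-cells of $h_i$ average $4+(8g-8)/N_2(h_i)>2(d-1)$ faces. So the defect is governed by the topology of the BHs (their compactly supported Euler characteristics), not only by dimension. Your argument does go through for $d\le 2$, because a $1$-dimensional BH always has $N_0\le N_1$.

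The problem also reaches the statement itself. Your identity is exact when $i$ is in the last layer. Adding many last-layer neurons whose BHs are small generic perturbations of one genus-$2$ surface therefore lowers $D(\mathcal{C})$ by $4$ each, while the complex of the earlier layers stays fixed. Hence $D(\mathcal{C})$ can apparently be made negative. The paper's proof makes the same unjustified move (``$h_i$ is a ReLU complex in $d-1$ dimensions with $n-1$ neurons''). The statement appears to need additional hypotheses, such as $d\le 2$, a single hidden layer, or control on the topology of the BHs.
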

Here we provide an outline of our proof (see Appendix \ref{appendix:lemma-proof} for detailed proof). Each $(d-1)$-cell forms a face between two $d$-cells in $\mathcal{C}$ because the single 0 in its corresponding sign sequence can be set to $-1$ or $1$ to get the sign sequences of the two $d$-cells. Thus, the sum of the numbers of faces of all $d$-cells is just twice the total count of $(d-1)$-cells in $\mathcal{C}$, so the average number of faces of a $d$-cell is $\frac{2N_{d-1}(\mathcal{C})}{N_d(\mathcal{C})}$. Using Lemma \ref{lemma:k-face-ind}, we prove $\frac{2N_{d-1}(\mathcal{C})}{N_d(\mathcal{C})} \leq 2d$ by mathematical induction on the number of BHs $n$ in the complex and $d$. By assuming that the upper bound holds for $(n-1,d-1)$ and $(n-1, d)$, we prove that the upper bound holds for any complex with $(n,d)$. 
The proof of Theorem \ref{thm:faces_bound} then follows by applying the lemmas to groups $k$-cells whose sign sequences have exactly $d-k$ zeros at the same positions, that is, restricting the complex to the intersections of $d-k$ BHs and applying the lemmas to the resulting subcomplex. 

It is more straightforward to establish the following lower bound on the degree of individual nodes, which then bounds the overall average degree of the ReLU complex graph.
\begin{theorem}
    \label{thm:min-degree}[Lower Bound]
    If a ReLU network has $n_1$ neurons in the first hidden layer, every $d$-cell of $\mathcal{C}$ has at least $\min(n_1, d)$ neighbors, and thus the average degree of the connectivity graph is at least $\min(n_1, d)$.
\end{theorem}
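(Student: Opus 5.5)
Fix a $d$-cell $c$ of $\mathcal{C}$. The plan is to show that $c$ has at least $\min(n_1,d)$ faces, and to get these faces by projecting $c$ down to the coarser complex cut out by the first hidden layer alone. The average-degree statement then follows immediately, since every node of the connectivity graph has degree at least $\min(n_1,d)$.

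\textbf{Step 1: the first-layer cell containing $c$.} Let $\mathcal{A}$ be the hyperplane arrangement formed by the $n_1$ first-layer BHs. Under the genericity assumptions these hyperplanes are in general position. Because $c$ is cut out by all BHs, $c$ lies inside a unique $d$-cell $P$ of $\mathcal{A}$.

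\textbf{Step 2: every cell of a generic arrangement has at least $\min(n_1,d)$ facets.} I would argue by cases on $n_1$.
\begin{itemize}
\item If $n_1\le d$, the normals are linearly independent. After an affine change of coordinates every cell is an orthant-like region $\{x: \pm y_j(x)>0,\ j=1,\dots,n_1\}$, and each of its $n_1$ hyperplanes supports a facet.
\item If $n_1>d$, the region $P$ is a nonempty polyhedron defined by $n_1$ inequalities, with facet set $F\subseteq\{1,\dots,n_1\}$.
 \begin{itemize}
 \item If $P$ is bounded, it needs at least $d+1$ facets.
 \item If $P$ is unbounded, suppose $|F|<d$. Then the normals in $F$ span a space of dimension less than $d$, so $P$ contains a full line $L$. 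Every hyperplane not in $F$ is redundant for $P$, and by genericity is not parallel to $L$, so it must cross $L$. That contradicts $L\subseteq P$, so $|F|\ge d$.
 \end{itemize}
\end{itemize}

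\textbf{Step 3: transfer facets of $P$ to distinct faces of $c$.} This is the main obstacle: the deeper BHs subdivide $P$, so a facet of $P$ need not be a facet of $c$. Instead, I would look for faces of $c$ lying on distinct first-layer hyperplanes, or at least on distinct BHs. The structural fact to use is that deeper-layer BHs are level sets of functions computed from the first-layer outputs.

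My first attempt is a recession/line argument applied to $c$ itself. Let $F'$ be the set of hyperplanes containing facets of $c$. If $|F'|<d$, then $c=\bigcap_{j\in F'}$ (halfspaces), so $c$ contains an affine line $L$. The cell $c$ lies in the interior of $P$ away from its facets, so $L$ stays inside $P$. But Step 2 showed that a cell of $\mathcal{A}$ contains no full line when $n_1\ge d$, or, when $n_1<d$, only lines parallel to the common intersection of the hyperplanes. When $n_1\ge d$ this contradiction gives $|F'|\ge d$ directly. When $n_1<d$ the argument degenerates, so I would instead show directly that each first-layer hyperplane bounding $P$ meets $\partial c$ in a $(d-1)$-dimensional piece. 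The idea is that $c$ contains a line parallel to the intersection of the other hyperplanes; following it toward hyperplane $j$ stays in $c$ until it hits some BH, and I would count those BHs.

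If that fails, my fallback is the generic bound that any $d$-dimensional polyhedron that is not all of $\mathbb{R}^d$ has at least one facet, combined with the line argument above: a cell of $\mathcal{C}$ with fewer than $d$ facets contains a line, and genericity forces every BH to cross that line, which is impossible.
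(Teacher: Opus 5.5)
Your argument for the case $n_1\ge d$ is correct, and it takes a more elementary route than the paper. The paper invokes Grigsby et al.'s result that every cell contains a vertex. You instead observe that $c\subseteq P$, that $P$ contains no line because genericity gives $\mathrm{rank}\,W_1=d$, and hence that the facet normals of $c$ span $\mathbb{R}^d$. (Your facet count for $P$ in Step 2 is not needed for this; only the line-freeness of $P$ is.) Each neuron contributes a single inequality on $c$, so distinct facets lie on distinct BHs and give distinct neighbors, and this case is complete.

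The gap is the case $n_1<d$, which you leave open, and both continuations you sketch fail.
\begin{itemize}
\item Your target claim, that each first-layer hyperplane bounding $P$ meets $\partial c$ in a $(d-1)$-dimensional piece, is false. With $n_1=1$, every deeper BH is a level set of a function of $w_1^Tx+b_1$ alone, i.e.\ a union of hyperplanes parallel to the first-layer one. So a cell can be a slab $\{a\le w_1^Tx+b_1\le b\}$ with $0<a<b$ that never touches the first-layer hyperplane.
\item Your fallback would prove that every $d$-cell has at least $d$ facets. The same example refutes this for $d\ge 3$: every cell is a half-space or a slab, with at most $2<d$ facets. The faulty step is ``genericity forces every BH to cross that line.'' The whole network is invariant under translation by $\ker W_1$, so lines in those directions are never crossed, however generic the weights.
\end{itemize}

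The missing idea is to use the \emph{dimension} of the space of lines in $c$, not just their existence. Let the lineality space of a polyhedron be the set of directions $v$ with $Q+\mathbb{R}v=Q$. Then:
\begin{itemize}
\item The lineality space of $c$ is contained in that of $P$, which equals $\ker W_1$ and has dimension $d-\min(n_1,d)$ by genericity.
\item Since $c\neq\mathbb{R}^d$ is the intersection of its facet half-spaces, its lineality space is the orthogonal complement of the span of its facet normals.
\end{itemize}
Together these give that $c$ has at least $\min(n_1,d)$ facets, uniformly in both cases. The paper performs the equivalent reduction explicitly. It notes $f(v+k)=f(v)$ for $k\in\mathcal{N}(W_1)$, restricts to the row space $\mathcal{R}(W_1)$ to obtain an $n_1$-dimensional complex, and applies the vertex lemma there.
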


\subsection{Asymptotic Behavior}
To study how connectivity properties change as network size increases, we can create sequences of networks by adding new ReLU neurons to the last layer or a new layer after it. We use $\mathcal{C}_n$ to denote the complex after $n$ neurons have been added. We characterize these sequences with the following theorems, which show that the average number of faces grows monotonically and that the bound in Theorem \ref{thm:faces_bound} is tight respectively.
\begin{theorem}
    \label{thm:monotonic}
     The average number of faces of $d$-cells in $\mathcal{C}_n$ increases monotonically in terms of $n$.
\end{theorem}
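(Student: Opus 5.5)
The plan is to compare consecutive complexes $\mathcal{C}_{n-1}$ and $\mathcal{C}_n = \mathcal{C}_{n-1}\cup h$. Here $h$ is the BH of the newly added neuron, taken from the last layer or from a new layer after it, so that $\mathcal{C}_n - h = \mathcal{C}_{n-1}$ is exactly the complex of the previous network. Write
\[
A = N_{d-1}(\mathcal{C}_{n-1}),\quad B = N_d(\mathcal{C}_{n-1}),\quad a = N_{d-1}(h),\quad b = N_{d-2}(h).
\]
As in the proof outline of Theorem~\ref{thm:d-face-avg-facets}, the average number of faces of a $d$-cell is $2N_{d-1}/N_d$. Lemma~\ref{lemma:k-face-ind} with $k=d$ and $k=d-1$, using $N_d(h)=0$, gives
\[
N_d(\mathcal{C}_n) = B + a, \qquad N_{d-1}(\mathcal{C}_n) = A + a + b .
\]
Monotonicity is therefore equivalent to $\frac{A+a+b}{B+a} \ge \frac{A}{B}$. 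Cross-multiplying, this becomes $B(a+b) \ge aA$. When $a>0$ it is
\[
2 + \frac{2b}{a} \;\ge\; \frac{2A}{B},
\]
and when $a=0$ (the new BH misses the complex) the average is unchanged and there is nothing to prove.

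Both sides of this inequality have a clean meaning. The right side is the current average degree, which by Theorem~\ref{thm:d-face-avg-facets} is at most $2d$. On the left, $2b/a$ is the average number of faces of a $(d-1)$-cell of $h$, viewed as a $(d-1)$-dimensional ReLU complex. To see this, note that each $(d-2)$-cell of $h$ has zeros at position $i$ and at exactly one other position $j$. Setting position $j$ to $\pm1$ gives exactly two $(d-1)$-cells of $h$, so the face incidences inside $h$ total $2b$. The left side is thus the average contribution per new region: each new region adds $2$ for the new face plus the faces of the slice. It therefore suffices to show that the average number of faces of the $(d-1)$-cells of $h$ is at least $2d-2$, or more weakly at least the current average minus $2$.

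I expect this step to be the main obstacle, since Theorem~\ref{thm:faces_bound} only supplies an \emph{upper} bound of $2(d-1)$ on that average. My first attempt will be a local argument inside each $d$-cell $R$ of $\mathcal{C}_{n-1}$. Within $R$ the new BH is a hyperplane piece, because the neuron is in the last layer and its preactivation is affine on $R$. Let $\deg R$ be the number of faces of $R$, and let $f_R$ be the number of faces of the slice $h\cap R$, which is also the number of faces of $R$ that the slice cuts. Splitting $R$ into $R_1,R_2$ gives $\deg R_1 + \deg R_2 = \deg R + f_R + 2$, and each neighbor across a cut face gains one face.

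The target then reduces to a comparison between $f_R$ and $\deg R$ on the cells that are actually cut. There are two routes I would try:
\begin{itemize}
\item show that a generic hyperplane slicing a $d$-polyhedron cuts enough of its faces, using the transversality assumptions of \citep{masden_algorithmic_2025};
\item if that fails, run an induction on $d$ in the style of the proof of Theorem~\ref{thm:d-face-avg-facets}, applying the statement itself to the $(d-1)$-dimensional complex $h$ built up neuron by neuron, with the lower bound of Theorem~\ref{thm:min-degree} as the base.
\end{itemize}
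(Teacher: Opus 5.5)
\paragraph{Verdict.} There is a genuine gap, and it cannot be closed: the statement itself fails for deep networks.

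\paragraph{Your reduction versus the paper's.} Your reduction is correct, and it is sharper than the paper's. By Lemma~\ref{lemma:k-face-ind}, the average fails to decrease exactly when $B(a+b)\ge aA$, i.e.\ $2+2b/a\ge 2A/B$. The paper instead asserts that monotonicity is equivalent to the new BH creating more $(d-1)$-cells than $d$-cells, i.e.\ $a+b>a$, and then reduces this to $N_{d-2}(h_n)>0$. That only shows the increments have ratio $(a+b)/a>1$. The current ratio $A/B$ can be close to $d$, so this does not give the theorem. The gap in your proposal is exactly the step you flag: you need $2b/a\ge 2A/B-2$. Neither of your routes can supply it, because for networks with more than one layer this inequality, and with it the statement, is false.

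\paragraph{Counterexample.} Take $d=2$ and first-layer lines $x_1=0$, $x_2=0$, $x_2=x_1+1$, $x_2=\sqrt3(x_1+2)$, $x_2=(2+\sqrt3)(x_1+3)$. These are in general position, so $B=16$, $A=25$, and the average degree is $25/8$. Now add a new layer with one neuron,
$g=1-\mathrm{ReLU}(x_1)+2\,\mathrm{ReLU}(x_2)+\tfrac{1}{10}\mathrm{ReLU}(x_1-x_2+1)$.
A case check gives:
\begin{itemize}
\item $g\ge 1$ for $x_1\le 0$;
\item $g\ge 3+x_1$ for $x_1>0$, $x_2\ge x_1+1$;
\item on the two remaining cells, $\{x_1>0,\,x_2\le 0\}$ and $\{x_1>0,\,0<x_2<x_1+1\}$, $g$ is affine, and its zero set is exactly the two rays from $(11/9,0)$ in directions $(1,-9)$ and $(19,9)$.
\end{itemize}
These rays meet no first-layer line except $x_2=0$, and they meet it on its unbounded ray. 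Hence $a=2$, $b=1$, and $B(a+b)=48<50=aA$. The average degree drops from $25/8$ to $2\cdot 28/18=28/9$.

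This is not a degenerate configuration. Adding small positive weights on the ReLUs of the last two lines, oriented to be active on their upper-left sides, changes nothing near the zero set. All the defining conditions are open, so the example survives generic perturbation.

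\paragraph{Why neither route works.} Each of the two cut regions acquires a new wedge with only two faces. That defeats the local face-count comparison in your first route. Here every $(d-1)$-cell of $h$ has exactly $d-1=1$ face, so $2b/a=1<9/8=2A/B-2$. A lower bound of the Theorem~\ref{thm:min-degree} type is therefore attained and still insufficient, which rules out your second route.

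\paragraph{What can be salvaged.} Your criterion $B(a+b)\ge aA$ can be checked in the shallow case from Buck's counts, where it reduces to an inequality between truncated binomial sums. For general bent-hyperplane arrangements, the theorem needs additional hypotheses.
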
 

\begin{theorem} 
\label{thm:asymptotic}
Let $f$ be a shallow network that has only one hidden layer with $n$ nodes. When $n$ goes to infinity, the average number of faces of its $d$-cells converges exactly to $2d$. That is, 
$$\lim_{n \to \infty}\frac{2N_{d-1}(\mathcal{C}_n)}{N_d(\mathcal{C}_n)} = 2d.$$
\end{theorem}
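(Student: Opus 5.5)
For a one-hidden-layer network the complex $\mathcal{C}_n$ is the arrangement of $n$ affine hyperplanes $w_i^Tx+b_i=0$ in $\mathbb{R}^d$. By the genericity assumptions, at most $d$ of them meet at a point and no two sections are parallel, so the arrangement is in general position. The plan is to compute $N_d(\mathcal{C}_n)$ and $N_{d-1}(\mathcal{C}_n)$ exactly and then take the limit of their ratio.

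\textbf{Step 1: count regions.} I would use Lemma~\ref{lemma:k-face-ind} with $k=d$. When the neurons are added one at a time to the single layer, $h_n$ is a genuine hyperplane. The restricted complex $h_n$ is the arrangement of $n-1$ generic hyperplanes inside $h_n\cong\mathbb{R}^{d-1}$, and $N_d(h_n)=0$. This gives the recursion
\[
N_d(\mathcal{C}_n)=N_d(\mathcal{C}_{n-1})+N_{d-1}(\mathcal{C}^{(d-1)}_{n-1}),
\]
where $\mathcal{C}^{(j)}_m$ denotes a generic arrangement of $m$ hyperplanes in $\mathbb{R}^j$. With the base cases $N_j(\mathcal{C}^{(j)}_0)=1$ and $N_0(\mathcal{C}^{(0)}_m)=1$, induction gives the classical formula $N_d(\mathcal{C}_n)=\sum_{j=0}^d\binom{n}{j}$.

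\textbf{Step 2: count $(d-1)$-cells.} Each $(d-1)$-cell lies in exactly one hyperplane $h_i$. Restricted to $h_i$, the other $n-1$ hyperplanes form a generic arrangement in $\mathbb{R}^{d-1}$. Summing over $i$ and using Step 1 one dimension down gives
\[
N_{d-1}(\mathcal{C}_n)=n\sum_{j=0}^{d-1}\binom{n-1}{j}.
\]
The genericity needed here, that the restricted arrangement is again in general position, follows from the same assumptions: any $d-1$ of the other hyperplanes meet $h_i$ in a single point, and no restrictions are parallel.

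\textbf{Step 3: the limit.} For fixed $d$ and $n\to\infty$, the leading terms dominate. We have $N_d(\mathcal{C}_n)\sim n^d/d!$ and $N_{d-1}(\mathcal{C}_n)\sim n\cdot n^{d-1}/(d-1)!$. Hence
\[
\frac{2N_{d-1}}{N_d}\to 2\cdot\frac{d!}{(d-1)!}=2d.
\]
To make this rigorous, divide the numerator and denominator by $n^d$ and note that every lower-order binomial term is $O(n^{d-1})$.

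The main obstacle is Step 2, specifically justifying that the restriction of the arrangement to each $h_i$ is generic in $\mathbb{R}^{d-1}$ under the paper's assumptions, so that the Step 1 formula applies there. Once that is in place, the remaining steps are routine binomial bookkeeping. This result also confirms that the upper bound of Theorem~\ref{thm:d-face-avg-facets} is tight, consistent with the monotonic increase in Theorem~\ref{thm:monotonic}.
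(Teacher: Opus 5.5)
Your proposal is correct and follows the paper's route: compute the exact numbers of $d$-cells and $(d-1)$-cells of a generic hyperplane arrangement, then compare their leading terms, $\binom{n}{d}$ against $d\binom{n}{d}$. The only difference is that the paper cites Buck's formula $N_k=\sum_{i=d-k}^{d}\binom{n}{i}\binom{i}{d-k}$ directly, whereas you re-derive the two counts you need from Lemma~\ref{lemma:k-face-ind} and by restricting to each hyperplane; your $n\sum_{j=0}^{d-1}\binom{n-1}{j}$ equals the paper's $\sum_{i=1}^{d} i\binom{n}{i}$ via $i\binom{n}{i}=n\binom{n-1}{i-1}$.
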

In our experiments in Section \ref{sec:experiments}, we observe that the average number of faces also appears to approach $2d$ as the depth of the network increases.

\subsection{Bounds on Connectivity Graph Diameter}
\label{sec:diameter}
Let $\ell$ be the total number of hidden layers (depth), $m_j$ be the number of nodes in layer $j$, $j=1,\dots,\ell$, and $m = \max\{m_1,\cdots, m_\ell\}$ (width). We find that,
\begin{theorem}
    \label{thm:diameter-bounds}
    The diameter $D$ of the connectivity graph is $\Omega \left(\frac{\ln(N_d(\mathcal{C}))}{\ln(n)}\right)$ and $O\left({m}^{\ell}\right)$.
\end{theorem}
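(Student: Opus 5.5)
The lower and upper bounds need separate arguments; in both, distances in the connectivity graph are measured through sign sequences.

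\textbf{Lower bound.} I would use a counting argument based on the maximum degree. Edges of the connectivity graph join sign sequences that differ in exactly one coordinate, and every $d$-cell has at most $n$ faces, each lying in a distinct BH. So every node has degree at most $n$. Fix any node $v$. The number of nodes within graph distance $D$ of $v$ is at most
\[
  1 + n + n^2 + \dots + n^D \le (n+1)^D .
\]
The graph is connected, since $\mathbb{R}^d$ is connected and any two regions can be joined by a generic path that crosses only $(d-1)$-cells. Hence every node lies within distance $D$ of $v$, which gives
\[
  N_d(\mathcal{C}) \le (n+1)^D,
\]
and therefore
\[
  D \ge \frac{\ln N_d(\mathcal{C})}{\ln(n+1)} = \Omega\!\left(\frac{\ln N_d(\mathcal{C})}{\ln n}\right).
\]
This part is routine.

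\textbf{Upper bound.} I would connect two arbitrary regions by a straight segment from a generic interior point $x$ of the first to a generic interior point $y$ of the second. By genericity the segment meets no cell of dimension less than $d-1$. Each crossing of a BH then moves to an adjacent region, so $D$ is at most the maximal number of BH crossings along any line. The network restricted to a line is a one-dimensional ReLU network, with input $t \mapsto x+t(y-x)$. It therefore suffices to bound the number of breakpoints a depth-$\ell$ network of widths $m_j$ can create on $\mathbb{R}$.

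I would prove by induction on layers that the restriction to the line of every neuron's pre-activation after layer $j$ is piecewise linear with at most $P_j := \prod_{s\le j}(m_s+1)-1$ breakpoints. The base case is $P_0=0$: each first-layer pre-activation is affine on the line. For the inductive step, all neurons of layer $j$ receive as input a common piecewise-linear vector with at most $P_{j-1}$ breakpoints, so the line splits into at most $P_{j-1}+1$ pieces on which each pre-activation is affine. On each piece, an affine pre-activation changes sign at most once. Hence within each piece the $m_j$ ReLUs add at most $m_j$ new breakpoints. This gives
\[
  P_j \le P_{j-1} + m_j(P_{j-1}+1),
\]
so $P_j+1 \le (m_j+1)(P_{j-1}+1)$.

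The number of BH crossings along the segment is at most the total number of sign changes, which is at most $P_\ell \le (m+1)^\ell$. For fixed $\ell$ this is $O(m^\ell)$, and it is independent of $d$.

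\textbf{Main obstacle.} I expect the main difficulty to be justifying rigorously that a generic segment crosses only $(d-1)$-cells transversally, so that each crossing corresponds to exactly one edge of the graph. I would handle this using the nondegeneracy assumptions of \citep{masden_algorithmic_2025}. Lower-dimensional cells form a finite union of sets of codimension at least $2$. Perturbing the endpoints within the open $d$-cells therefore avoids them, and this perturbation does not change the start and end regions. A further point to check is that a BH crossing with zero slope does not occur generically. This follows because each cell's restriction to the line is affine with a nonzero normal component, except on a measure-zero set of directions.
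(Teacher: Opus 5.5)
Your proposal is correct and follows essentially the paper's route. The lower bound is the same Moore-type ball-growth count using maximum degree $n$, and your segment-based induction $P_j+1\le(m_j+1)(P_{j-1}+1)$ is the paper's layer-by-layer recursion (at most $m_1+1$ first-layer pieces, with each later BH a hyperplane crossed at most once per piece), giving $\prod_j(m_j+1)-1\le(m+1)^\ell$. Your write-up is slightly more careful than the paper's: $N_d(\mathcal{C})\le(n+1)^D$ holds for every $n\ge 1$ (the paper's $n(n-1)^D$ form fails at $n=2$), you justify explicitly that a generic segment crosses only $(d-1)$-cells transversally, and your caveat that $(m+1)^\ell=O(m^\ell)$ requires $\ell$ fixed is the right qualification, since it is not uniform as $\ell\to\infty$ with $m$ fixed.
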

The lower bound (in $\Omega$) agrees with the intuition that diameter increases with the number of regions in the complex. Although it appears as though increasing the number of neurons in the network might reduce diameter by increasing $\ln(n)$, actually $\ln(N_d(\mathcal{C}))$ grows much faster with $n$ regardless of architecture, so the $\ln(n)$ term is just attenuating the growth of this lower bound. The upper bound (in $O$) may rarely be reached in practice, but it is interesting in that it does not have to depend on the input dimension $d$, even though the number of the network's regions increases exponentially with $d$. We also find that this is empirically true in Section \ref{sec:experiments}, as when we fix network architecture and only change the input dimension, the diameter of the resulting complexes grows almost identically.

\section{Algorithm for Calculating Polyhedron Boundaries}
\label{sec:calculating-polys}
To define the complex, it will be necessary to map sign sequences to their polyhedra defined by intersections of half-spaces, i.e., systems of linear inequalities of the form $\Phi_i^T x + \beta_i \leq 0$ for $i \in f$. We are only concerned with the sign sequences of $d$-cells, which do not contain any zeros, so that each neuron provides exactly one inequality to our system defining the polyhedron. Let $s$ be such a sign sequence and the column vector $s^{(j)}$ be the portion of $s$ that contains only signs for the neurons in layer $j$. Let $W^{(j)} \in \mathbb{R}^{j_\text{out} \times j_\text{in}}$ and $b^{(j)} \in \mathbb{R}^{j_\text{out}}$ denote the weights and biases in layer $j$ of the network with input dimension $j_\text{in}$ and output dimension $j_\text{out}$. We can define our polyhedron by using the following formulas to calculate the inequalities layer by layer.

\renewcommand{\eqnhighlightheight}{\vphantom{\text{diag}\left(\text{ReLU}\left(s^{(j-1)}\right)\right)}}
\renewcommand{\eqnhighlightshade}{12}
\vspace{6pt}
\begin{equation}
   \eqnmarkbox[Plum]{newy}{
        \Phi^{(j)}
    } 
   = 
   \eqnmarkbox[NavyBlue]{newsign}{
        \text{diag}\left(s^{(j)}\right)
   }
   \eqnmarkbox[WildStrawberry]{weight}{
        W^{(j)}
    }
   \eqnmarkbox[OliveGreen]{mask}{
        \text{diag}\left(\text{ReLU}\left(s^{(j-1)}\right)\right)
    }
   \eqnmarkbox[RoyalPurple]{oldy}{
        \Phi^{(j-1)}
    }
    \label{eq:2}
\end{equation}
\annotate[yshift=0.5em]{above,left}{newy}{\scriptsize Half-spaces for current layer}
\annotate[yshift=0.5em]{above}{newsign}{\scriptsize Picking half-space signs}
\annotate[yshift=-0.2em]{below, left}{weight}{\scriptsize Current layer's weights}
\annotate[yshift=0.5em]{above, right}{mask}{\scriptsize Mask for inactive neurons in the previous layer}
\annotate[yshift=-0.2em]{below,left}{oldy}{\scriptsize Half-spaces for the previous layer}

\begin{equation}
    \beta^{(j)} = \text{diag}\left(s^{(j)}\right)\left(W^{(j)}\text{diag}\left(\text{ReLU}\left(s^{(j-1)}\right)\right)\beta^{(j-1)}+b^{(j)}\right) \label{eq:3}
\end{equation}

At the initial stage, $\Phi^{(0)} = I_d$, $b^{(0)}=0_{(d \times 1)}$, and $s^{(0)}=\boldsymbol{1}_{d \times 1}$. The first term on the right-hand side ensures that the inequality of each half-space is always in the same direction, regardless of whether the neuron is active or inactive. This allows us to concatenate $\Phi^{(j)}$ and $\beta^{(j)}$ from each layer to get the full linear system, $\Phi x + \beta \leq 0$.
\setlength{\intextsep}{1pt}
\begin{wrapfigure}{R}{0.5\textwidth}
    \begin{minipage}{0.5\textwidth}
    \centering
    \begin{algorithm}[H]
    \caption{\small Construction of the Connectivity Graph}
    \begin{algorithmic}[1]
    \footnotesize
        \STATE {\bfseries Input:} Trained network $f$, sign sequence $s$
        \STATE $Q, V, E \gets \set{s}, \set{s}, \emptyset$
        \WHILE {$Q$ is not empty}
        \STATE $s \gets \text{pop}(Q)$
        \FOR {$i \in \set{0,\dots,n}$}
                \IF {$\textsc{SolveLP}(-\Phi_{s_i},\Phi_s, \beta_s + e_i) \geq \beta_{s_i}$}
                    \STATE add (s, $s_{\text{-}i}$) to $E$
                    \STATE \textbf{if} $s_{\text{-}i} \not \in V$ \textbf{then} add $s_{\text{-}i}$ to $Q$ and $V$ \textbf{end if}
                \ENDIF
        \ENDFOR
        \ENDWHILE
        \STATE {\bfseries return} $(V,E)$
    \end{algorithmic}
    \label{alg:poly-bfs}
    \end{algorithm}
    \end{minipage}
\end{wrapfigure}
\subsection{Enumerating Polyhedra}
To enumerate the maximal polyhedra and obtain their connectivity graph $G=(V,E)$, we will employ breadth-first search (BFS). We describe our exact method in Algorithm \ref{alg:poly-bfs}. Starting with a valid sign sequence $s$, which can be found by passing any point through the network, we enumerate its neighbors and add them to our graph. Neighbors are polyhedra that can be reached by crossing a single BH, so their sign sequences have the opposite sign from the original polyhedron in exactly one position $i$, denoted as $s_{\text{-}i}$. To find the neighbors, we can calculate $\Phi_{s}$ and $\beta_s$ as described in the previous section and determine which inequalities actually form the boundary of the polyhedron. We check the redundancy of each inequality by solving an LP, performed by the \textsc{SolveLP} subroutine on line 6, which is given the arguments $\Phi_s$ for the constraint coefficient matrix, $\beta_s + e_i$ ($\beta_s$ with 1 added to the $i$th element to relax this constraint) for the constraint offset, and $-\Phi_{s_i}$ for the objective function coefficients to maximize in the direction of the relaxed constraint. This LP is explained further in Appendix \ref{sec:algorithmic-details}. Non-redundant constraints will be violated by the optimal solution to this LP, i.e., $\Phi_{s_i}^Tx > \beta_{s_i}$, meaning that $s_{\text{-}i}$ gives the sign sequence of the neighbor of $s$ across the BH of neuron $i$. For each neighbor, we add the edge between $s$ and $s_{\text{-}i}$ to the graph (line 7), and if we have not reached $s_{\text{-}i}$ before, we add it to both the graph and the search queue (line 8). In the next iteration, we can pop a new sign sequence from the queue and repeat the same process. 

This traversal of polyhedra is similar to several previous works~\citep{xu_traversing_2022, liu_integrating_2023, liu_relu_2023}, specifically the BFS in~\citep{xu_traversing_2022}, but we take the additional step of building up the connectivity graph of the polyhedral complex over the course of the search by recording when faces are shared with already-found polyhedra. Furthermore, when determining whether or not an element of a sign sequence can be flipped to produce another valid sign sequence, we follow~\citep{zhang_empirical_2019, fukuda_frequently_2004} 
and slightly relax the corresponding inequality to reduce errors arising from insufficient numerical precision.

\section{Experiments}

\label{sec:experiments}
To further understand the structure of ReLU complexes, we use Algorithm \ref{alg:poly-bfs} to enumerate polyhedra for a number of neural networks and construct their connectivity graphs. We then discuss how data tends to be distributed across the complex of networks after training. Additional details about the experimental setup and trained networks can be found in Appendix \ref{sec:exp-setup}. Code, data, and models used in all experiments can be found at \url{https://github.com/bl-ake/ICLR-2026}.

\begin{figure}
    \centering
    \begin{subfigure}{0.63\textwidth}
        \centering
        \includegraphics[width=\linewidth,alt={Bar charts of polyhedron face counts for ReLU networks on synthetic data, with separate plots for each width, depth, and dimension. The distributions are smooth, unimodal and skewed right, and Error bars are small relative to the size of the bars.}]{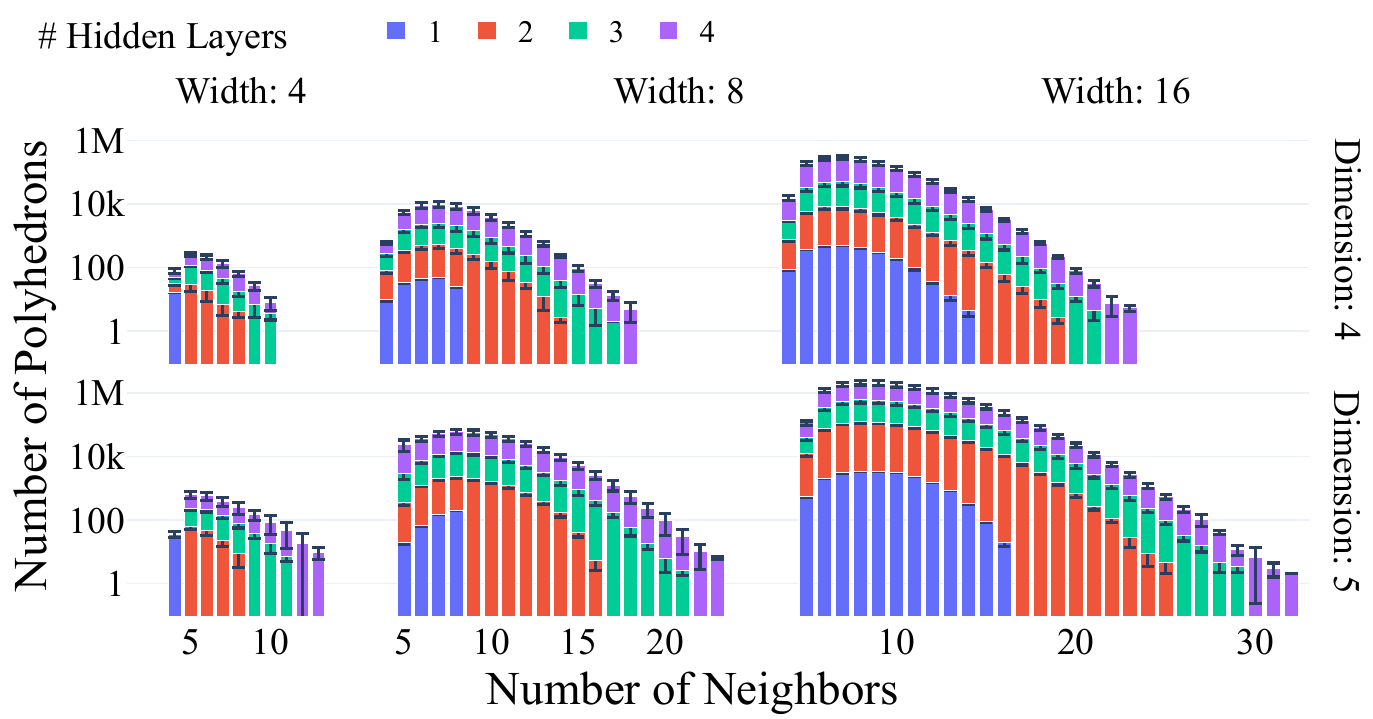}
    \end{subfigure}
    \begin{subfigure}{0.34\textwidth}
        \centering
    \includegraphics[width=\linewidth,alt={Number of ReLU neurons versus mean number of neighbors, showing that the mean quickly approaches the upper bound as the number of neurons increases}]{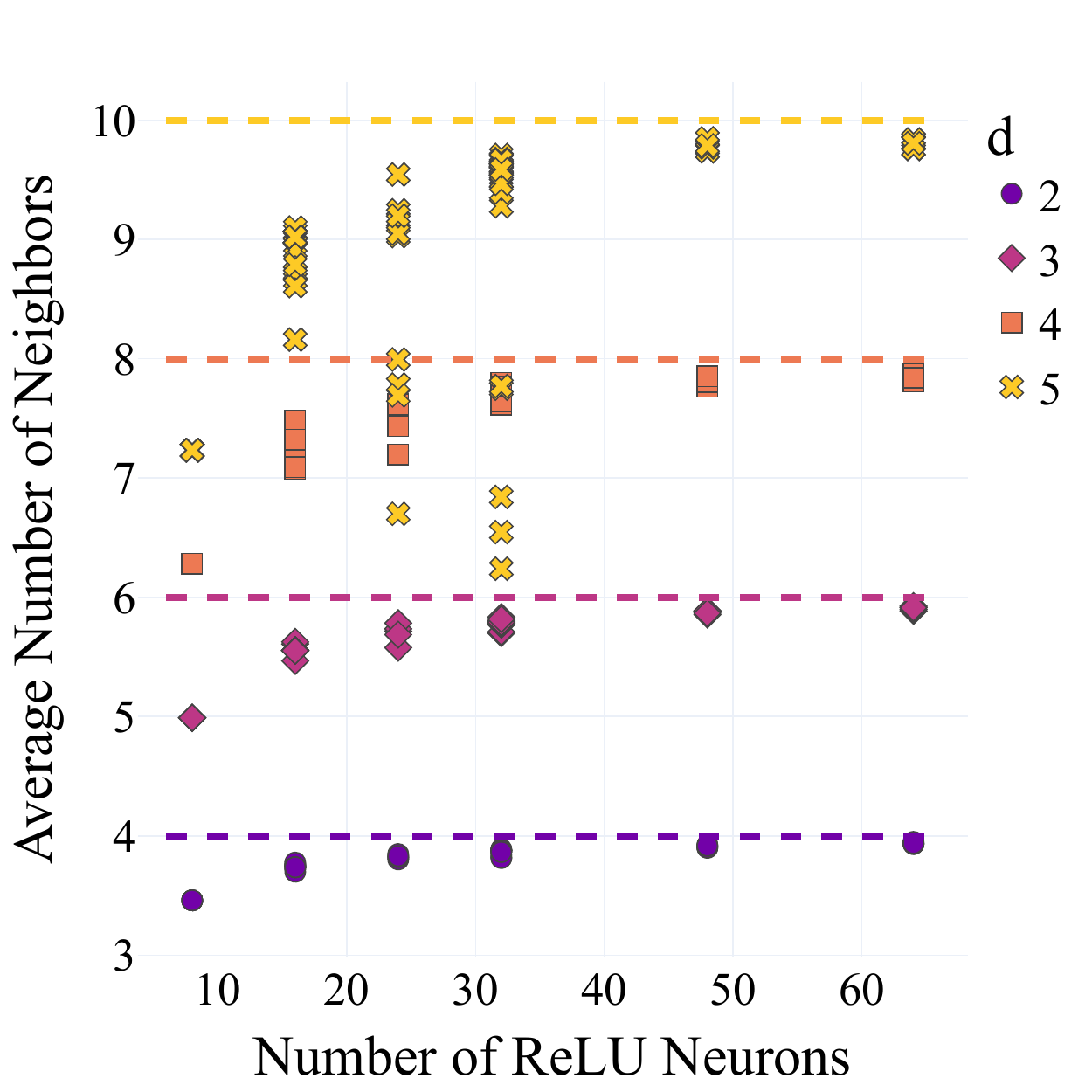}
    \end{subfigure}
    \caption{(Left) Distributions for the number of faces of polyhedra in ReLU networks trained on synthetic data. Each colored bar shows the number of polyhedra with a given number of faces in complexes of networks with a specific width, depth, and input dimension, averaged across 5 initializations of network weights for 5 different training datasets with standard deviation shown by the black bars. (Right) The mean of each distribution versus the number of neurons in the network, colored by dimension. Dotted lines represent upper bounds for the networks with different $d$.}
    \label{fig:poly_counts_half}
    \vspace{2ex}
\end{figure}

\begin{figure}[b]
    \centering
    \begin{subfigure}{0.28\textwidth}
        \includegraphics[width=\linewidth,alt={Connectivity graph diameter versus theoretical upper bound for width 4 networks. Upward linear trend.}]{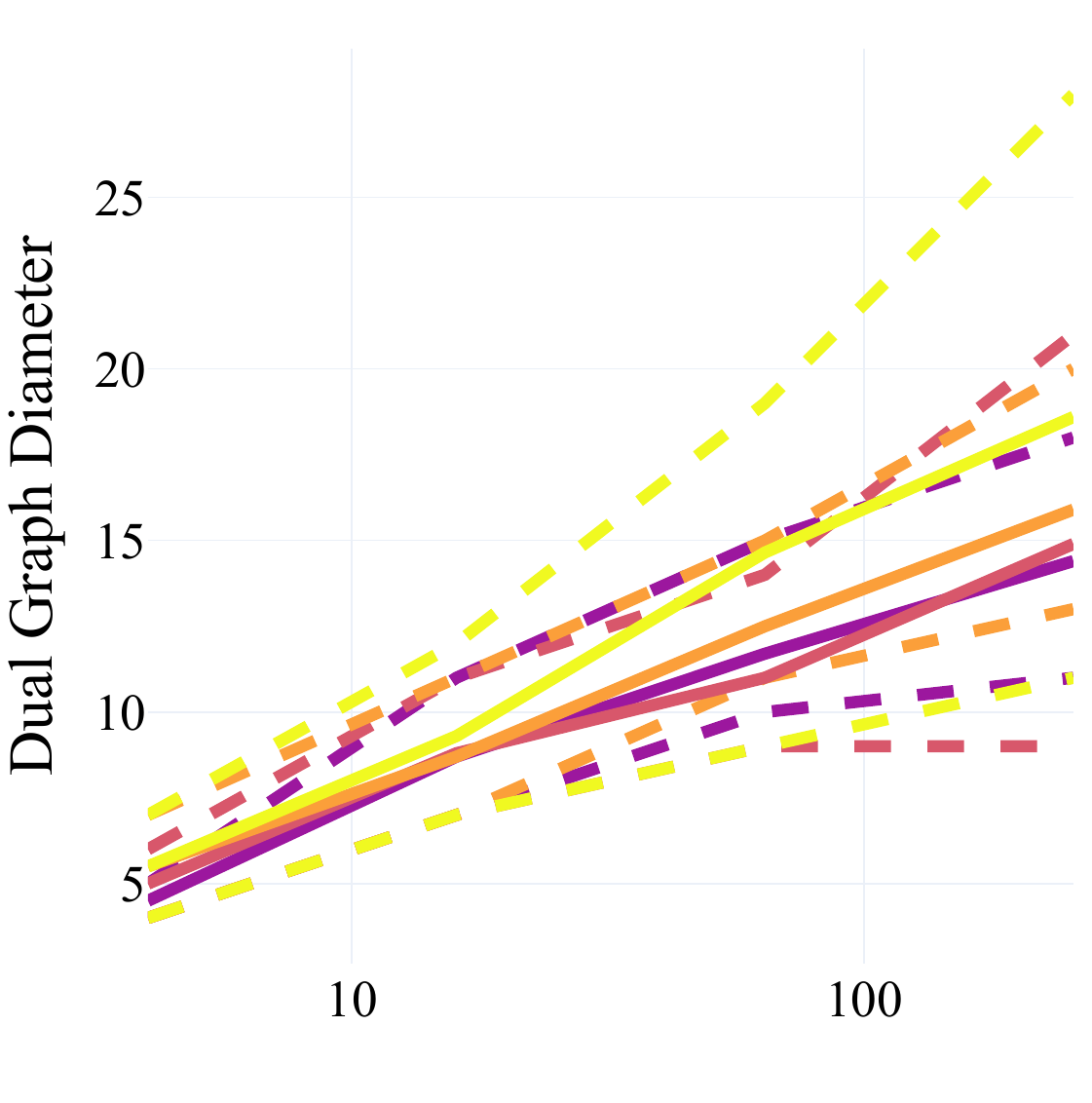}
    \end{subfigure}
    \begin{subfigure}{0.29\textwidth}
        \includegraphics[width=\linewidth,alt={Connectivity graph diameter versus theoretical upper bound for width 8 networks. Upward linear trend.}]{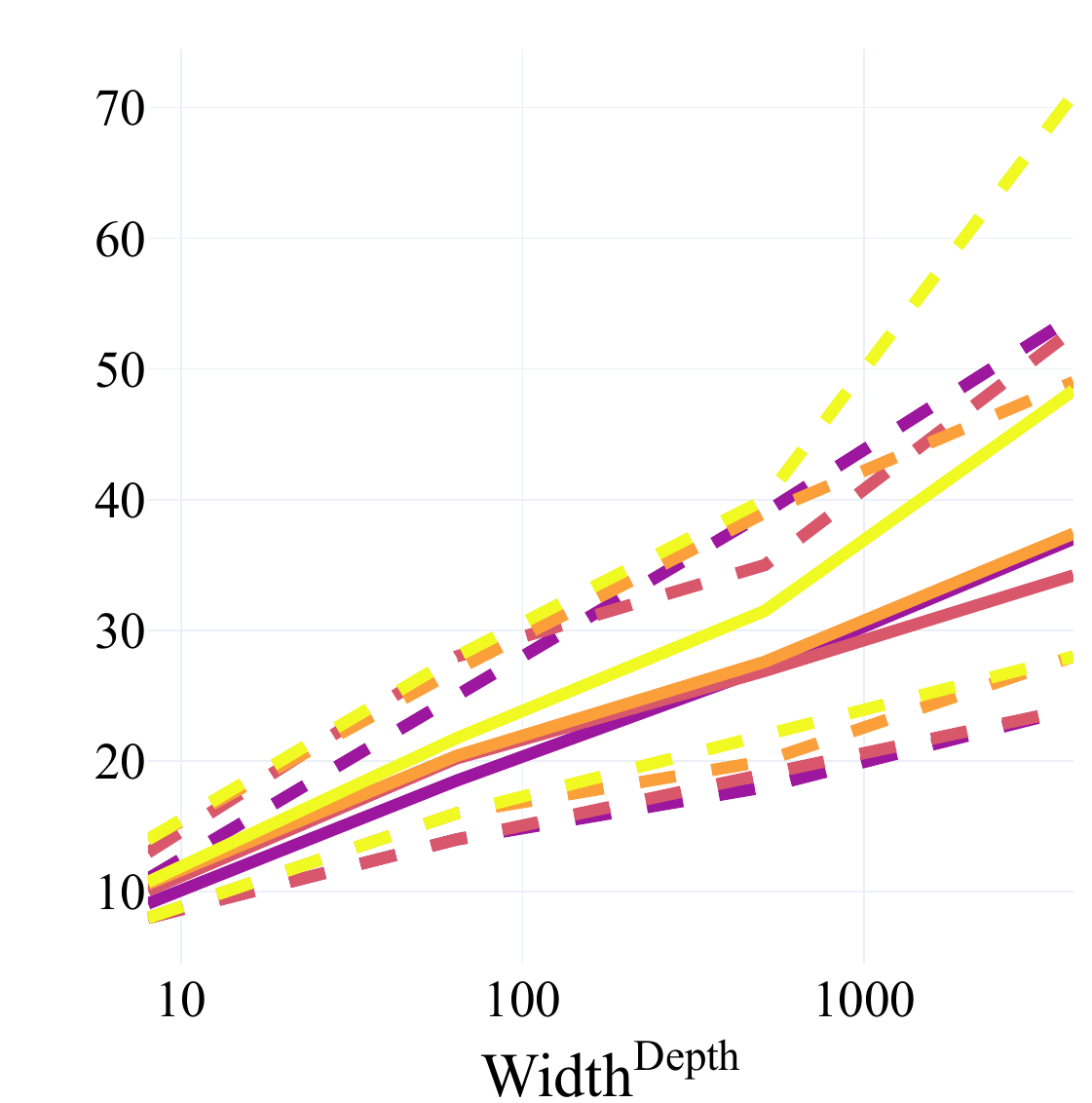}
    \end{subfigure}
    \begin{subfigure}{0.28\textwidth}
        \includegraphics[width=\linewidth,alt={Connectivity graph diameter versus theoretical upper bound for width 16 networks. Upward linear trend.}]{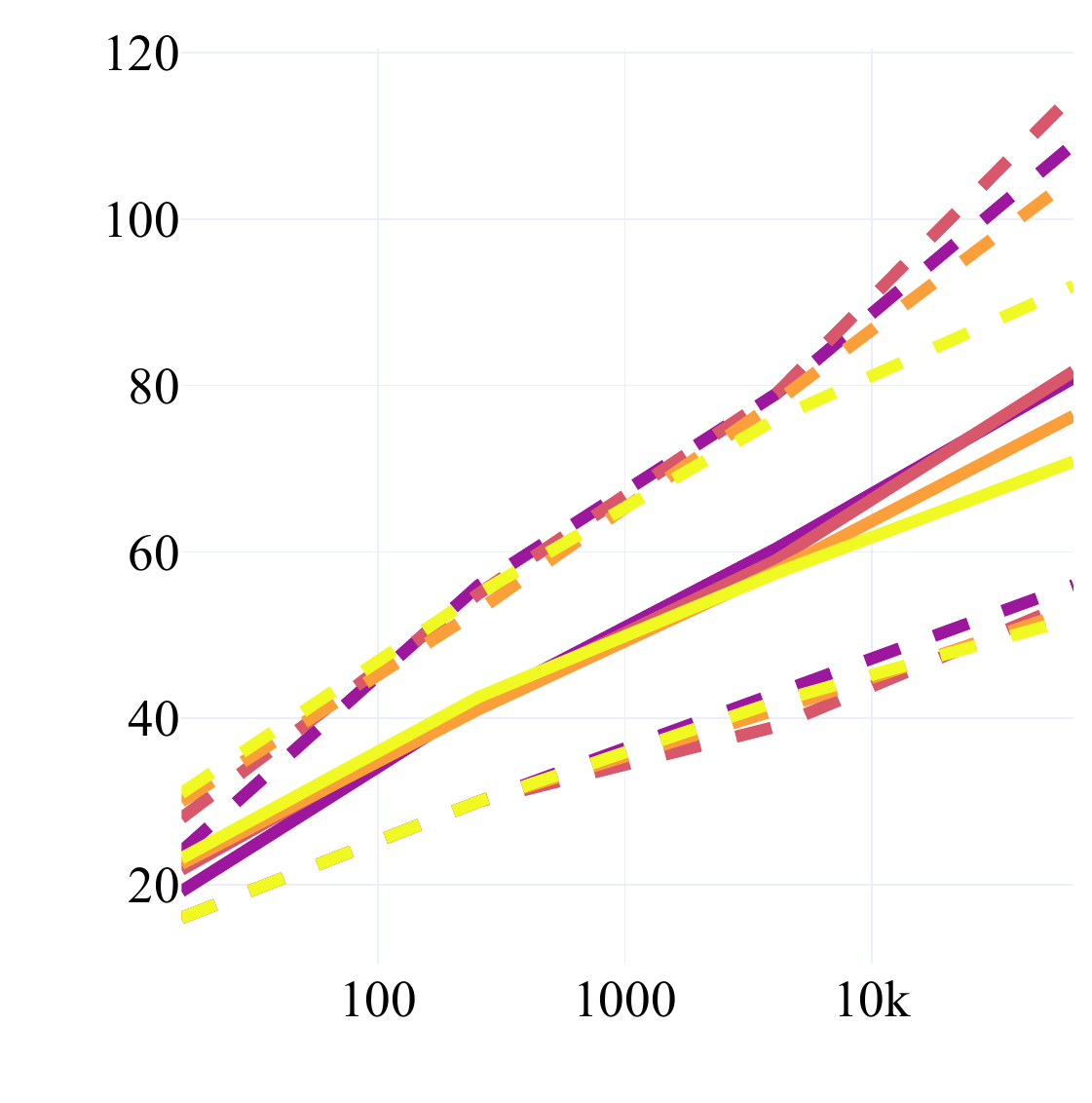}
    \end{subfigure}
    \begin{subfigure}{0.04\textwidth}
        \raisebox{11mm}{
        \includegraphics[artifact, width=\linewidth,alt={Legend for diameter versus bound plots}]{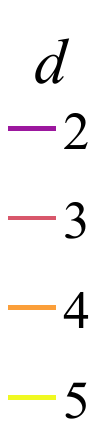}
        }
    \end{subfigure}
    \caption{Connectivity graph diameter vs theoretical upper bound. At each distinct value of the theoretical upper bound (abscissa), the actual diameter of 5 network complexes was estimated as described in Section \ref{sec:understanding-the-bounds}. Each pair of dotted lines also encloses all (non-estimated) values of the connectivity graph diameters from every experiment. Each subfigure shows networks with fixed width $m$ and depth $1 \leq \ell \leq 4$. The widths are 4 (left), 8 (middle), and 16 (right).}
    \label{fig:diameter-vs-npolys}
\end{figure}

\begin{table}[hbt]
    \centering
    \caption{Summary statistics for the distributions in Fig.~\ref{fig:poly_counts_half} with four dimensions (left) and five dimensions (right). Diameter for each complex is estimated as described in Section \ref{sec:understanding-the-bounds}. Non-degenerate depth-1 networks always have the same number of polyhedra because their BHs are all just hyperplanes \citep{buck_partition_1943}.}
    \label{tab:main_table_half}
    \tiny
    \begin{tabular}{lllll}
\toprule
\rotatebox[origin=c]{45}{Depth} & \rotatebox[origin=c]{45}{Width} & \# Polyhedrons & Average Degree & Diameter  \\
\midrule
\multirow[t]{3}{*}{1} & 4 & 16.00$\pm$0.00 & 4.00$\pm$0.00 & 5.50$\pm$0.00 \\
   & 8 & 163.00$\pm$0.00 & 6.28$\pm$0.00 & 10.60$\pm$0.42 \\
   & 16 & 2517.00$\pm$0.00 & 7.32$\pm$0.00 & 22.50$\pm$0.35 \\
\cline{1-5}
  \multirow[t]{3}{*}{2} & 4 & 72.60$\pm$22.70 & 5.21$\pm$0.31 & 8.70$\pm$0.76 \\
   & 8 & 2244.80$\pm$630.08 & 7.25$\pm$0.18 & 20.40$\pm$0.74 \\
   & 16 & 42243.00$\pm\vphantom{10^5}$8608.12 & 7.72$\pm$0.06 & 41.10$\pm$0.65 \\
\cline{1-5}
  \multirow[t]{3}{*}{3} & 4 & 227.60$\pm$42.21 & 5.85$\pm$0.16 & 12.50$\pm$0.61 \\
   & 8 & 9340.80$\pm$3325.81 & 7.47$\pm$0.17 & 27.60$\pm$2.25 \\
   & 16 & 2.23$\times10^5$$\pm$72142.81 & 7.82$\pm$0.04 & 57.70$\pm$2.46 \\
\cline{1-5}
  \multirow[t]{3}{*}{4} & 4 & 448.00$\pm$119.14 & 6.17$\pm$0.12 & 15.90$\pm$1.19 \\
   & 8 & 35767.20$\pm\vphantom{10^5}$9493.85 & 7.70$\pm$0.06 & 37.40$\pm$1.29 \\
   & 16 & 6.24$\times10^5$$\pm$96311.16 & 7.85$\pm$0.03 & 76.35$\pm$4.56 \\
\bottomrule
\end{tabular}

    \hfill
    \begin{tabular}{lll}
\toprule
\vphantom{\rotatebox[origin=c]{45}{Depth}\rotatebox[origin=c]{45}{Width}} \# Polyhedrons & Average Degree & Diameter  \\
\midrule
 16.00$\pm$0.00 & 4.00$\pm$0.00 & 5.50$\pm$0.00 \\
219.00$\pm$0.00 & 7.23$\pm$0.00 & 10.75$\pm$0.26 \\
6884.87$\pm$0.35 & 9.02$\pm$0.00 & 23.17$\pm$0.41 \\
\cline{1-3}
89.50$\pm$19.78 & 5.34$\pm$0.25 & 9.30$\pm$0.63 \\
5802.60$\pm$1146.10 & 8.77$\pm$0.27 & 21.75$\pm$1.06 \\
2.69$\times10^5$$\pm$48746.09 & 9.61$\pm$0.05 & 42.57$\pm$1.53 \\
\cline{1-3}
389.60$\pm$188.02 & 5.65$\pm$0.41 & 14.65$\pm$2.32 \\
36591.20$\pm$12085.54 & 8.54$\pm$0.93 & 31.50$\pm$2.33 \\
1.78$\times10^6$$\pm$1.94$\times10^5$ & 9.78$\pm$0.04 & 57.44$\pm$1.47 \\
\cline{1-3}
1206.70$\pm$1154.00 & 5.50$\pm$0.78 & 18.60$\pm$3.98 \\
1.82$\times10^5$$\pm$79768.45 & 8.25$\pm$1.38 & 48.35$\pm$12.25 \\
5.03$\times10^6$$\pm$1.07$\times10^6$ & 9.80$\pm$0.03 & 70.88$\pm$1.19 \\
\bottomrule
\end{tabular}
\end{table}

\subsection{Understanding the Bounds}
\label{sec:understanding-the-bounds}
We start by training a number of networks on clustering problems generated from three isotropic Gaussians with unit variance and centers selected uniformly at random within a hypercube around the origin with side length 10. We vary input dimension $d$, the number of hidden layers, and the width of each hidden layer. For each combination of hyperparameters, we perform five experiments with newly generated datasets, and perform an exhaustive search to compute the full complex of the network and obtain information about every region. 
Summary statistics for the polyhedra can be found in Table \ref{tab:main_table_half}, and the distributions of neighbor counts are visualized in Fig.~\ref{fig:poly_counts_half}. The average number of neighbors in every complex is below the upper bound of $2d$, with the overall distribution being unimodal and skewed right. We also plot the estimated connectivity graph diameter versus the upper bound from Section \ref{sec:diameter} in Fig.~\ref{fig:diameter-vs-npolys}. We estimate the actual diameter of the complexes by bounding each one above and below using the corresponding algorithms from~\citep{magnien_fast_2009} and taking the midpoint. To be clear, the asymptotic bounds derived in Section \ref{sec:diameter} were not used to make this estimate. Across all experiments, the diameter estimates for networks with the same depth and width were almost identical across different input dimensions. Although the upper bound is rarely reached, the logic that it should be independent of input dimension appears to hold in practice. Furthermore, when width is fixed, the diameter appears to grow logarithmically with respect to our theoretical upper bound. Additional summary metrics for the complexes and results for $d\in\set{2,3}$ can be found in Appendix \ref{sec:extended-results}.

\subsection{Training Data and Polyhedra}

\begin{figure}[b]
\centering
\begin{subfigure}{0.29\textwidth}
    \centering
    \includegraphics[width=\linewidth]{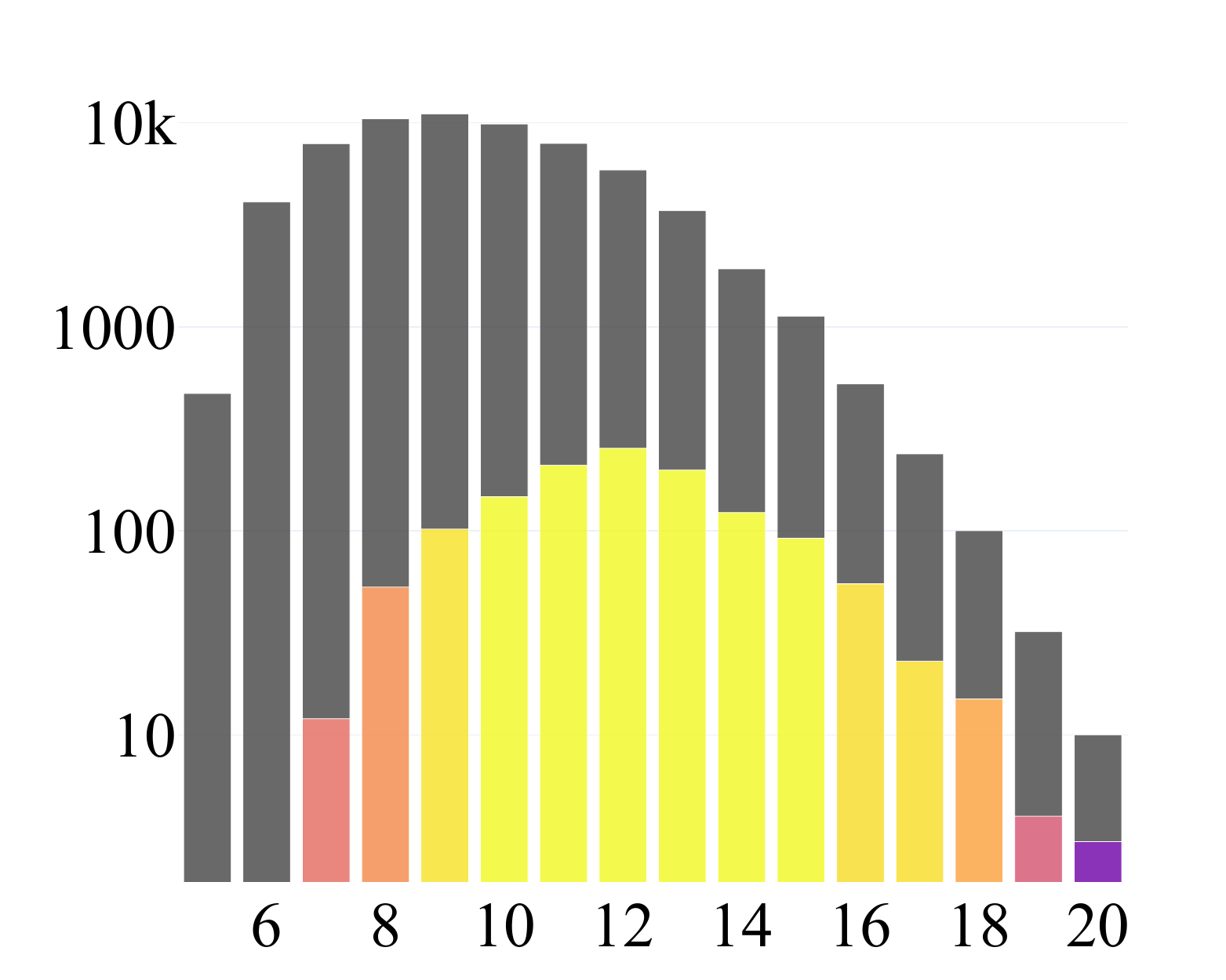}
    \caption{MNIST ($d$=5, $n$=24)}
    \label{fig:mnist-points}
\end{subfigure}
\hfill
\begin{subfigure}{0.29\textwidth}
    \centering
    \includegraphics[width=\linewidth]{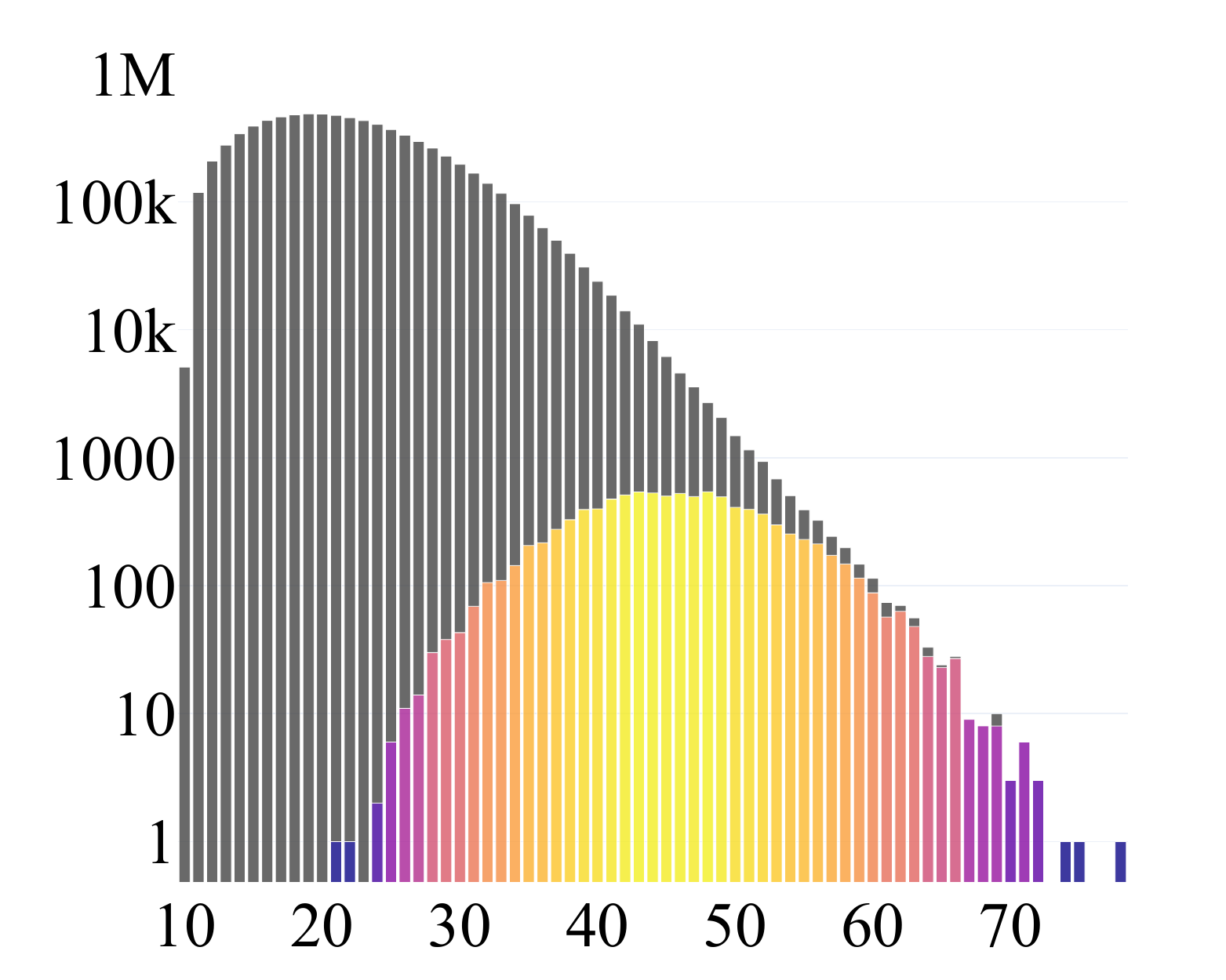}
    \caption{CIFAR10 ($d$=10, $n$=128)}
\end{subfigure}
\hfill
\begin{subfigure}{0.29\textwidth}
    \centering
    \includegraphics[width=\linewidth]{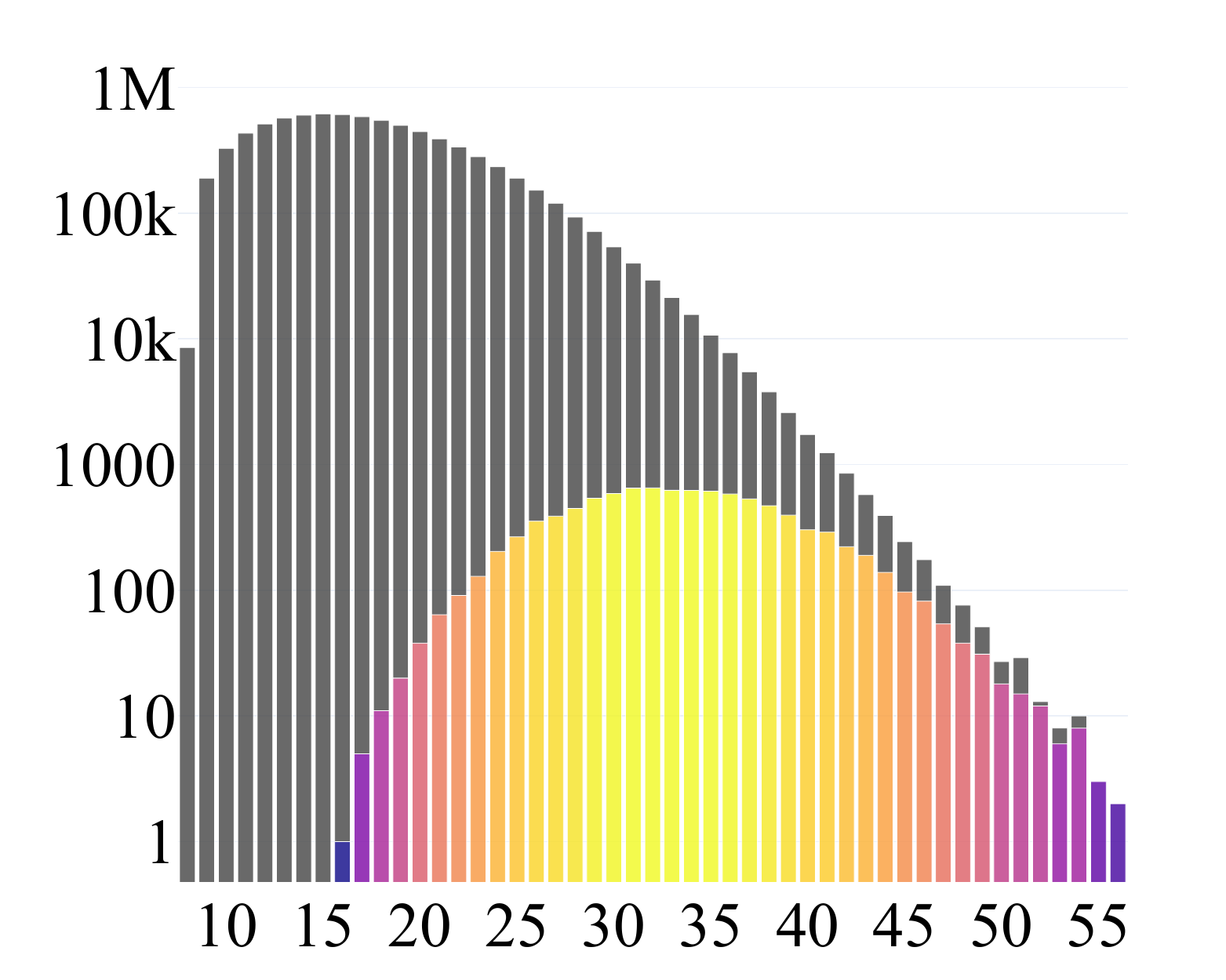}
    \caption{CA Housing ($d$=8, $n$=128)}
\end{subfigure}
\begin{subfigure}{0.09\textwidth}
    \centering
    \raisebox{7mm}{\includegraphics[width=\linewidth]{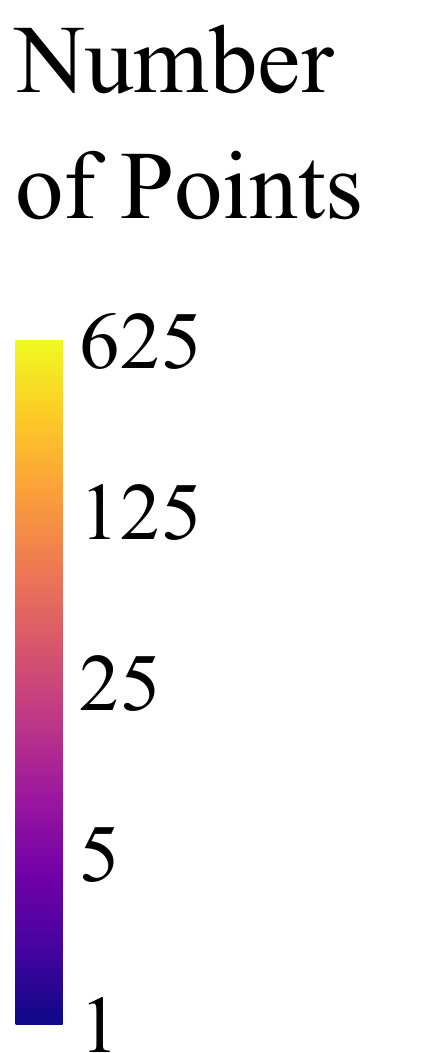}}
\end{subfigure}
    \caption{Histograms of polyhedron neighbor counts (i.e., the number of polyhedra that have a specific number of neighbors) for polyhedra that do not contain training data (gray) and ones that do (colored by total number of data points contained in those polyhedra).}
\label{fig:real-dists}
\end{figure}

We observe a difference in the distributions of neighbor counts between polyhedra that contain data and those that do not. We test networks trained on three datasets: California Housing (CC 0 License)~\citep{kelley_pace_sparse_1997}, MNIST (CC BY-SA 3.0 License)~\citep{deng_mnist_2012}, and CIFAR10 (MIT License)~\citep{krizhevsky_learning_2009}, and achieve reasonable performance for each (AUC above 0.9 or $R^2$ above 0.6). We examine the last 3 layers of 8 neurons for MNIST and 2 layers of 64 neurons for CIFAR10 on a lower-dimensional hidden representation rather than the input, 5 dimensions for MNIST and 10 for CIFAR10. For California Housing, we calculate the complex of the entire network. Details about the datasets and networks can be found in Appendix \ref{sec:exp-setup}. Algorithm \ref{alg:poly-bfs} was used to identify all polyhedra in the complex for MNIST.
For the California Housing and CIFAR10 datasets, complete enumeration of the network complex was intractable, so the search was terminated after traversing 8 million polyhedra. We then randomly sample 10,000 points from the training data. If a data point does not lie in one of the polyhedra found in the initial search, we calculate the new polyhedron that contains the point and add it to the 8 million that were already found. The distributions of neighbor counts for these complexes can be found in Fig.~\ref{fig:real-dists}. Across all datasets, the neighbor counts for polyhedra containing training data tend to be higher than the upper bound for the average neighbor count of all polyhedra. Since the number of faces of any polyhedron is bounded above by $n$, this necessarily reduces the rightward skew of the distribution as well.

 We also examine how neighbor counts vary according to whether polyhedra are bounded or unbounded, with results shown in Fig.~\ref{fig:finite-dists}. In all three experiments, we observe that polyhedra with higher numbers of neighbors are more likely to be unbounded (darker colors toward the right of each histogram, with the exception of polyhedra with $d$ neighbors shown by the leftmost bars, which are always unbounded). In addition, we find that the proportion of unbounded polyhedra in data-containing regions is higher than the overall proportion for both classification tasks (the top two histograms show darker bars than the corresponding bottom figures) but lower for the regression task (the top histogram bars have lighter colors than the bottom).
 For the classification tasks, the network may have to focus its complexity on the spaces between classes of data points where it has to draw the decision boundary, leaving more of the data points themselves on the outer (unbounded) regions of the complex. On the other hand, for regression, the model is focused on fitting the data points, so data points tend to lie more on bounded regions with finite function values. 
 Additional results from these experiments are included in Appendix \ref{sec:extended-results}. 
\vspace{2ex}
\usetikzlibrary{positioning}

\begin{figure}[h]
\centering
\begin{tikzpicture}[node distance=0.2cm]
    \node[inner sep=0] (img1) {\includegraphics[width=0.28\textwidth]{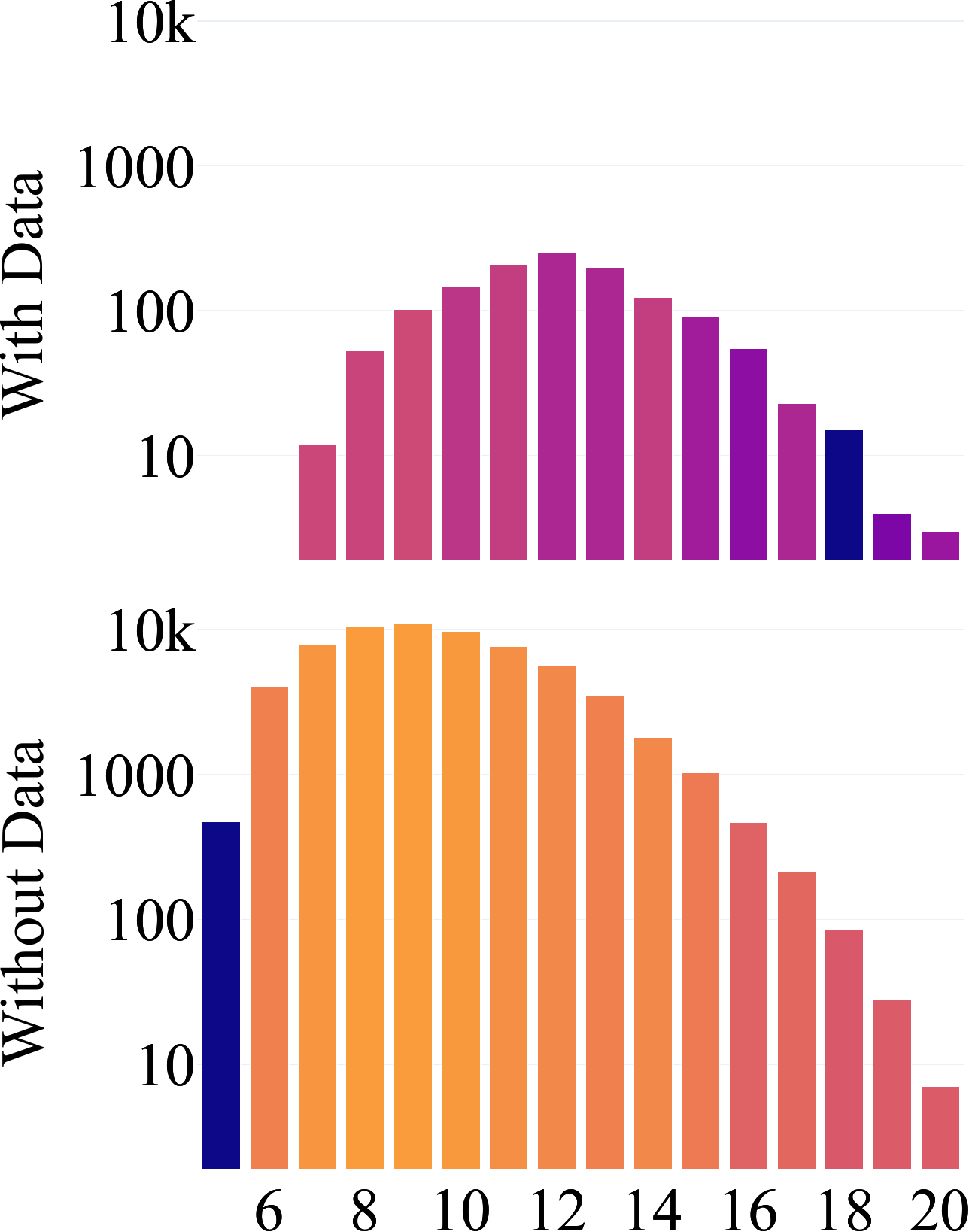}};
    
    \node[inner sep=0, right=of img1] (img2) {\includegraphics[width=0.28\textwidth]{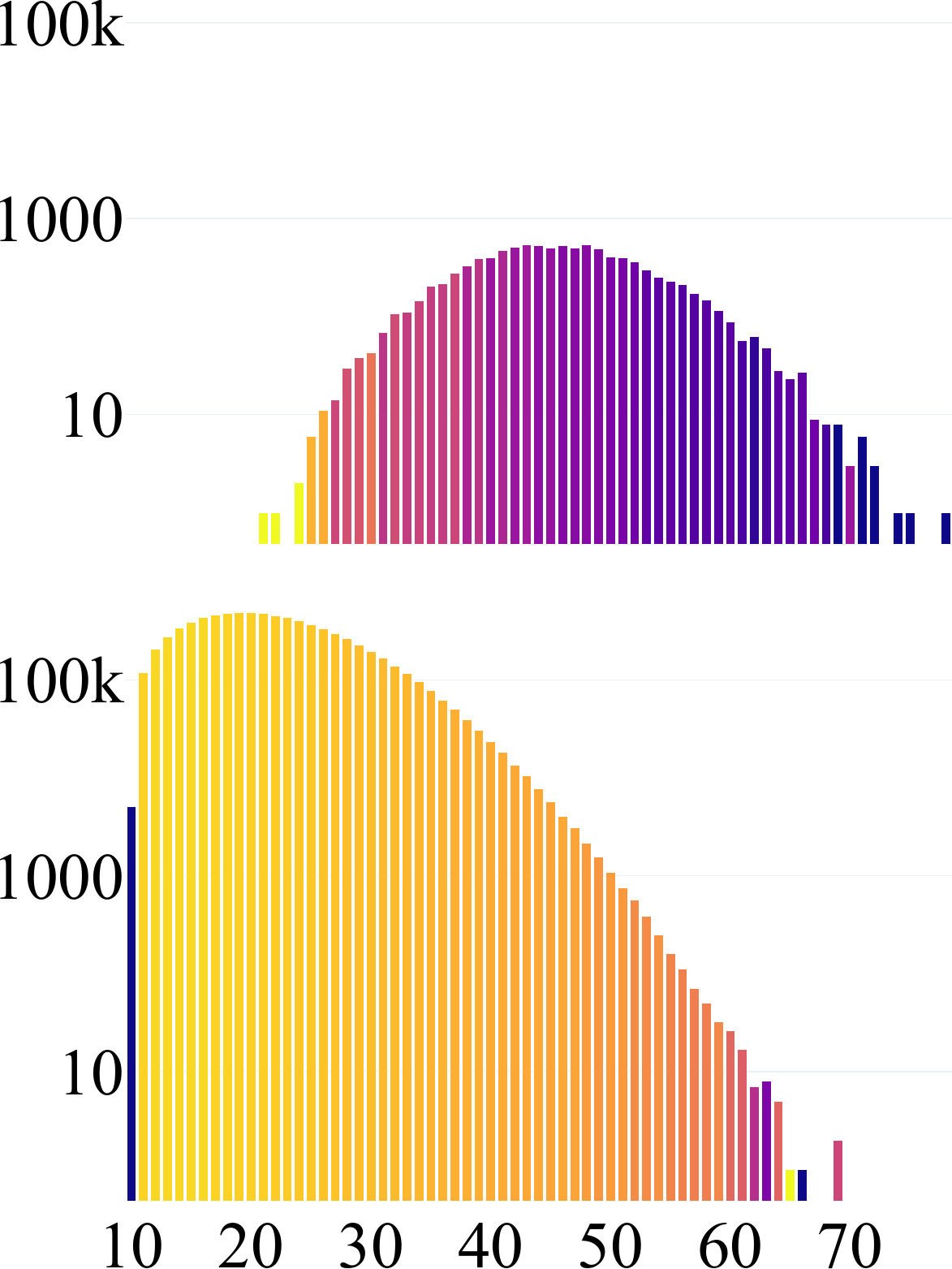}};
    
    \node[inner sep=0, right=of img2] (img3) {\includegraphics[width=0.28\textwidth]{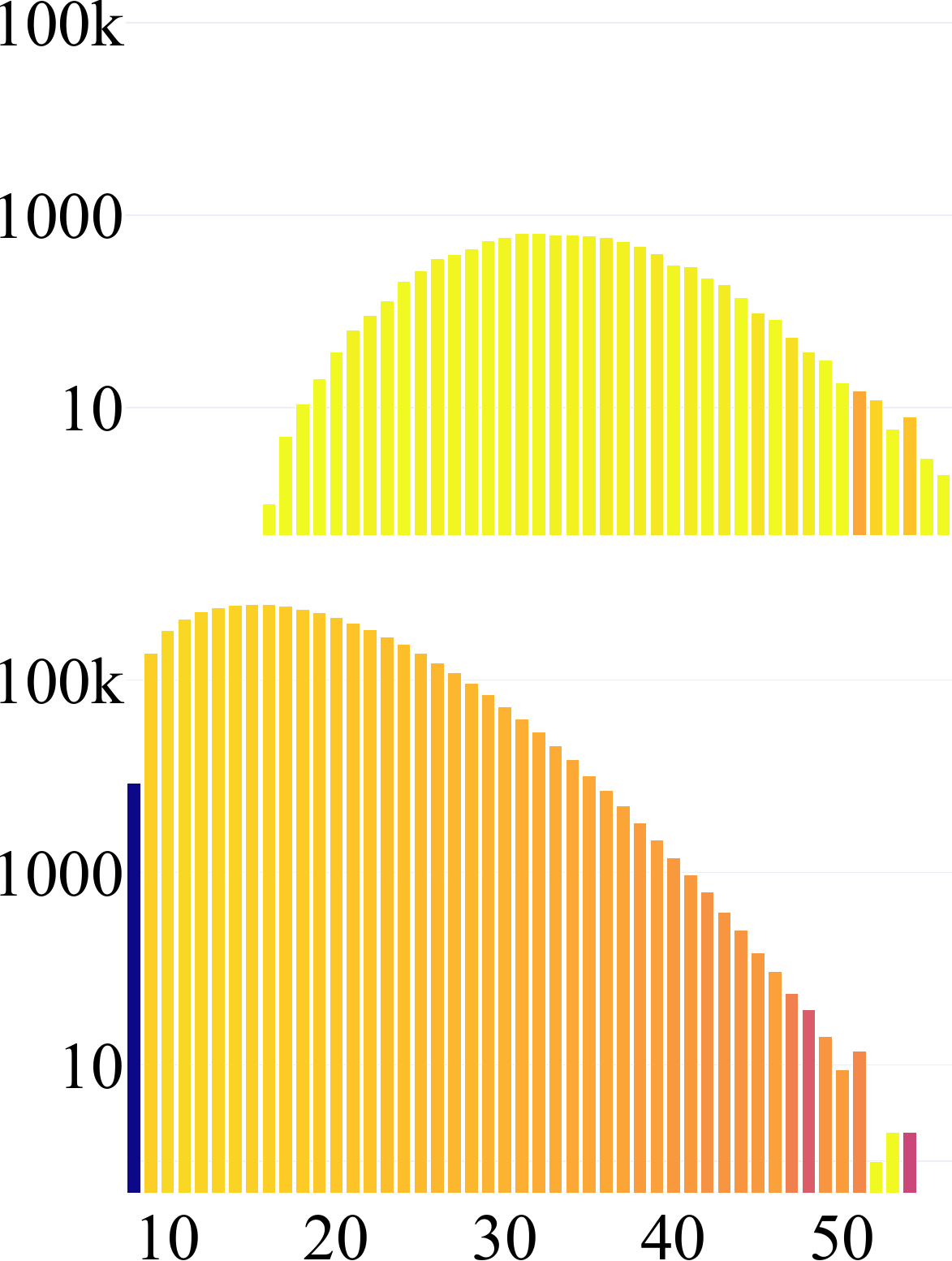}};
    
    \node[inner sep=0, right=of img3] (legend) {\includegraphics[width=0.07\textwidth]{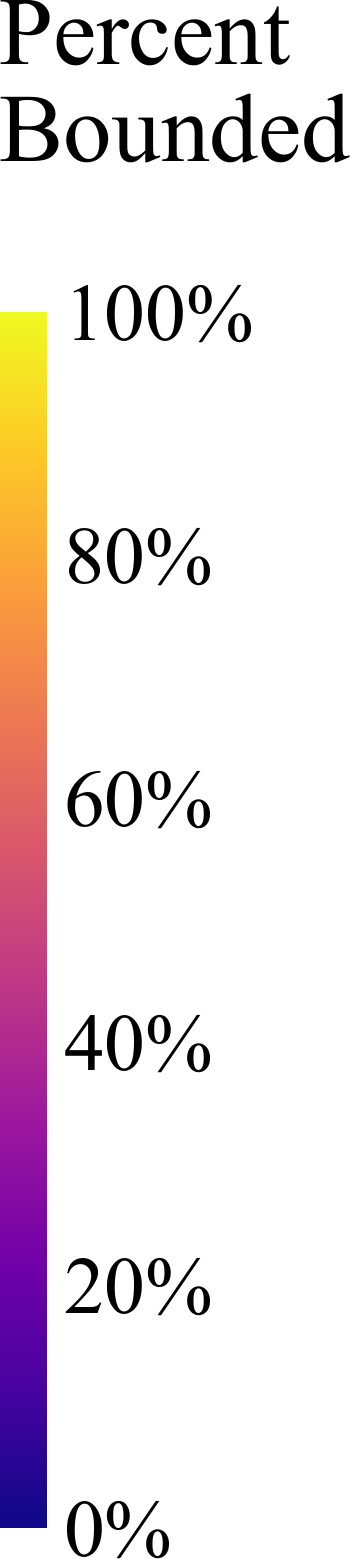}};

    \node[below=0.02cm of img2, xshift=0.1cm] (xlabel) {\scriptsize Number of Neighbors};

    \node[below=-0.12cm of xlabel] (capB) {\small (b) CIFAR10};

    \node[inner sep=0, xshift=0.4cm] at (img1.south |- capB) (capA) {\small (a) MNIST};
    \node[inner sep=0, xshift=0.1cm] at (img3.south |- capB) (capC) {\small (c) CH Reg};

\end{tikzpicture}
\vspace{-1ex}
    \caption{Histograms of polyhedron neighbor counts, separated into those that contain data points (top) and those that do not (bottom). The vertical axis gives the number of polyhedra and each bar is colored by the percentage of polyhedra among those with the same neighbor count that are bounded.
    }
\label{fig:finite-dists}
\end{figure}

\section{Discussion and Future Work}

This work characterizes general geometric properties of the polyhedral complexes defined by ReLU networks. For the first time, we place bounds on both the average connectivity of this complex and its graph-theoretic diameter. We also conduct empirical studies that visualize the distributions of polyhedron connectivity, and show that training data tends to lie on polyhedra with higher-than-average connectivity. 

There are several limitations to the work presented here. Further investigation is needed to fully explain why training tends to put data points in regions with higher numbers of faces, and how this phenomenon is related to the network's behavior. Additionally, we are not yet able to describe how more specific network structures like convolutional layers and skip connections affect the network's geometry. Another limitation comes from the fact that our results only apply to ReLU activations, and while they could be extended to other piecewise-linear activation functions, there are no immediate implications for networks that use nonlinear activation functions.

Our results have implications for several active areas of study that involve the polyhedral geometry of ReLU networks. For example, the work by \cite{ji_test_2022} places a bound on empirical training error based on the spatial relationships between the regions containing the train and test data. They use Hamming distance between the sign sequences of two polyhedra as a distance metric between them. However, this metric will not reflect the case where a bent hyperplane may have to be crossed multiple times when moving from one polyhedron to another. Thus, the length of the shortest path between two polyhedra in the connectivity graph is a more suitable metric. If path length is used, Theorem \ref{thm:diameter-bounds} allows us to bound the empirical error based on the network architecture and independently of the input dimension.

\clearpage
\bibliographystyle{iclr2026_conference}
\bibliography{references}
\newpage 

\appendix

\section*{Appendix}
\section{Detailed Related Work}
\label{sec:detailed-related-work}
Several works have established lower and upper bounds on the {\em maximum} possible number of polyhedra (the regions in Fig.~1(b)) in ReLU-type networks (networks with piecewise-linear activation functions) in terms of input dimension, depth and width of the network~\citep{pascanu_number_2014, montufar_number_2014, montufar_sharp_2022, serra_bounding_2018,montufar_sharp_2022,goujon_number_2024}. For example, when the hidden layer width is always larger than the input dimension, the number of linear regions grows exponentially with respect to both the input dimension and the number of hidden layers~\citep{montufar_number_2014}. Another direction bounds from above the {\em expected} number of polyhedra over all networks of the same architecture \citep{hanin_deep_2019}. These bounds have been compared in a recent survey~\citep{huchette_when_2023}, which summarizes how lower bounds are found by constructing networks with many polyhedral regions while upper bounds are found by determining the effect of adding additional layers of various sizes to existing networks. The number of polyhedra affects how the network transforms a trajectory in the input space after each layer of mapping \citep{hanin_complexity_2019}, and the length of the transformed trajectory reflects the network expressivity~\citep{raghu_expressive_2017}.
Several theoretical studies characterize the polyhedral structure of ReLU-like networks. A number of them have established links to tropical geometry~\citep{zhang_tropical_2018,fournier_tropical_2019,charisopoulos_tropical_2019,trimmel_tropex_2021, lezeau_tropical_2024}, which can be used to bound the total number of polyhedra and provide implicit formulas for the networks' decision boundaries. 
Analysis of ReLU networks as max-affine spline operators~\citep{balestriero_mad_2018,balestriero_spline_2018,balestriero_geometry_2019} has identified a correspondence between the boundaries of the network's polyhedral regions as roots of a polynomial. Although this does not provide a direct way to see how the polyhedra fit together, this theory has been successfully applied in detecting toxicity in large language models by analyzing them individually~\citep{balestriero_characterizing_2024}. 
Topology has been essential for describing how the architecture of a network affects its geometry and measuring its complexity in terms of Betti numbers~\citep{grigsby_transversality_2022,grigsby_local_2024}. 
An activation state is a sign vector indicating whether the ReLU in the corresponding hidden neuron has been activated for an input. A topological approach proves that there is a 1-1 correspondence between the network's activation states and its polyhedral regions and can calculate the polyhedron boundaries using this description of its combinatorial structure~\citep{masden_algorithmic_2025}. This paper builds on top of this work, which constructs the robust theoretical framework we use to study ReLU geometry (see Section \ref{sec:preliminaries} for more details). A more efficient method for computing the boundaries is developed later on~\citep{berzins_polyhedral_2023}, which builds on a previous method that can determine the architecture and parameters of ReLU networks via sampling~\citep{rolnick_reverse-engineering_2020}. These works all provide tools for deeper analysis of network geometry, but they have not examined the connectivity of the polyhedra, a fundamental property of the complex.

There are also empirical studies that calculate the polyhedral complex and demonstrate a number of practical applications. Due to the intractability of enumerating all polyhedra, existing methods iteratively solve linear programs (LPs) that locally search confined regions of the input space~\citep{liu_integrating_2023, xu_traversing_2022, liu_relu_2023} to count polyhedra or perform reachability analysis~\citep{ter_beek_star-based_2019, yang_reachability_2020}. 
The behavior of ReLU networks on bounded input spaces can also be exactly described with mixed-integer LPs with decision variables representing activation states, leading to a significant body of work improving the formulations of the networks with constraints~\citep{huchette_nonconvex_2023, tsay_partition-based_2021,anderson_strong_2019} and applying them to problems such as verification~\citep{botoeva_efficient_2020, cheng_maximum_2017, bunel_unified_2018} and inverse design~\citep{ansari_mixed_2022}. 
Network architecture has been shown to influence the volume and distribution of the polyhedra~\citep{zhang_empirical_2019}, which are then applied to quantify how the network generalizes to new data~\citep{ji_test_2022,lehmann_calculating_2022,daroczy_tangent_2020}. ReLU networks can be modified at the polyhedral level with the aim of improving explainability~\citep{zhu_polyhedron_2023, hu_understanding_2021}. Other works study their geometric structure through the lens of activation states of the network on various inputs. For instance, a few methods use the Hamming distance between activation states as a proxy for similarity between the  data points in different polyhedra, with various downstream applications ranging from data compression~\citep{he_neural_2021} to open set detection~\citep{jamil_hamming_2023} and clustering~\citep{craighero_investigating_2020}. However, empirical studies have not examined the connectivity of the polyhedral complex and how it relates to the distribution of training data.

\section{Proofs of Theoretical Results}
\begin{definition}[Hypercube Graph] A $k$-hypercube graph is a graph formed from the vertices and edges of a $k$-hypercube. It can also be constructed as a graph with nodes corresponding to every string of $k$ bits and edges between nodes with strings that differ by one bit.
\end{definition}
\begin{definition}[Polyhedron] A polyhedron is a (possibly unbounded) intersection of halfspaces, or equivalently, the set of solutions to a system of linear inequalities of the form $\set{x \in \mathbb{R}^d \colon \Phi x + \beta \leq 0}$ for some real $(n, d)$-matrix $\Phi$ and some $\beta \in \mathbb{R}^n$.
\end{definition}
\begin{definition}[Polyhedral Complex] A Polyhedral Complex $\mathcal{C}$ is a set of polyhedra satisfying the following properties:
\begin{enumerate}
    \item Closure under intersection, that is, $c_1,c_2 \in \mathcal{C}$ implies $c_1 \cap c_2 \in \mathcal{C}$. Note that this can be the empty set.
    \item Closure under taking faces. That is, if $c \in \mathcal{C}$ and $c'$ is a face of $c$, then $c' \in \mathcal{C}$
\end{enumerate}
\end{definition}

We restrict our discussion to ReLU networks with the following two properties:

\textbf{Genericity} (\cite{grigsby_transversality_2022}, Definition 2.4): A hyperplane arrangement in $\mathbb{R}^d$ is called \textit{generic} if all sets of $k$ hyperplanes intersect in an affine space of dimension $d-k$ for $1 \leq k \leq d$. In a ReLU network, each activation region is the intersection of half-spaces created by each neuron in the network, the boundaries of which form a hyperplane arrangement in the layer's input space. A neural network is generic if the hyperplane arrangements corresponding to each of its regions are generic. 

\textbf{Supertransversality} (\cite{masden_algorithmic_2025}, Definition 11): Let $f$ be a ReLU network with complex $\mathcal{C}$ and denote by $f_i \colon \mathbb{R}^{m_{i-1}} \to \mathbb{R}^{\text{out}}$ the output of the last $\ell-i$ layers of $f$ with complex $\mathcal{C}_i \subset \mathbb{R}^{m_{i-1}}$. Denote by $H_i$ the polyhedral complex in $\mathbb{R}^{m_{i-1}}$ created by the hyperplane arrangement $\set{x \in \mathbb{R}^{m_{i-1}} \colon \pi_j(W^{i-1}x+b) = 0}$ where $\pi_j$ is the projection onto the $j$-th coordinate for some $1 \leq j \leq m_i$. Suppose for each layer $1 \leq i \leq \ell$ and each $1 \leq k \leq d$ that the restriction of $f_i$ to the interior of every $k$-cell of the complex of $H_{i-1}$ is transverse to the interior of all cells of $\mathcal{C}_i$. Then, the network $f$ is called \textit{supertransversal}.

Define $N_k(\mathcal{C})$ to be the number of $k$-cells in the polyhedral complex $\mathcal{C}$.
\label{appendix:lemma-proof}
\begin{reptheorem}{thm:d-face-avg-facets}
    The average number of faces of a $d$-cell of $\mathcal{C}$ in $\mathbb{R}^d$ (i.e., the average degree of the connectivity graph) is at most $2d$.
\end{reptheorem}
\begin{proof}
     A sign sequence that has exactly one $0$ in its elements corresponds to a $(d-1)$-cell. Each $(d-1)$-cell forms a boundary of exactly two $d$-cells in the polyhedral complex $\mathcal{C}$, and the single $0$ in its corresponding sign sequence can be set to $-1$ or $1$ to obtain the sign sequences of the two $d$-cells it separates. Thus, the sum of the numbers of faces of every $d$-cell is just twice the number of $(d-1)$-cells, and the average number of faces of a $d$-cell of $\mathcal{C}$ is $\frac{2N_{d-1}(\mathcal{C})}{N_d(\mathcal{C})}$. We want to show that $\frac{2N_{d-1}(\mathcal{C})}{N_d(\mathcal{C})} \leq 2d$, or equivalently, $N_{d-1}(\mathcal{C}) \leq dN_d(\mathcal{C})$. We perform mathematical induction on both the dimension of the (sub)complex $d$ and the number of neurons in the network $n$.
     
\textbf{Base Case}

    If $d=1$, each $d$-cell is a $1$-cell, which can have either 0, 1, or 2 faces. Thus, the average number of faces of $d$-cells is at most 2.

If $n=1$ (the network only consists of a single neuron), the arrangement consists of a single hyperplane, so the average number of faces of a $d$-cell is $1$ and the lemma holds regardless of the value of $d$. 

\newpage

\textbf{Inductive Step}

We now prove that the statement holds for complexes of ReLU networks with $n$ neurons in $d$ dimensions if it holds for both $n-1$ neurons in $d$ dimensions and $n-1$ neurons in $d-1$ dimensions.
    
Assume for any ReLU complex $\mathcal{C}'$ with $n-1$ neurons (or $n-1$ BHs) in $d$ dimensions, we have 
    \begin{equation}
        \label{hyp_n-1_d}
        N_{d-1}(\mathcal{C}') \leq dN_d(\mathcal{C}').
    \end{equation}
and for any ReLU complex $\mathcal{C}'$ with $n-1$ neurons in $d-1$ dimensions, we have
    \begin{equation}
        \label{hyp_n-1_d-1}
        N_{d-2}(\mathcal{C}') \leq (d-1)N_{d-1}(\mathcal{C}').
    \end{equation}
Let $\mathcal{C}$ be a ReLU complex of a network $f$ with $n$ neurons in $d$ dimensions. Let $h_i$ be the BH corresponding to a particular neuron $i$ in the last layer of $f$. By Lemma \ref{lemma:k-face-ind} where $k=d$,
\begin{equation}N_d(\mathcal{C}) = N_d(\mathcal{C}-h_i)+N_{d-1}(h_i). \label{eq:face}
\end{equation}
Because $\mathcal{C}-h_i$ is a complex with $n-1$ BHs in $d$ dimensions, Eq.\eqref{hyp_n-1_d} holds. Substituting Eq.\eqref{hyp_n-1_d} into Eq.\eqref{eq:face} yields
\begin{equation}N_d(\mathcal{C}) \geq \frac{1}{d}N_{d-1}(\mathcal{C}-h_i)+N_{d-1}(h_i). \label{eq:step1}
\end{equation}
Lemma \ref{lemma:k-face-ind} also applies to $N_{d-1}(\mathcal{C}-h_i)$ where $k=d-1$. Substituting it into Eq.\eqref{eq:step1} yields
    $$N_d(\mathcal{C}) \geq \frac{1}{d}\left[N_{d-1}(\mathcal{C})-N_{d-1}(h_i)-N_{d-2}(h_i)\right]+N_{d-1}(h_i),$$
which can be equivalently written as:
\begin{equation}N_d(\mathcal{C}) \geq \frac{1}{d}N_{d-1}(\mathcal{C})-\frac{1}{d}\left[N_{d-1}(h_i)+N_{d-2}(h_i)\right]+N_{d-1}(h_i). \label{eq:step2}
\end{equation}
To prove that $N_{d-1}(\mathcal{C}) \leq dN_d(\mathcal{C})$, it is equivalent to prove that $N_d(\mathcal{C}) \geq \frac{1}{d}N_{d-1}(\mathcal{C})$, and so it is sufficient to show that the remaining terms on the right-hand side of Eq.\eqref{eq:step2} are at least $0$. That is,
    $$N_{d-1}(h_i) \geq \frac{1}{d}\left[N_{d-1}(h_i)+N_{d-2}(h_i)\right].$$
Multiplying this inequality by $d$ on both sides yields
    $$dN_{d-1}(h_i) \geq N_{d-1}(h_i)+N_{d-2}(h_i).$$
Then, subtracting $N_{d-1}(h_i)$ from both sides yields
    $$(d-1)N_{d-1}(h_i) \geq N_{d-2}(h_i).$$
Since $h_i$ is a ReLU complex in $d-1$ dimensions with $n-1$ neurons (its sign sequence contains only one $0$), the strict form of this inequality is guaranteed by Eq.\eqref{hyp_n-1_d-1}.
\end{proof}

\begin{reptheorem}{thm:faces_bound}
    For a ReLU complex in $d$-dimensional input space, the average number of faces of a $k$-cell is at most $2k$ for $k=1,2,\dots,d$.
\end{reptheorem}

\begin{proof}
For $k=1,\dots,d-1$, each $k$-cell of a polyhedral complex $\mathcal{C}$ corresponds to a sign sequence that has exactly $d-k$ zeros, so the points in the $k$-cell are contained in the intersection of these $d-k$ BHs.  Each $k$-cell belongs to a unique subcomplex of sign sequences where the elements corresponding to these BHs (neurons) are fixed to 0. Therefore, calculating the average number of faces of a $k$-cell involves  examining the average number of faces of the $k$-cells contained in the subcomplexes restricted to every combination of $d-k$ neurons (i.e., every intersection of $d-k$ BHs and its corresponding subset of sign sequences with $0$ in the corresponding positions). These intersections are $k$-dimensional ReLU complexes themselves. Thus, Theorem \ref{thm:d-face-avg-facets} guarantees that the average number of faces for the $k$-cells in each intersection is at most $2k$. Together, the average number of faces of the $k$-cells contained in all different intersections of $d-k$ BHs is at most $2k$.
\end{proof}

\newpage

Previous work gives a lower bound on the connectivity of the complex as follows:

\begin{lemma}[\cite{grigsby_local_2024}, Version 1, Corollary 5.29]
    \label{lemma:pointed-cplx}
    If a ReLU network has at least $d$ neurons in the first hidden layer, then every cell of $\mathcal{C}$ contains a vertex.
\end{lemma}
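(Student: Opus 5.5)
This lemma is cited from \cite{grigsby_local_2024}, so the plan is to sketch why it holds within the present framework rather than re-derive the topology. The key observation is that a polyhedron contains a vertex (is \emph{pointed}) if and only if it contains no affine line. Equivalently, its recession cone has trivial lineality space, meaning the constraint matrix defining it has rank $d$. So the approach is to show that every cell of $\mathcal{C}$ is cut out by a system of linear inequalities whose coefficient matrix has full column rank $d$.

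The first step is to look at the first hidden layer. Every cell $c$ of $\mathcal{C}$ has a sign sequence, and by \eqref{eq:2} the first $n_1$ rows of its defining system are $\mathrm{diag}(s^{(1)})W^{(1)}$, since $\Phi^{(0)}=I_d$. These rows appear (up to sign, and as equalities for zero entries) in the description of every cell, regardless of deeper layers. Deeper layers only add further inequalities, and adding constraints can never create a line inside the polyhedron. Hence $c$ is contained in the cell of the first-layer hyperplane arrangement with the same first-layer signs. It therefore suffices to show that this first-layer cell is pointed, because a subset of a pointed polyhedron contains no line either.

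The second step uses genericity. Since $n_1 \geq d$, genericity (any $k \le d$ first-layer hyperplanes intersect in an affine space of dimension $d-k$) forces any $d$ of the rows of $W^{(1)}$ to be linearly independent. So $\mathrm{rank}\, W^{(1)} = d$, and the lineality space $\{v : W^{(1)}v = 0\}$ is trivial. Thus no cell of the first-layer arrangement contains a line, and neither does $c$. A nonempty closed polyhedron with no lines has at least one extreme point, which is a $0$-dimensional face. Because $\mathcal{C}$ is closed under taking faces, that face is a $0$-cell of $\mathcal{C}$ lying in $c$.

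The main obstacle is justifying that each cell of $\mathcal{C}$ is genuinely a polyhedron whose faces are exactly the cells of $\mathcal{C}$ it contains. In particular, the extreme point must be a vertex of the complex $\mathcal{C}$, not merely a geometric extreme point of some coarser description. I would handle this with the sign-sequence injectivity and face structure from \citep{masden_algorithmic_2025}. The extreme point $v$ of $c$ is a face of $c$. By closure of $\mathcal{C}$ under faces, $v$ is itself a cell, and being $0$-dimensional it is a $0$-cell. This relies on the cells of $\mathcal{C}$ being closed polyhedra, which the paper's definition of a polyhedral complex gives.
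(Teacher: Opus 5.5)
Your argument is correct. The paper gives no proof of this lemma; it imports it from \cite{grigsby_local_2024}. So your sketch is a self-contained justification rather than an alternative to an argument in the text.

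Your route is the standard ``a refinement of a pointed complex is pointed'' argument:
\begin{itemize}
\item a nonempty polyhedron $\{x : Ax \le b\}$ has a vertex iff its lineality space $\ker A$ is trivial;
\item every cell of $\mathcal{C}$ lies in a closed cell of the first-layer arrangement, whose lineality space is $\ker W^{(1)}=\{0\}$, because genericity with $n_1 \ge d$ makes any $d$ rows of $W^{(1)}$ linearly independent;
\item a nonempty line-free polyhedron has an extreme point, which is a $0$-cell of $\mathcal{C}$ by closure under faces.
\end{itemize}
This makes explicit what the citation hides: the only hypothesis actually used is $\mathrm{rank}\, W^{(1)} = d$, and supertransversality of the deeper layers plays no role. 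That rank formulation also shows directly that the lemma applies to the restricted complex in $\mathcal{R}(W_1)$ used in the proof of Theorem~\ref{thm:min-degree}. There, $W_1$ restricted to $\mathcal{R}(W_1)$ is injective, so it has full rank $n_1 = \dim \mathcal{R}(W_1)$.

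The step you flag as the main obstacle is lighter than you suggest. You only need two facts. First, each cell is the closure of a sign-sequence set, and that set is a polyhedron because once the earlier-layer signs are fixed, every pre-activation is affine in $x$. Second, geometric faces of cells are cells, which is exactly the face-closure axiom in the paper's definition of a polyhedral complex. Injectivity of the sign-sequence map from \citep{masden_algorithmic_2025} is not needed.
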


If the condition of having at least $d$ neurons in the first layer is not met, the minimum degree can be lower. We generalize the previous result as follows:

\begin{reptheorem}{thm:min-degree}
    If a ReLU network has $n_1$ neurons in the first hidden layer, every $d$-cell of $\mathcal{C}$ has at least $\min(n_1, d)$ neighbors, and thus the average degree of the connectivity graph is at least $\min(n_1, d)$.
\end{reptheorem}

\begin{proof}
    If $n_1 \geq d$, then Lemma \ref{lemma:pointed-cplx} implies every cell of $\mathcal{C}$ contains a vertex. Equivalently, every node in the connectivity graph is part of a $d$-hypercube graph, and the graph has minimum degree $d$. On the other hand, assume that $n_1 \leq d$.
    By the assumption of genericity, $W_1 \in \mathbb{R}^{n_1} \times \mathbb{R}^{d}$ has rank $n_1$. Thus, the space spanned by its row vectors $\mathcal{R}(W_1)$ (the normal vectors of the first layer's hyperplanes) has dimension $n_1$ and its null space $\mathcal{N}(W_1)$ has dimension $d-n_1$. Since the null space and row space of $W_1$ are orthogonal complements, every point $x \in \mathbb{R}^d$ can be written uniquely as $x=v+k$ for $v \in \mathcal{R}(W_1)$ and $k \in \mathcal{N}(W_1)$. However, since $W_1k=0$ for all $k \in \mathcal{N}(W_1)$, we really have that $W_1x+b=W_1(v+k)+b=W_1v+W_1k+b=W_1v+b$. This shows that for a fixed $v$, $f(x)=f(v+k)$ remains constant for all $k \in \mathcal{N}(W_1)$. It then immediately follows that no polyhedron boundaries are ever crossed when $k$ varies. Thus, to count the number of faces of $d$-cells in $\mathcal{C}$, we can count the number of faces of the corresponding $n_1$-cells of the complex $\mathcal{C}_{n_1}$ created by restricting the network to $\mathcal{R}(W_1)$. Applying Lemma \ref{lemma:pointed-cplx}, every $n_1$-cell in $\mathcal{C}_{n_1}$ contains a vertex (0-cell). Every 0-cell in $\mathcal{C}_{n_1}$ corresponds to a $(d-n_1)$-cell in $\mathcal{C}$, which means every $d$-cell in $\mathcal{C}$ contains a $(d-n_1)$-cell. A $(d-n_1)$-cell's node in the connectivity graph belongs to a $n_1$ hypercube subgraph, which implies that the minimum degree of the connectivity graph is at least $n_1$ when $n_1 \leq d$.
\end{proof}

\begin{figure}
    \centering
    \includegraphics[width=0.5\linewidth]{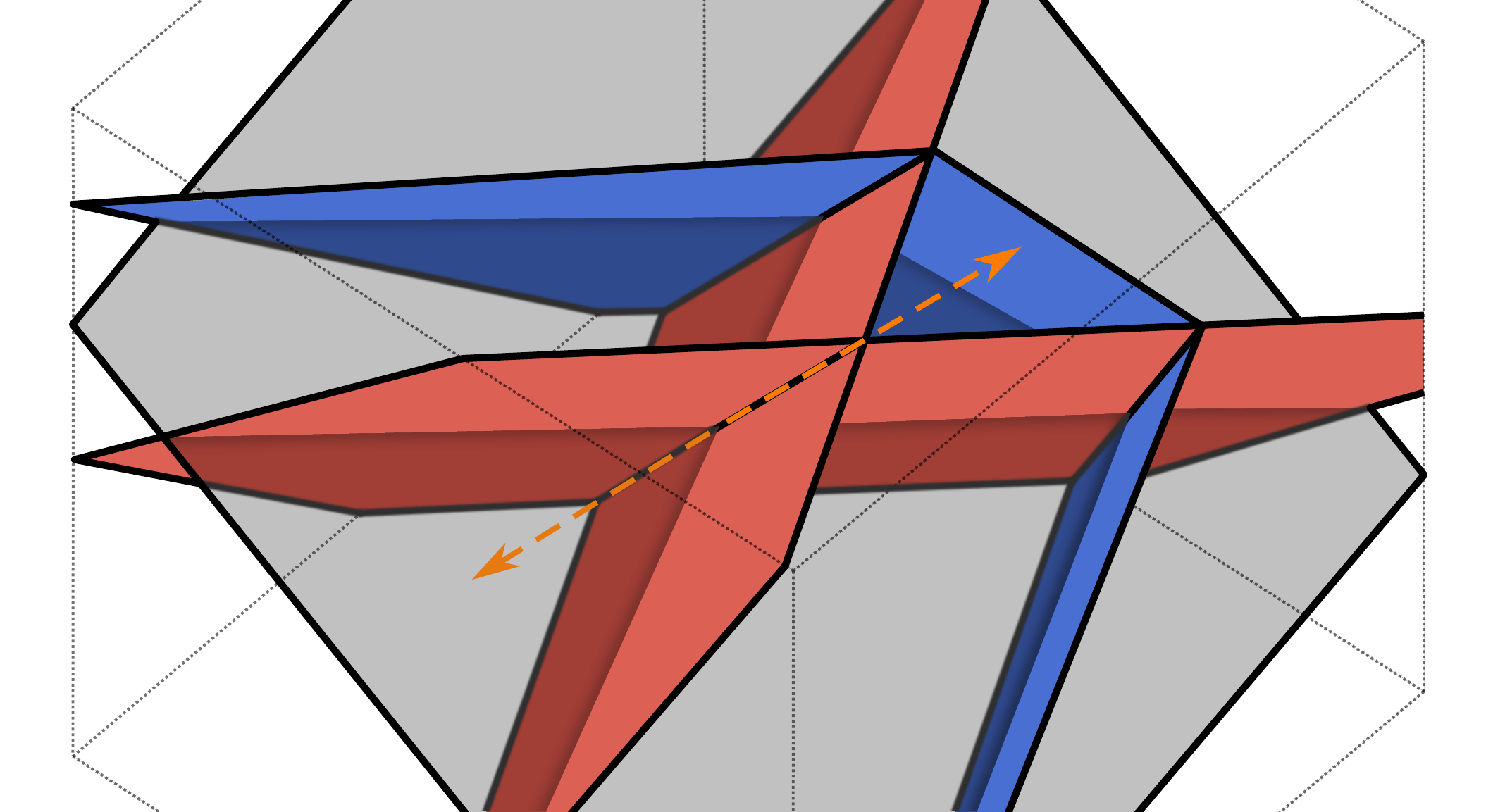}
    \caption{ReLU complex illustrating Theorem \ref{thm:min-degree} with $d=3$ and $n_1=2$, when there are no vertices in the complex. The first layer neuron's hyperplanes are shown in red, and the space orthogonal to their intersection ($\mathcal{R}(W_1)$) is shown in gray. A second layer BH is also shown in blue. Regardless of how we translate the gray plane, its intersection with all the BHs will be identical, and the blue second layer BH will never intersect the intersection of the two first-layer hyperplanes. The proof works by showing that as you move in a direction orthogonal to the intersection of the first-layer hyperplanes (shown in orange), the distance to the first layer hyperplanes stays the same, and so the output of the first layer (and thus, the remaining layers) stays the same.}
    \label{fig:placeholder}
\end{figure}

\begin{reptheorem}{thm:monotonic}
The average number of faces of $d$-cells in $\mathcal{C}_n$ increases monotonically in terms of $n$.
\end{reptheorem}
\begin{proof}
To prove this, it is equivalent to prove that adding a new BH $h_n$ to the complex (i.e., adding a new neuron to the network) creates more $(d-1)$-cells than $d$-cells,
$N_{d-1}(\mathcal{C}) - N_{d-1}(\mathcal{C} - h_n) > N_{d}(\mathcal{C}) - N_{d}(\mathcal{C} - h_n)$.
Lemma \ref{lemma:k-face-ind} with $k=d-1$ shows that the left-hand side $N_{d-1}(\mathcal{C})-N_{d-1}(\mathcal{C}-h_n)=N_{d-1}(h_n)+N_{d-2}(h_n)$. Applying the lemma with $k=d$ shows that the right-hand side $N_{d}(\mathcal{C}) - N_{d}(\mathcal{C} - h_n) = N_{d}(h_n)+N_{d-1}(h_n)$, but $N_{d}(h_n)=0$, so the right-hand side of the equation just equals $N_{d-1}(h_n)$. Substituting these equations into the inequality on both sides yields
$N_{d-1}(h_n)+ N_{d-2}(h_n) > N_{d-1}(h_n)$.
Then, subtracting $N_{d-1}(h_n)$ shows the inequality is equivalent to 
$N_{d-2}(h_n) > 0$.
Since $(d-2)$-cells are created whenever $h_n$ intersects other BHs in the complex, which must happen at least once if any $d$-cells are created, so this inequality is true. Since after the $d$-th term the sequence is monotonically increasing and bounded below and above by $d$ and $2d$ respectively, the average number of faces must converge to a value between these bounds.
\end{proof}

\begin{reptheorem}{thm:asymptotic}
Let $f$ be a shallow network that has only one hidden layer with $n$ nodes. When $n$ goes to infinity, the average number of faces of its $d$-cells converges exactly to $2d$. That is, 
$$\lim_{n \to \infty}\frac{2N_{d-1}(\mathcal{C}_n)}{N_d(\mathcal{C}_n)} = 2d.$$
\end{reptheorem} 

\begin{proof}
If the network is shallow, the BHs of the neurons form a generic hyperplane arrangement. According to \cite{buck_partition_1943}, the number of $k$-cells in the generic hyperplane arrangement $\mathcal{C}$ in $d$ dimensions is given by $N_k(\mathcal{C}) = \sum_{i=d-k}^d \binom{n}{i}\binom{i}{d-k}$. Intuitively, the first term in the sum counts subsets of hyperplanes that intersect in at least $k$ dimensions, and the second term counts the number of subsets in each of these sets intersect to form a $k$-face. Therefore, we have that

$$\frac{2N_{d-1}(\mathcal{C}_n)}{N_d(\mathcal{C}_n)}
=2\frac{\sum_{i=1}^d \binom{n}{i} \binom{i}{1}}{\sum_{i=0}^d \binom{n}{i}}$$

As $n \to \infty$, the numerator is dominated by $d\binom{n}{d}$ and the denominator is dominated by $\binom{n}{d}$. Therefore, the expression converges to $2d$.
\end{proof}

Let $\ell$ be the total number of hidden layers (depth), $m_j$ be the number of nodes in layer $j$, $j=1,\dots,\ell$, and $m = \max\{m_1,\cdots, m_\ell\}$ (width).

\begin{reptheorem}{thm:diameter-bounds}
    The diameter $D$ of a ReLU complex's connectivity graph is $\Omega \left(\frac{\ln(N_d(\mathcal{C}))}{\ln(n)}\right)$ and $O\left({m}^{\ell}\right)$.
\end{reptheorem}

\begin{proof}
For the lower bound, we can use the fact that the maximum degree of the connectivity graph is the number of hidden neurons $n$, as a single BH cannot form more than one face of a polyhedron. The Moore bound~\citep{diestel_graph_2017} implies that the number of vertices in the connectivity graph is at most $1+n+n(n-1)+\dots+n(n-1)^{D-1}<n(n-1)^D$. We can rearrange this inequality as follows: $N_d(\mathcal{C}) < n(n-1)^D$ becomes $\ln(N_d(\mathcal{C})) < \ln(n(n-1)^{D})$, and $n,D \geq 1$, so $D > \frac{\ln(N_d(\mathcal{C}))}{\ln(n(n-1))}$.

To get the upper bound, consider two $d$-cells $p_1, p_2 \in \mathcal{C}$.
If the network only consists of one layer, we can easily find a path of polyhedra from $p_1$ to $p_2$ in this hyperplane arrangement by drawing a straight line from a point within $p_1$ to a point within $p_2$ and flipping the sign of each element in $\mathcal{S}(p_1)$ every time we cross its BH.
Each flip then yields the sign sequence of the next polyhedron on the path, and since the BHs are just hyperplanes, the line is guaranteed to cross each of the BHs on the path and flip the corresponding sign only once. For deeper networks, the problem is more difficult because BHs are not necessarily hyperplanes, and we may have to cross the same one more than once to get from one polyhedron to another. 
For a neural network with 2 layers, consider the BHs of the second layer as subdivisions of the polyhedra of the hyperplane arrangement created by the first layer. While walking along a path between two of these layer-1 polyhedra, we have to cross some of these subdivisions. Now, if we want to find a path between two polyhedra $p_1$ and $p_2$ in this network, we can still find one that only crosses each first-layer BH at most once, because we can ignore the second layer and find a path between the polyhedra from the first layer's hyperplane arrangement that contains $p_1$ and $p_2$. Inside each of the $m_1+1$ polyhedra created from only the first layer's BHs, the segments of each BH from the second layer are contained by a hyperplane. Recursively treating them as their own hyperplane arrangement, we know we can get from one side of a first-layer polyhedron to the other by crossing each second-layer hyperplane at most once. Thus, the total number of first and second layer BHs we must pass through to get from $p_1$ to $p_2$ is bounded above by $(m_1+1) \times m_2$. For deeper networks, we can continue this recursion for each layer. Then, the maximum number of faces we would have to pass through to get from one to any other is upper bounded by $\prod_{1 \leq j \leq \ell}{(m_j+1)} \leq {(m+1)}^{\ell}$. The dominant term of ${(m+1)}^{\ell}$ as $m,\ell \to \infty$ is $m^\ell$, which shows that the connectivity graph diameter is $O\left(m^\ell\right)$.
\end{proof}

This diameter upper bound is the tightest asymptotic bound possible without factoring in the dimension of the complex. For $x \in \mathbb{R}$, the following equations give a non-degenerate 1-dimensional network with width $m=2$, depth $\ell$, and more than $m^{\ell}$ linear regions. Denote by $f_{i,j}$ the $j$th neuron of layer $i$ with $1 \leq i \leq \ell$ and $j \in \set{1,2}$.

\begin{align*}
f_{1,1}(x)&=\max(x+1, 0)\\
f_{1,2}(x)&=\max(2x-1, 0) \\
f_{i,j}(x)&= 1.1jf_{i-1, 1}(x)-1.1jf_{i-1, 2}(x)-\frac{3j-2} {4^{i-1}} \\
f_{\ell, 1}(x)&=f_{r}(f_{\ell, 1}(x)-f_{\ell, 2}(x))
\end{align*}

See an interactive demonstration of this network here: 

\url{https://www.desmos.com/calculator/rc6bkezojk}.

\section{Additional Categorization Examples}
\label{sec:additional-figures}

\begin{figure}[h]
\centering
\begin{subfigure}[b]{0.3\textwidth}
    \centering
    \includegraphics[width=\linewidth]{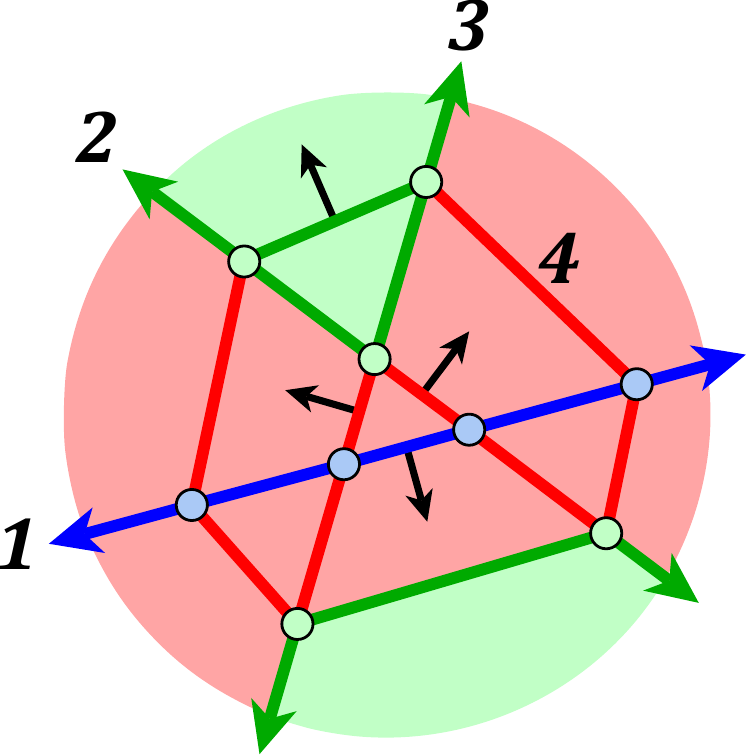}
    \caption{Coloring with $i=1$}
    \label{fig:recolor-1}
\end{subfigure}
\hfill
\begin{subfigure}{0.3\textwidth}
    \centering
    \includegraphics[width=\linewidth]{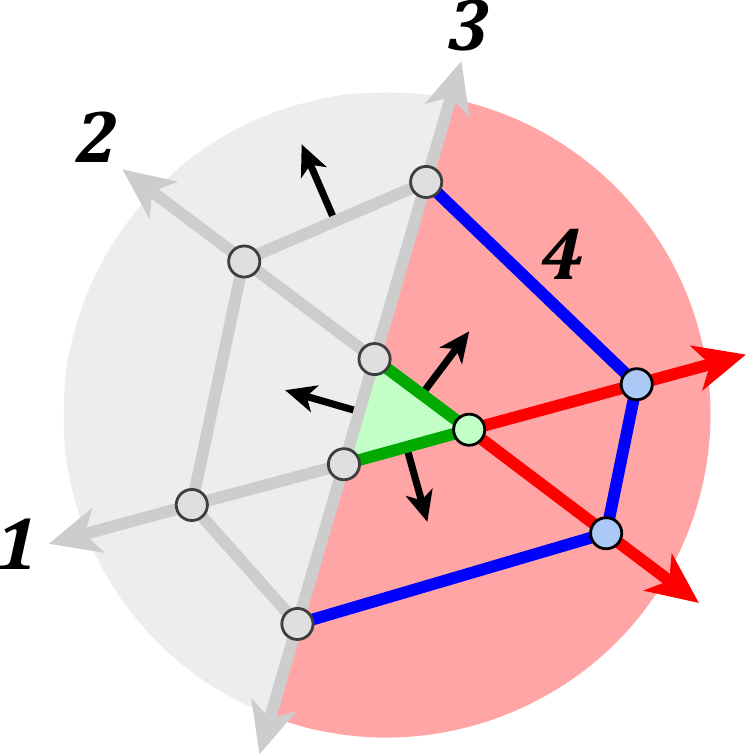}
    \caption{Coloring with $i=4$ for the subcomplex where $\mathcal{S}(c)_3=-1$}
    \label{fig:recolor-2}
\end{subfigure}
\hfill
\begin{subfigure}{0.3\textwidth}
    \centering
    \includegraphics[width=\linewidth]{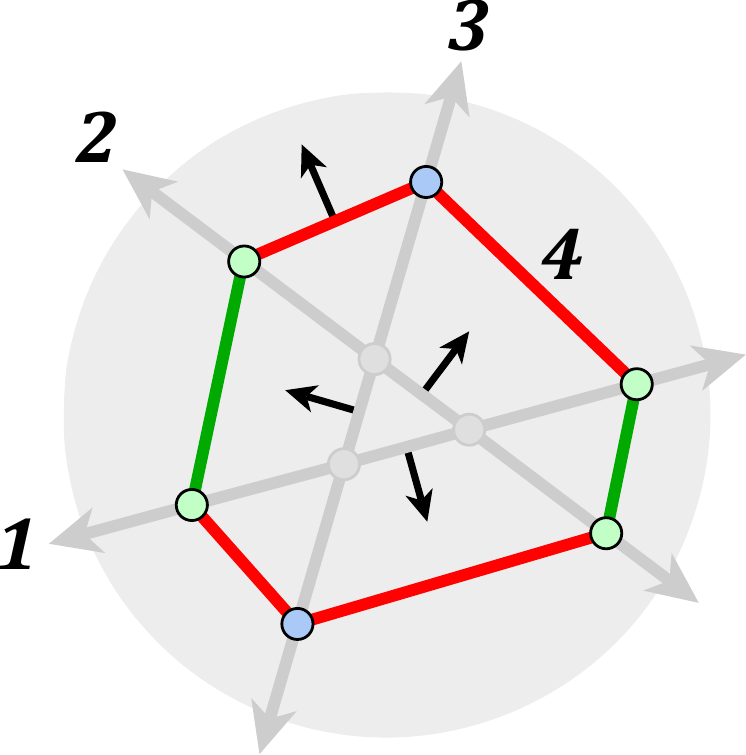}
    \caption{Coloring with $i=3$ for the subcomplex where $\mathcal{S}(c)_4=0$}
    \label{fig:recolor-3}
\end{subfigure}
    \caption{Alternate categorizations for Fig.~\ref{fig:D-Colorings} with different choices of $i$ and restrictions to different subcomplexes of $\mathcal{C}$. Cells in Categories 1, 2, and 3 are shown in blue, green, and red, respectively. Cells that are not included in the complex are shown in gray. In Fig.~\ref{fig:recolor-3}, the only $h_i$ cells are the blue vertices, and each one splits a pair of $1$-cells in the subcomplex.}
\label{fig:recolors}
\end{figure}

\section{Algorithmic Details}
\label{sec:algorithmic-details}
In this section, we describe and further explain the role of the subroutine \textsc{SolveLP}() in Algorithm \ref{alg:poly-bfs}, which was executed by Gurobi\footnote{\url{https://www.gurobi.com}} during our experiments. The \textsc{SolveLP}() function takes in inputs $-\Phi_{s_i}$, $\Phi_{s}$, and $\beta_s+e_i$, which are calculated based on $s$ according to Eq.~\ref{eq:2} and Eq.~\ref{eq:3} to determine the affine functions forming the boundary of the corresponding polyhedron. To check if a linear inequality is redundant we solve the following LP for the best $x^*$ and we evaluate whether $\Phi_{s_i}^T x^* + \beta_{s_i} \leq 0$.

\begin{alignat*}{3}
 \textsc{SolveLP}(-\Phi_{s_i},\Phi_s, \beta_s + e_i)  \; = \;\; & \text{maximize}\quad & -\Phi_i^T x& \\
 & \text{subject to} & \Phi_1^Tx &\leq -\beta_1 \\
    & & \Phi_2^Tx &\leq -\beta_2 \\
    & & & \vdots \\
    & & \Phi_{i-1}^Tx &\leq -\beta_{i-1} \\
    \text{\scriptsize Relaxed Constraint $\rightarrow$} & & \Phi_{i}^Tx &\leq -\beta_{i} + 1 \\
    & & \Phi_{i+1}^Tx &\leq -\beta_{i+1} \\
    & & & \vdots \\
    & & \Phi_{n-1}^Tx &\leq -\beta_{n-1} \\
    & & \Phi_{n}^Tx &\leq -\beta_{n} \\
\end{alignat*}

Let $p \in \mathcal{C}$ be a $d$-cell with sign sequence $s$. Our goal is to find the neighbors of $p$, which correspond to $d$-cells with sign sequences that differ from $s$ in one position. If $p$ is separated by a face from the neighbor differing in sign sequence position $i$, this face will be contained by the BH of the neuron $i$ in the network. Therefore, to find the neighbors of $p$, we can equivalently determine which neurons have BHs that contain the faces of $p$, which we can do by checking each neuron individually. As described in Section \ref{sec:preliminaries}, when $s$ is fixed and the network defines an affine function, the neuron $i$ defines a half-space (a linear inequality of the form $\Phi_{s_i}^T x + \beta_{s_i} \leq 0$) where the sign of its affine function is indicated by $s_i$. This half-space contains $p$ by definition, and the intersection of the half-spaces of all of the neurons in $f$ is exactly $p$. However, not all of the half-spaces contain faces of $p$, as some may be redundant. To test whether a specific neuron $i$'s BH forms a face of $p$, we push the boundary hyperplane of $i$'s half-space backwards along its normal vector and check whether that allows us to walk further in that direction before we leave one of the half-spaces. That is, following~\citep{fukuda_frequently_2004}, we relax the less-than inequality corresponding to neuron $i$ by adding a small value to the right-hand side and find the point $x$ in the new system of inequalities that maximizes $-\Phi_{s_i}^Tx$, where $-\Phi_{s_i}^T$ is the opposite direction of the normal vector of the hyperplane bounding the half-space. If $x^*$ violates the original inequality $\Phi_{s_i}^T x + \beta_{s_i} \leq 0$, then it is outside of $p$, so that half-space is not redundant and the face of $p$ contained by its boundary is a $(d-1)$-cell of $h_i$, which we can cross (equivalently, flip $s_i$'s sign) to get to a new polyhedron. If $x^*$ does not violate the original constraint, then the half-space defined by neuron $i$ is redundant and its BH does not contain a face of $p$, so flipping the sign of $s_i$ does not yield a neighbor of $s$ and may not even correspond to any element of $\mathcal{C}$.

\section{Adding Bent Hyperplanes to the Dual Graph}
Here we describe how the process of adding a new neuron to the network, going from $\mathcal{C}_n$ to $\mathcal{C}_{n+1}$ via the addition of BH $h_{n+1}$, can be represented in the connectivity graph. Each step is visualized in Fig.~\ref{fig:bh-cut}. When a new neuron's BH is added to an existing complex, it splits each of a connected set of existing $d$-cells in $\mathcal{C}_n$ (the red nodes identified in step 1) in half. We first disconnect the nodes representing $d$-cells getting split from the rest of the graph, which form a vertex cut (step 2). Then, we duplicate the induced subgraph of the vertex cut, with each copy representing the Category 3 $d$-cells in $\mathcal{C}_{n+1}$ formed on either side of the new neuron's BH (step 3). After that, we add a new edge between each corresponding node in the two copies, representing the Category 1 $(d-1)$-cell between each of those $d$-cells (step 4). Together, this set of new edges represents the added neuron's BH. We now have to add back the edges we deleted between the vertex cut and the rest of the graph, which represent Category 2 $(d-1)$-cells that separated the $d$-cells that got split from the $d$-cells that did not (step 5). For each edge we removed in the second step, we add a new edge from its original outside-the-cut node to one of the two copies of its original in-the-cut node, the copy representing the polyhedron in $\mathcal{C}_{n+1}$ that is on the same side of the new BH as the original outside-the-cut node. This demonstrates the monotonicity of the average number of faces as network size increases (Theorem \ref{thm:monotonic}), since every added node uniquely corresponds to at least one new added edge regardless of the value of $d$.

\vspace{2ex}

\begin{figure}[h]
    \centering
    \includegraphics[width=1\linewidth]{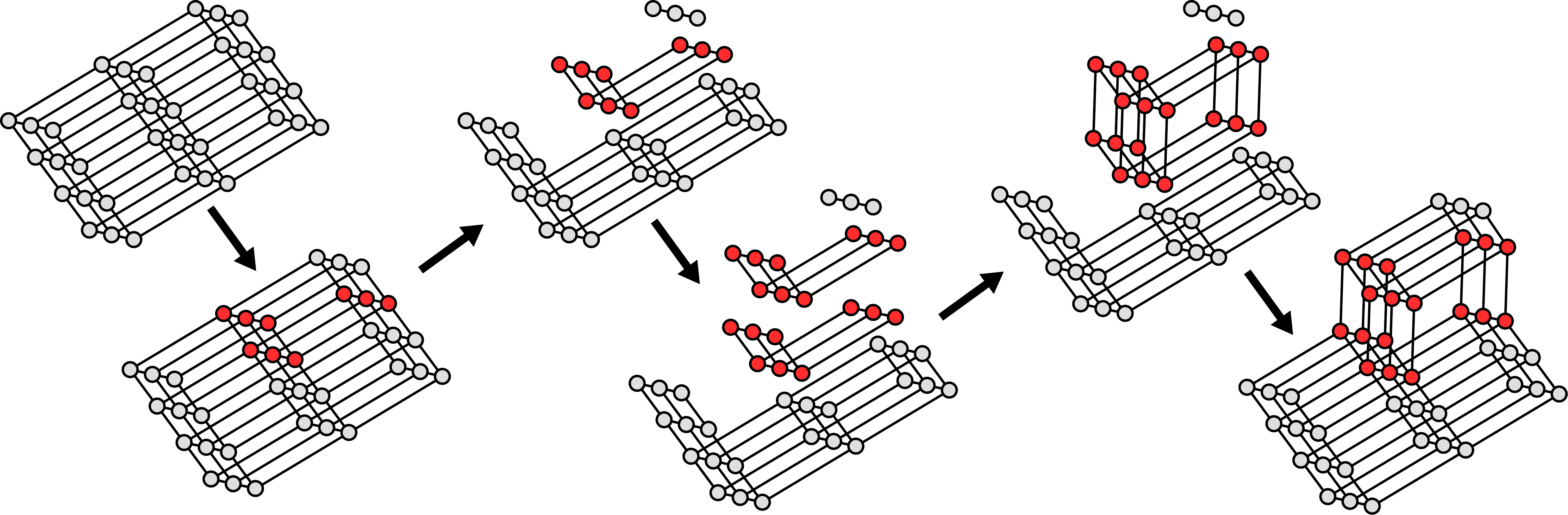}
    \caption{Connectivity graph modification when adding another final-layer neuron to a neural network with $d=3$. Red nodes represent the $d$-cells that are split in half by a new neuron's BH. Note that the complex before and after is made up of cube graphs, and the subcomplex of split $3$-cells is made up of square graphs ($2$-hypercubes).}
    \label{fig:bh-cut}
\end{figure}

\section{Experimental Details}
\label{sec:exp-setup}
For all experiments, datasets were randomly split into 80\% train and 20\% test. Models were trained using vanilla SGD with a fixed learning rate. Data from all real datasets was normalized, and labels were normalized for the regression task. All networks were trained on a single GPU. Calculating the polyhedra in their complexes was distributed across 32 processors, and took up to twelve hours for the largest networks.

For the experiments on synthetic data, all networks were trained for 10 epochs with a batch size of 64 and a learning rate of 0.01. The datasets used in these experiments were generated using Scikit-Learn\footnote{\url{https://scikit-learn.org}}. For the experiments with real data, setup information and performance metrics are listed in Table \ref{tab:network_summary}.

\vspace{0.2cm}

\begin{table*}[htb]
    \centering
    \caption{Architecture, training hyperparameters, and performance of networks trained on real-world data. For the regression task, we report coefficient of determination $R^2$ and Mean Squared Error (MSE), while for the classification tasks, we report accuracy and Receiver Operating Characteristic Area Under the Curve (AUC) on the test set.}
    \label{tab:network_summary}
    \begin{tabular}{llll}
\toprule
Dataset Name & CA Housing & CIFAR10 & MNIST \\
\midrule
Dataset Size & 20640 & 60000 & 70000 \\
Task & Regression & Classification & Classification \\
\# Classes & NA & 10.00 & 10.00 \\
Batch Size & 64 & 4 & 64 \\
Epochs & 60 & 30 & 50 \\
Learning Rate & 0.00 & 0.01 & 0.10 \\
Input Dimension & 8 & 10 & 5 \\
Hidden Layer Sizes & (128) & (64, 64) & (8, 8, 8) \\
Test Accuracy & NA & 0.64 & 0.90 \\
Test AUC & NA & 0.94 & 0.99 \\
Test $R^2$ & 0.65 & NA & NA \\
Test MSE & 0.34 & NA & NA \\
\bottomrule
\end{tabular}

\end{table*}

\vspace{0.2cm}

For MNIST and CIFAR10, we split the trained networks into two parts and regard the first part as a feature extractor. Our experiments are then performed on the second part, a classifier, which has a smaller input dimension. The California Housing dataset only has 8 features, so we regard the full network as the classifier. We report the input dimension and hidden layer sizes of the classifier networks in Table \ref{tab:network_summary}. Working with the lower input dimension allows us to calculate a larger portion of the polyhedral complex associated with the network, which gives us a more complete picture of its structure. For the MNIST network, the feature extractor consists of a single fully connected layer with a ReLU activation. For the CIFAR10 network, the feature extractor consists of two convolutional layers with $5 \times 5$ kernels computing 6 and 16 features, both followed by a $2 \times 2$ max-pooling layer and a ReLU layer, before a final fully-connected layer with a ReLU activation.
\section{Extended Experimental Results}
\label{sec:extended-results}

We performed additional experiments and included the related results here.  Fig.~\ref{fig:diam-vs-lower} illustrates the same results in Fig.~\ref{fig:diameter-vs-npolys} but with the theoretically computed lower diameter bound on the x-axis instead of the upper bound.

Fig.~\ref{fig:poly-counts-full} expands our results in Fig.~\ref{fig:poly_counts_half} to include results for networks with input dimension equal to 2 and 3 and display the average number of neighbors for each combination of width, depth, and dimension averaged over the 5 trials.

Table \ref{tab:main_table_full} provides more detailed results for the number of polyhedra, the average degree, the average volume of each bounded polyhedron, the percentage of bounded regions among all regions, and the graph diameter for different networks with input dimension between 2 and 5, numbers of hidden layers between 1 and 4, and width in $\set{4, 8, 16}$. We do observe that the number of polyhedra exponentially increases with the number of hidden layers (depth) and with the number of neurons in each layer (width). However, the average degree quickly approaches the upper bound for each dimension.

Fig.~\ref{fig:volume-dists} shows how the volume and inradius of bounded polyhedra in the MNIST complex are related to their numbers of neighbors and whether or not they contain data. Although inradius has been previously used to estimate the volumes of polyhedra in higher-dimensional complexes where exact calculations are intractable, this shows an example of a case where the two values are not closely correlated. Note that unbounded polyhedra are not included in this figure.

Fig.~\ref{fig:mnist-progress} shows how the distribution of neighbor counts in Fig.~\ref{fig:mnist-points} changes over the course of training. After each epoch, we recalculate the entire complex and check the positions of the same 10,000 data points. In this instance, we observe that as the network trains, the data is gradually surrounded by higher numbers of polyhedra with relatively many neighbors.

\begin{figure}[htbp]
    \centering
    \includegraphics[width=1\linewidth]{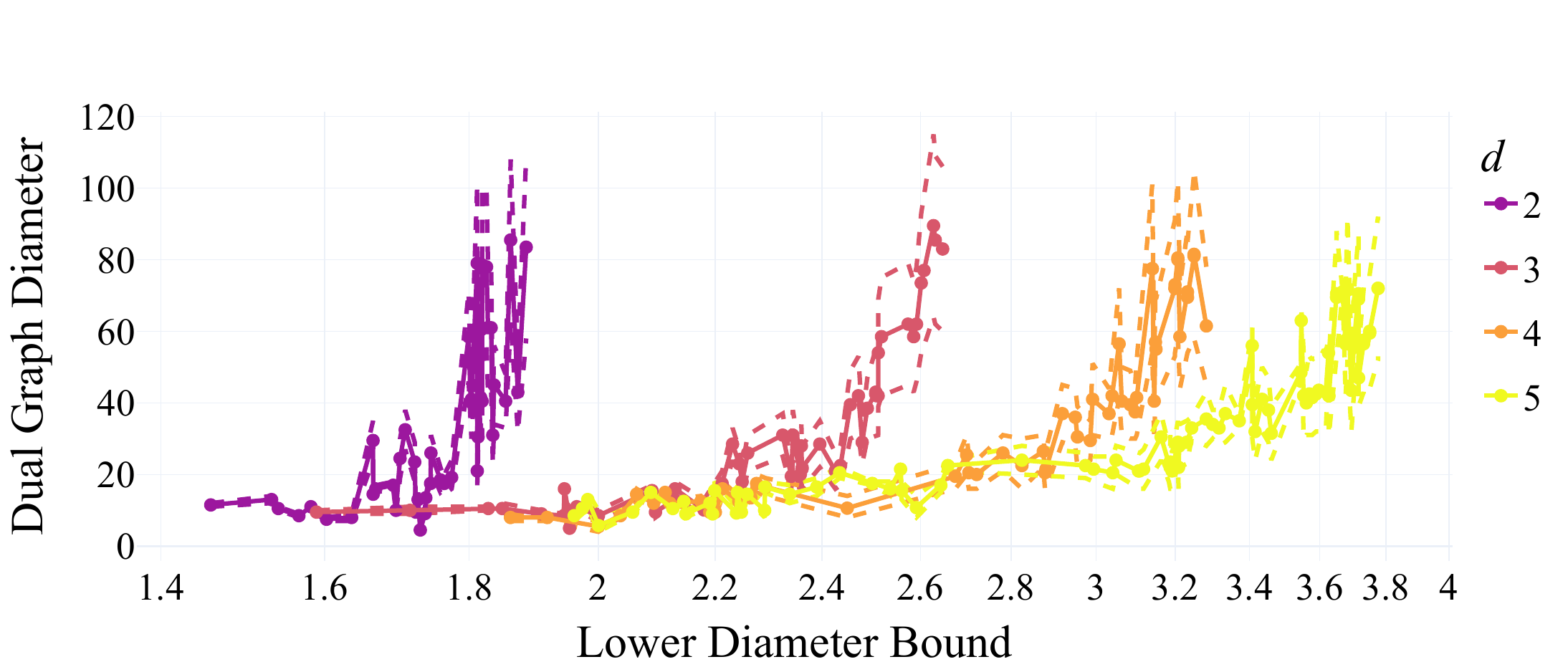}
    \caption{The same plot as Fig.~\ref{fig:diameter-vs-npolys} but with the lower bound from Theorem \ref{thm:diameter-bounds} on the $x$-axis and experiments from all widths plotted together.}
    \label{fig:diam-vs-lower}
\end{figure}

\begin{figure}[htbp]
    \centering
    \makebox[\textwidth][c]{\includegraphics[width=1.2\linewidth]{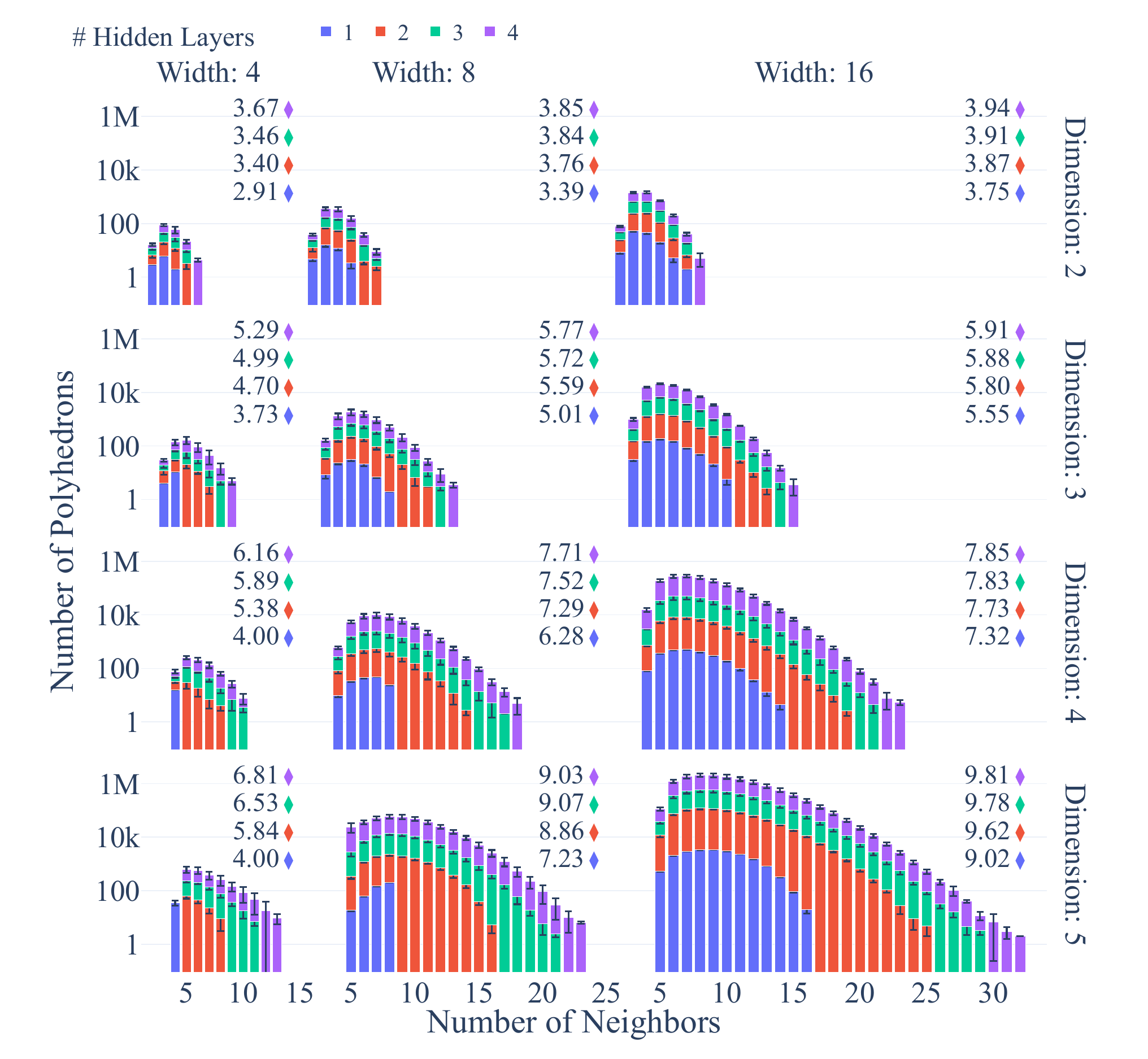}}
    \caption{Distributions for the number of faces of polyhedra in the decompositions of trained ReLU networks with varying input dimension (row), width (column), and depth (color). Each colored bar shows the average number of polyhedra with a given number of faces created by a certain architecture across 5 runs, with error bars representing the standard deviation. Diamonds represent the average number of faces in each network.}
    \label{fig:poly-counts-full}
\end{figure}

\begin{table}[t]
    \centering
    \footnotesize
    \caption{Summary statistics for the distributions in Fig.~\ref{fig:poly-counts-full}. Diameter for each complex is estimated as described in Section \ref{sec:understanding-the-bounds}.}
    \label{tab:main_table_full}
    \begin{tabular}{llllllll}
\toprule
 &  &  & \# Regions & Avg \# Facets & Average Volume & \% Finite & Diameter \\
$d$ & \# Hidden & Width &  &  &  &  &  \\
\midrule
\multirow[t]{12}{*}{2} & \multirow[t]{3}{*}{1} & 4 & 11.00$\pm$0.00 & 2.91$\pm$0.00 & 0.63$\pm$0.11 & 0.27$\pm$0.00 & 4.50$\pm$0.00 \\
 &  & 8 & 37.00$\pm$0.00 & 3.46$\pm$0.00 & 1.45$\pm$0.99 & 0.57$\pm$0.00 & 9.10$\pm$0.22 \\
 &  & 16 & 137.00$\pm$0.00 & 3.74$\pm$0.00 & 3.34$\pm$4.48 & 0.77$\pm$0.00 & 19.20$\pm$0.76 \\
\cline{2-8}
 & \multirow[t]{3}{*}{2} & 4 & 30.80$\pm$4.15 & 3.37$\pm$0.09 & 33.75$\pm$67.35 & 0.57$\pm$0.05 & 8.70$\pm$1.04 \\
 &  & 8 & 135.20$\pm$9.88 & 3.75$\pm$0.03 & 13.19$\pm$11.50 & 0.78$\pm$0.03 & 18.50$\pm$1.46 \\
 &  & 16 & 541.20$\pm$26.81 & 3.87$\pm$0.01 & 4.81$\pm$3.16 & 0.88$\pm$0.01 & 41.80$\pm$1.82 \\
\cline{2-8}
 & \multirow[t]{3}{*}{3} & 4 & 58.20$\pm$14.58 & 3.52$\pm$0.10 & 116.74$\pm$147.94 & 0.67$\pm$0.07 & 11.70$\pm$1.44 \\
 &  & 8 & 275.20$\pm$51.57 & 3.82$\pm$0.02 & 45.21$\pm$33.08 & 0.84$\pm$0.02 & 27.10$\pm$3.45 \\
 &  & 16 & 1141.20$\pm$56.99 & 3.91$\pm$0.01 & 11.43$\pm$3.63 & 0.92$\pm$0.01 & 60.40$\pm$1.08 \\
\cline{2-8}
 & \multirow[t]{3}{*}{4} & 4 & 87.80$\pm$22.86 & 3.63$\pm$0.09 & 153.52$\pm$121.53 & 0.81$\pm$0.03 & 14.40$\pm$2.22 \\
 &  & 8 & 497.20$\pm$148.28 & 3.85$\pm$0.03 & 109.09$\pm$28.45 & 0.86$\pm$0.03 & 37.00$\pm$5.72 \\
 &  & 16 & 2128.40$\pm$287.64 & 3.94$\pm$0.01 & 28.05$\pm$28.37 & 0.94$\pm$0.01 & 80.90$\pm$3.38 \\
\cline{1-8}
\multirow[t]{12}{*}{3} & \multirow[t]{3}{*}{1} & 4 & 15.00$\pm$0.00 & 3.73$\pm$0.00 & 0.49$\pm$0.65 & 0.07$\pm$0.00 & 5.00$\pm$0.00 \\
 &  & 8 & 93.00$\pm$0.00 & 4.99$\pm$0.00 & 3.17$\pm$4.20 & 0.38$\pm$0.00 & 10.10$\pm$0.42 \\
 &  & 16 & 696.80$\pm$0.45 & 5.55$\pm$0.00 & 3.08$\pm$2.35 & 0.65$\pm$0.00 & 21.80$\pm$0.27 \\
\cline{2-8}
 & \multirow[t]{3}{*}{2} & 4 & 62.00$\pm$9.77 & 4.68$\pm$0.17 & 5.07$\pm$5.18 & 0.45$\pm$0.05 & 8.80$\pm$0.45 \\
 &  & 8 & 710.00$\pm$141.76 & 5.56$\pm$0.06 & 20.26$\pm$16.75 & 0.68$\pm$0.03 & 20.20$\pm$1.68 \\
 &  & 16 & 5628.40$\pm$482.33 & 5.80$\pm$0.01 & 13.79$\pm$4.92 & 0.82$\pm$0.01 & 41.80$\pm$1.35 \\
\cline{2-8}
 & \multirow[t]{3}{*}{3} & 4 & 118.80$\pm$49.34 & 4.91$\pm$0.25 & 60.28$\pm$67.19 & 0.60$\pm$0.07 & 11.00$\pm$1.17 \\
 &  & 8 & 1806.80$\pm$572.05 & 5.70$\pm$0.08 & 56.18$\pm$21.47 & 0.76$\pm$0.07 & 26.90$\pm$2.46 \\
 &  & 16 & 2.00$\times10^4$$\pm$2871.17 & 5.88$\pm$0.01 & 30.68$\pm$12.59 & 0.89$\pm$0.01 & 59.00$\pm$3.30 \\
\cline{2-8}
 & \multirow[t]{3}{*}{4} & 4 & 291.40$\pm$145.44 & 5.13$\pm$0.33 & 314.42$\pm$189.10 & 0.66$\pm$0.18 & 14.90$\pm$3.11 \\
 &  & 8 & 4073.40$\pm$1620.28 & 5.76$\pm$0.06 & 204.91$\pm$95.20 & 0.82$\pm$0.06 & 34.20$\pm$5.75 \\
 &  & 16 & 5.49$\times10^4$$\pm$4259.65 & 5.91$\pm$0.02 & 91.56$\pm$49.71 & 0.91$\pm$0.02 & 81.70$\pm$6.45 \\
\cline{1-8}
\multirow[t]{12}{*}{4} & \multirow[t]{3}{*}{1} & 4 & 16.00$\pm$0.00 & 4.00$\pm$0.00 & 0.00$\pm$0.00 & 0.00$\pm$0.00 & 5.50$\pm$0.00 \\
 &  & 8 & 163.00$\pm$0.00 & 6.28$\pm$0.00 & 4.78$\pm$9.59 & 0.21$\pm$0.00 & 10.60$\pm$0.42 \\
 &  & 16 & 2517.00$\pm$0.00 & 7.32$\pm$0.00 & 4.33$\pm$2.61 & 0.54$\pm$0.00 & 22.50$\pm$0.35 \\
\cline{2-8}
 & \multirow[t]{3}{*}{2} & 4 & 72.60$\pm$22.70 & 5.21$\pm$0.31 & 16.85$\pm$37.67 & 0.33$\pm$0.18 & 8.70$\pm$0.76 \\
 &  & 8 & 2244.80$\pm$630.08 & 7.25$\pm$0.18 & 36.18$\pm$21.01 & 0.56$\pm$0.11 & 20.40$\pm$0.74 \\
 &  & 16 & 4.22$\times10^4$$\pm$8608.12 & 7.72$\pm$0.06 & 13.07$\pm$4.95 & 0.78$\pm$0.05 & 41.10$\pm$0.65 \\
\cline{2-8}
 & \multirow[t]{3}{*}{3} & 4 & 227.60$\pm$42.21 & 5.85$\pm$0.16 & 31.59$\pm$36.89 & 0.63$\pm$0.06 & 12.50$\pm$0.61 \\
 &  & 8 & 9340.80$\pm$3325.81 & 7.47$\pm$0.17 & 143.79$\pm$68.60 & 0.69$\pm$0.08 & 27.60$\pm$2.25 \\
 &  & 16 & 2.23$\times10^5$7.21$\times10^4$ & 7.82$\pm$0.04 & 49.91$\pm$9.37 & 0.85$\pm$0.04 & 57.70$\pm$2.46 \\
\cline{2-8}
 & \multirow[t]{3}{*}{4} & 4 & 448.00$\pm$119.14 & 6.17$\pm$0.12 & 62.72$\pm$41.18 & 0.72$\pm$0.07 & 15.90$\pm$1.19 \\
 &  & 8 & 3.58$\times10^4$$\pm$9493.85 & 7.70$\pm$0.06 & 233.75$\pm$72.31 & 0.85$\pm$0.02 & 37.40$\pm$1.29 \\
 &  & 16 & 6.24$\times10^5$9.63$\times10^4$ & 7.85$\pm$0.03 & 107.33$\pm$20.93 & 0.86$\pm$0.02 & 76.35$\pm$4.56 \\
\cline{1-8}
\multirow[t]{12}{*}{5} & \multirow[t]{3}{*}{1} & 4 & 16.00$\pm$0.00 & 4.00$\pm$0.00 & 0.00$\pm$0.00 & 0.00$\pm$0.00 & 5.50$\pm$0.00 \\
 &  & 8 & 219.00$\pm$0.00 & 7.23$\pm$0.00 & 6.80$\pm$19.51 & 0.10$\pm$0.00 & 10.75$\pm$0.26 \\
 &  & 16 & 6884.87$\pm$0.35 & 9.02$\pm$0.00 & 4.35$\pm$2.69 & 0.44$\pm$0.00 & 23.17$\pm$0.41 \\
\cline{2-8}
 & \multirow[t]{3}{*}{2} & 4 & 89.50$\pm$19.78 & 5.34$\pm$0.25 & 0.00$\pm$0.01 & 0.31$\pm$0.11 & 9.30$\pm$0.63 \\
 &  & 8 & 5802.60$\pm$1146.10 & 8.77$\pm$0.27 & 62.26$\pm$32.43 & 0.45$\pm$0.11 & 21.75$\pm$1.06 \\
 &  & 16 & 2.69$\times10^5$4.87$\times10^4$ & 9.61$\pm$0.05 & 21.69$\pm$10.54 & 0.71$\pm$0.04 & 42.57$\pm$1.53 \\
\cline{2-8}
 & \multirow[t]{3}{*}{3} & 4 & 389.60$\pm$188.02 & 5.65$\pm$0.41 & 2.11$\pm$4.30 & 0.55$\pm$0.20 & 14.65$\pm$2.32 \\
 &  & 8 & 3.66$\times10^4$1.21$\times10^4$ & 8.54$\pm$0.93 & 156.80$\pm$75.65 & 0.68$\pm$0.12 & 31.50$\pm$2.33 \\
 &  & 16 & 1.78$\times10^6$1.94$\times10^5$ & 9.78$\pm$0.04 & 39.91$\pm$17.83 & 0.82$\pm$0.04 & 57.44$\pm$1.47 \\
\cline{2-8}
 & \multirow[t]{3}{*}{4} & 4 & 1206.70$\pm$1154.00 & 5.50$\pm$0.78 & 1.38$\pm$1.78 & 0.77$\pm$0.16 & 18.60$\pm$3.98 \\
 &  & 8 & 1.82$\times10^5$7.98$\times10^4$ & 8.25$\pm$1.38 & 192.28$\pm$100.15 & 0.84$\pm$0.07 & 48.35$\pm$12.25 \\
 &  & 16 & 5.03$\times10^6$1.07$\times10^6$ & 9.80$\pm$0.03 & 152.54$\pm$58.24 & 0.84$\pm$0.03 & 70.88$\pm$1.19 \\
\bottomrule
\end{tabular}

\end{table}

\begin{figure}[t]
\centering
\includegraphics[width=0.7\linewidth]{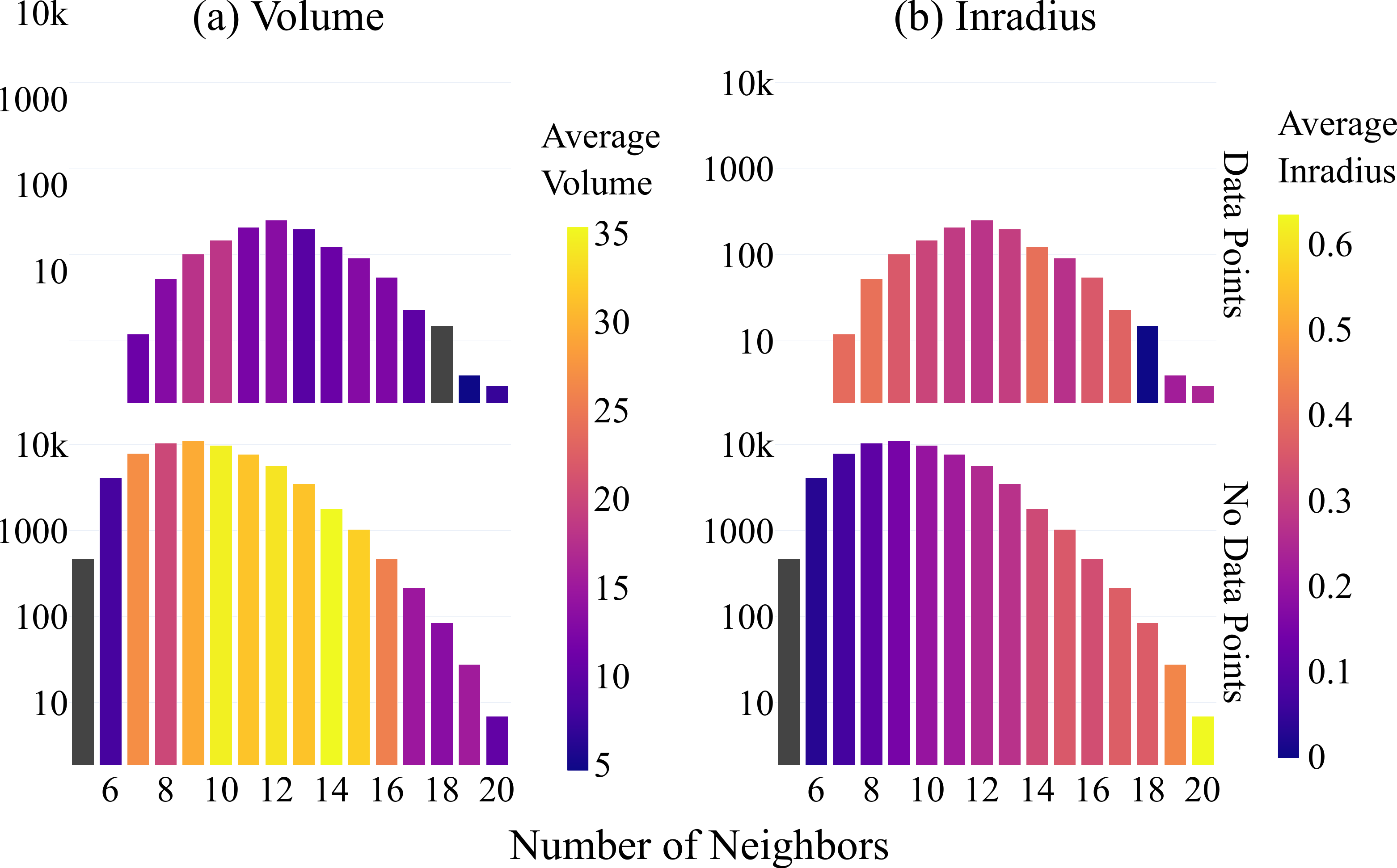}
    \caption{Histograms of \textit{bounded} polyhedron neighbor counts for all $d$-cells in the MNIST network's complex, colored according to volume (left) and inradius (right).}
\label{fig:volume-dists}
\end{figure}

\begin{figure}
    \centering
    \includegraphics[width=\linewidth]{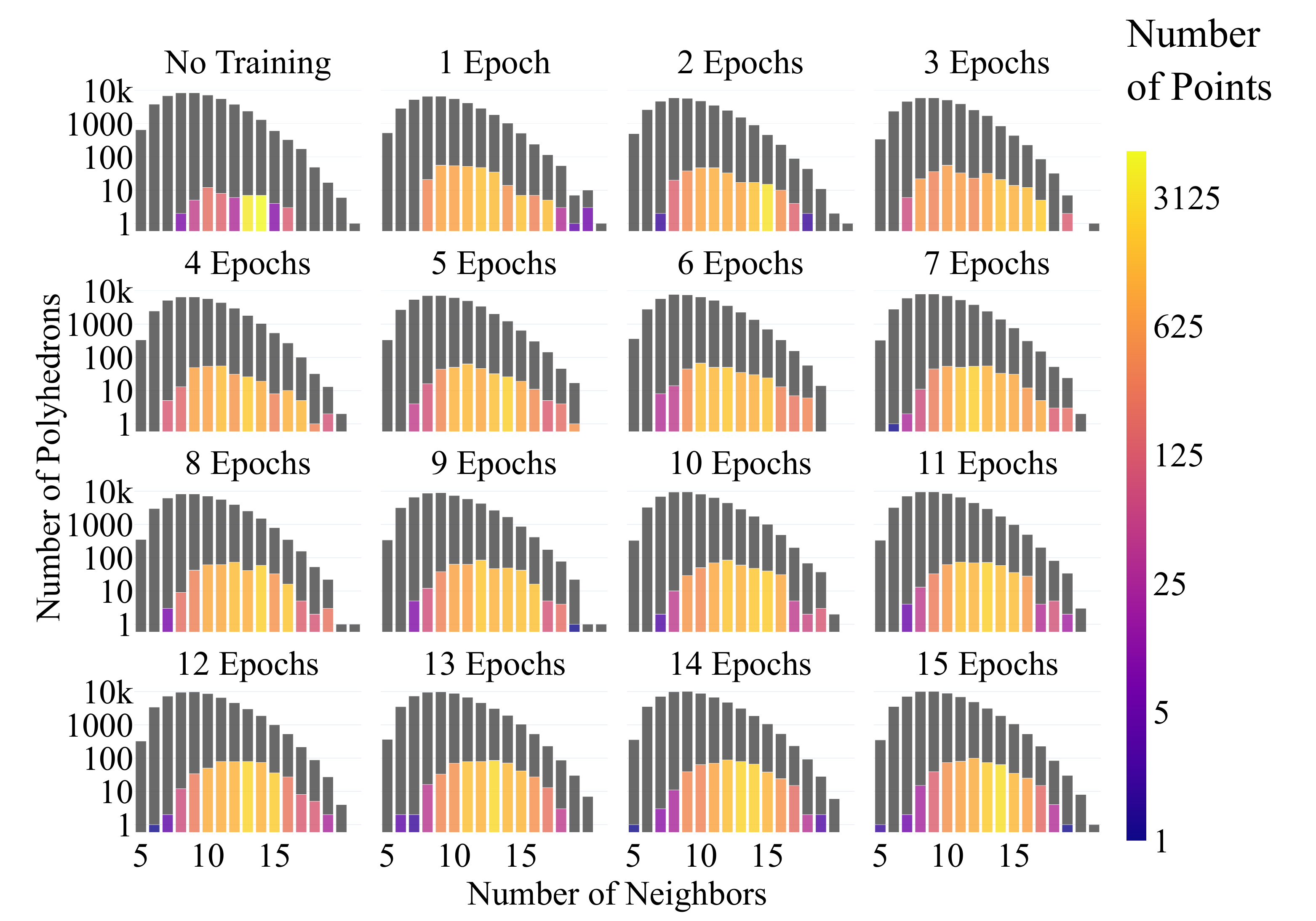}
    \caption{Face count distributions for a network trained on the MNIST dataset. These are constructed in the same way as in Fig.~\ref{fig:real-dists}, but with one for each training Epoch.}
    \label{fig:mnist-progress}
\end{figure}

\end{document}